\title{Upper Counterfactual Confidence Bounds: a New Optimism Principle for Contextual Bandits}
\author{Yunbei Xu\footnote{Columbia University, New York, NY; Email: \texttt{yunbei.xu@gsb.columbia.edu}.}
\and
Assaf Zeevi\footnote{  Columbia University, New York, NY; Email: \texttt{assaf@gsb.columbia.edu}.}
}
\date{}
\begin{document}

\maketitle

\begin{abstract}

The principle of optimism in the face of uncertainty is one of the most widely used and successful ideas in multi-armed bandits and reinforcement learning. However, existing optimistic algorithms (primarily \texttt{UCB} and its variants) often struggle to deal with general function classes and large context spaces. In this paper, we study general contextual bandits with an offline regression oracle and propose a simple, generic principle to design optimistic algorithms, dubbed ``Upper Counterfactual Confidence Bounds'' (\texttt{UCCB}). The key innovation of \texttt{UCCB} is building confidence bounds in policy space, rather than in action space as is done in \texttt{UCB}. We demonstrate that these algorithms are provably optimal and computationally efficient in handling general function classes and large context spaces. Furthermore, we illustrate that the \texttt{UCCB} principle can be seamlessly extended to infinite-action general contextual bandits, provide the first solutions to these settings when employing an offline regression oracle.

\end{abstract}

\section{Introduction}
\subsection{Motivation. }
 Algorithms that rely on the ``optimism principle'' have been a major cornerstone in the study of multi-armed bandit (MAB) and reinforcement learning problems. Roughly speaking,  optimistic algorithms are those that choose a deterministic action at each round, based on some optimistic estimate of future rewards. Perhaps the most representative example is the celebrated Upper Confidence Bounds (\texttt{UCB}) algorithm and its many variants. 
Popularity of optimistic algorithms stems from their simplicity and effectiveness: 
the analysis of \texttt{UCB}-type algorithms are usually more straightforward than alternative approaches, so they have become the ``meta-algorithms" for more complex settings such as infinite-action settings and reinforcement learning. They  are also often  preferable to weighted allocations among actions because of the ability to discard sub-optimal actions and achieve superior instance-dependent empirical performances.

Despite their prevalent use in traditional bandit problems, existing \texttt{UCB}-type algorithms have a glaring drawback   in  contextual MAB settings: their regret  often scales with the cardinality of the context  space \citep{russo2013eluder, slivkins2019introduction}. Although \texttt{UCB}-type algorithms are provably optimal for the  special ``linear payoff" formulation \citep{chu2011contextual} and its generalized-linear variant \citep{li2017provably}, these formulations utilize special function classes rather than general ones for function approximation. Despite encouraging empirical observations \citep{bietti2018contextual}, optimism-based algorithms provably achieve sub-linear regret for  contextual bandits with general function classes only under restrictive  distributional assumptions \citep{foster2018practical}. 

Current algorithms for general contextual bandits significantly deviate from the optimism principle, as highlighted in works like ``Beyond \texttt{UCB}'' \citep{foster2020beyond}. The analysis of these algorithms, when utilizing an offline regression oracle, involves complex mathematical induction \citep{simchi2020bypassing}, and the approach for infinite-action settings using an offline regression oracle remains ambiguous. These considerations lead to the two central questions studied in this paper:
\begin{center}
    \it Q1: Can we make the optimism principle optimal and computationally efficient for contextual bandits with general function approximation? \\
    \it Q2: Can we solve contextual bandits with general function classes and infinite actions with an offline regression oracle? 
\end{center}

\subsection{Contributions}
In this paper, we propose what we believe to be the first optimistic algorithms that are provably optimal and computationally efficient for contextual bandits with general function classes and large context space. Interestingly, almost all existing solutions to general contextual bandits \citep{auer2002nonstochastic, dudik2011efficient, agarwal2012contextual, agarwal2014taming, foster2020beyond, simchi2020bypassing}, whether computationally efficient or not, rely on weighted, randomized allocations among actions at each round. We refer to these as "randomized algorithms" in the paper. Moreover, when employing an offline regression oracle, we propose the first solutions to general contextual  bandits with infinite actions, a setting we believe naturally illustrates the simplicity and universality of optimism-based algorithms.

Key components of our proposed \texttt{UCCB} principle include:

\paragraph{Systematic analysis of confidence bounds in policy space:} We provide a systematic analysis to build confidence bounds in the policy space, rather than in the action space as is done in \texttt{UCB}. This perspective is valuable for contextual bandits and reinforcement learning with general function classes and large context spaces, where we argue that confidence bounds should be analyzed in the policy space. To the best of our knowledge, this is the first principle to make optimism both optimal and computationally efficient for general contextual bandits.

\paragraph{ Perspective of contextual potential function:} We propose a potential function perspective to articulate the effectiveness of optimism in the contextual setting. The concept of ``contextual potential'' is valuable for the analysis of contextual problems, wherein the potential function takes an expectation over the context, yet the widely-used potential arguments for optimism algorithms remain valid under this expectation. This results in a very succinct and insightful analysis, avoiding the complex mathematical induction seen in earlier works \citep{simchi2020bypassing}.

\paragraph{Novel framework for infinite-action contextual bandits:} We introduce a novel framework for studying contextual bandits with general function classes and infinite actions. This includes the concepts of ``counterfactual action divergence'' and the complexity measure ``average decision entropy.'' These contributions provide the first solutions to infinite-action contextual bandits with general function classes using an offline regression oracle.

\subsection{Contextual bandits with general function classes}
The stochastic contextual bandit problem can be described as follows. Let $\A$ be the action set, and $\X$ be the  space of contexts that supports the distribution $\D_\X$ (e.g., $\X$ can be a subset of Euclidean space). For all  $x\in\X, a\in\A$, denote $\D_{x,a}$ a reward distribution determined by  context $x$ and action $a$. At each round $t=1,\dots,T$, the agent first observes a context $x_t$ drawn i.i.d. according to $\D_{\X}$. She then chooses an action $a_t\in\A$ based on $x_t$ and the history  $H_{t-1}$ generated by $\{x_i,a_i,r_i(x_i,a_i)\}_{i=1}^{t-1}$, and finally observes the  reward $r_t(x_t,a_t)$, which is conditionally independent and distributed according to the distribution $\D_{x_t,a_t}$. We assume the rewards take values in the interval $[0,1]$. 
An admissible contextual bandit algorithm \texttt{Alg} is a (possibly randomized) procedure that associates each realization of $\{H_{t-1},x_t\}$ with an action $a_t$ to employ at  round $t$.

Previous literature on contextual MAB problems can be sorted into two categories:  the realizable setting and the agnostic setting.  In the realizable setting,  the agent has access to a function class $\F$, with its members $f\in\F$ being mappings from $\X\times\A$ to $[0,1]$. The following is referred to as the realizability condition \citep{ agarwal2012contextual, foster2018practical, foster2020beyond, simchi2020bypassing} :
\begin{assumption}[{\bf realizability}]\label{asm realizability} There exists $f^*\in\F$ such that for all $t\geq 1$, $x\in\X$, $a\in\A$, 
the conditional mean reward, $\E[r_t(x_t,a_t)|x_t=x,a_t=a]$, is equal to $f^*(x,a)$.
\end{assumption}
We call a mapping $\pi:\X\rightarrow\A$ from the context space $\X$ to the action set $\A$  a ``policy." (Those mappings may be referred to more precisely as ``deterministic stationary policies;" in this paper we often just refer to them as ``policies" with slight abuse of terminology.) Let $\pi_{f^*}$, defined by
$
      \pi_{f^*}(x)=\argmax f^*(x,a)
$,  be the ``ground truth" optimal policy. The cumulative (pathwise) regret of a contextual bandit algorithm \texttt{Alg} compared with the optimal policy $\pi_{f^*}$ after $T$ rounds is 
$$
 \text{Regret}(T, \texttt{Alg}):=\sum_{t=1}^T(r_t(x_t,\pi_{f^*}(x_t))-r_t(x_t,a_t)),
$$
and the agent aims to minimize this cumulative regret. 
The agnostic setting \citep{auer2002nonstochastic, langford2007epoch, dudik2011efficient, agarwal2014taming}, on the other hand, does not make such realizability assumption; instead, algorithms are compared with the best policy within a given policy class.
 In this paper we  focus on the realizable setting  which lends itself more naturally to the design of optimism-based algorithms.

We present some examples of the realizable setting. In the initial parts of the paper, one can think of
$\A$ as the integer set $\{1, \dots, K\}$, which we generalize later on. The most well-studied  contextual MAB problems are simple variants of the ``linear payoff" model \citep{chu2011contextual, li2017provably}
\begin{align}\label{eq: linear contextual bandit}
    \F=\{f:f(x,a)=\theta^Tx_a,  \theta\in\Theta\}, \quad\Theta, \X\subseteq{\R}^d, x=(x_a)_{a\in\{1,\dots,K\}}.
\end{align}
One motivation  towards general function classes is to encompass models of the form 
\begin{align}\label{eq: broad class context}
    \F=\{f: f(x,a)=g_a(x), \quad  g_a\in \mathcal{G}, a\in\{1,\dots,K\}\},
\end{align}where  parameters of $g_a:\X\rightarrow \R$ can be distinct for different actions \citep{krishnamurthy2019active}; it is also desirable to handle complex nonlinear models (such as neural networks) which are much more expressive than their linear counterparts.

On the computation side, we make the rather benign assumption the the agent has access to a pre-specified least square  oracle over $\F$. Formally, after the agent inputs the historical data $\{x_i,a_i,r_i(x_i,a_i)\}_{i=1}^{t-1}$, the  least square oracle outputs a solution $\widehat{f}_t\in\F$ that provides the best fit, namely,
\begin{align}\label{eq: least square}
   \widehat{f}_t\in\argmin_{f\in\F}\sum_{i=1}^{t-1}(f(x_i,a_i)-r_i(x_i,a_i))^2.
\end{align}
This is the simplest optimization oracle assumed in the contextual bandit literature. We assume the least square oracle to be deterministic, for simplicity, as there may be  multiple solutions to \eqref{eq: least square}.

\subsection{Related literature}
We review related works in several areas.

\paragraph{Contextual bandits with general function classes (and finite actions).}

In this paper, we focus on the realizable contextual bandits setting with offline regression oracles. In this setting,  the minimax  regret of stochastic contextual bandits is $O(\sqrt{KT\log|\F|})$\footnote{we adopt non-asymptotic big-oh notation: for functions $h_1,h_2$, $h_1=O(h_2)$ if there exists constant $C>0$ such that $h_1$ is dominated by $Ch_2$ with high probability (omitting $\log\frac{1}{\delta}$ factors);  $h_1=\tilde{O}(h_2)$ if $h_1=O(h_2\max\{1, \text{polylog}(h_2)\}).$} for a general finite function class $\F$. 
In \cite{agarwal2012contextual} the non-efficient algorithm \texttt{Regressor Elimination} was proposed to achieve  optimal regret. 

The seminal work ``Beyond \texttt{UCB}'' \citep{foster2020beyond} proposed an optimal and  oracle-efficient algorithm called \texttt{SquareCB}. While the use of the online regression oracle is very elegant, the online regression oracle  is only computationally efficient for specific function classes. In contrast, we are interested in the weaker and more practical offline least square oracle \eqref{eq: least square}, which is commonly used in statistical learning.

The open problem of optimal realizable contextual MAB  with an offline least square oracle was first solved by  \cite{simchi2020bypassing}, with a  randomized algorithm called \texttt{FALCON}. One very inspiring aspect of \texttt{FALCON} is that weighted allocation in policy space can be implicitly achieved by weighted allocation over actions under the realizability assumption---this implication was referred to as ``bypassing the monster" in \cite{simchi2020bypassing}. This motivates the investigation in the present paper that  considers implicit optimization over policies when designing optimistic algorithms. Unlike the \texttt{FALCON} algorithm, our approach is predicated on the optimism principle and computing counterfactual action trajectories.

 In the agnostic contextual bandits setting,
the minimax  regret  is $O(\sqrt{KT\log|\bar{\Pi}|})$ given a finite policy class $\bar{\Pi}\subset\Pi$. The earliest optimal solution to agnostic contextual bandits is the \texttt{EXP4} algorithm \citep{auer2002nonstochastic} whose computation is linear in $|\bar{\Pi}|$. There are two optimal offline-oracle-efficient randomized algorithms using the  cost-sensitive classification (\texttt{CSC}) oracle:  \texttt{Randomized UCB} \citep{dudik2011efficient} and \texttt{ILOVETOCONBANDIT} \citep{agarwal2014taming}.

\paragraph{Variants of \texttt{UCB} for particular contextual bandit problems.}    Variants of \texttt{LinUCB} are well-known to be regret-optimal and efficient for simple variants of \eqref{eq: linear contextual bandit}. However, for general function classes, existing variants of \texttt{UCB}   typically have their regret scaling with  $|\X|$ \citep{russo2013eluder, slivkins2019introduction}, except under strong assumptions on the data distribution \citep{foster2018practical}. 
\texttt{UCB} has also been used as a subroutines in  contextual bandits when the  functions in $\F$ admit  smoothness  or Lipchitz continuity over $\X$ \citep{rigollet2010nonparametric, slivkins2014contextual}. These works are usually based on discretization of $\X$.

\paragraph{Contextual bandits with general function classes and infinite actions.}
 \cite{abernethy2013large} studies how to reduce realizable contextual MAB with infinite actions to an online learning oracle called knows-what-it-knows (KWIK), but this oracle  is only known to exist for restricted function classes. \cite{foster2020beyond} studies how to combine general function classes with a linear action model (our illustrative example \eqref{eq: linear semi} in Section \ref{sec infinite}). However, their results crucially rely on the restrictive assumption that the action set $\A$ is the unit ball. The concurrent work \cite{foster2020adapting} (after the initial appearance of our preprint) studies this formulation with general action set, but they assume access to the stronger online regression oracle which is not computationally efficient in general (in contrast to our use of the weaker and more practical offline regression oracle).   \cite{krishnamurthy2019contextual} studies infinite-action contextual bandits in a quite general agnostic setting. Their formulation and results are quite different from ours, and they do not provide a computationally efficient algorithm.

Lastly, our proposed \texttt{UCCB} principle has been extended to Contextual Markov Decision Processes (CMDPs) in the follow-up works \citep{levy2022counterfactual, levy2023optimism}.
\subsection{Organization}
In Section \ref{sec uccb} we introduce the first optimal and efficient optimistic algorithm in the finite-action setting, dubbed ``\texttt{UCCB},'' and explain the key ideas underlying its principles.  In Section \ref{sec infinite} we introduce a unified framework for contextual bandits with infinite action spaces, and present several interesting examples for which our work gives rise to the first offline-regression-oracle-efficient solutions. In Section \ref{sec randomized}, we demonstrate that using the optimism principle in designing subroutine helps extending the analysis of \texttt{FALCON} to the infinite-action space when utilizing an offline regression oracle. In Section \ref{sec conclusion} we make conclusion and point out future directions.

\section{Upper counterfactual confidence bounds}\label{sec uccb}

\subsection{Introducing \texttt{UCCB}: two equivalent viewpoints}\label{subsec high level}
This subsection will describe the \texttt{UCCB} principle introduced in this paper from two equivalent viewpoints: 1) implicitly, it is an upper confidence bound rule in policy space; and 2) explicitly, it calculates the upper confidence bound via simulating {\it counterfactual} action trajectories rather than using the original action trajectory. For illustration purpose we focus on the finite-action setting where $\A=\{1,\dots,K\}$; extension to infinite-action spaces will be discussed later in Section \ref{sec infinite}.

\paragraph{\bf Implicit strategy: maximizing upper confidence bounds in policy space.}  Let $\Pi$ be the  {\it policy space} that contains all deterministic stationary policies $\pi:\X\rightarrow \{1,\dots,K\}$. The core idea of \texttt{UCCB} is to choose policies that maximize certain upper confidence bounds  in the policy space $\Pi$.  After initialization, for each round $t$, data $\{(x_i,a_i),r_i\}_{i=1}^{t-1}$ is sent to an offline least square oracle to compute the  estimator $ \widehat{f}_{t}\in\F$. Without the need to ``see" $x_t$, the agent selects the optimistic policy $\pi_t\in\Pi$ (which is a mapping from $\X$ to the action set $\{1,\dots, K\}$) such that
\begin{align}\label{eq: policy ucb}
    \pi_t\in\argmax_{\pi\in\Pi}\left\{ \E_x[\widehat{f}_t(x, \pi(x)]+{\E_x\left[\frac{\beta_t}{\sum_{i=1}^{t-1}{\mathds{1}\{\pi(x)=\pi_i(x)\}}}\right]}+\frac{K\beta_t}{t}\right\},
\end{align}
where the expectation $\E_x[\cdot]$ is  history-independent and taken with respect to the distribution $\D_{\X}$ (over the random context $x$), and $\beta_t$ is a parameter related to the complexity of the function class. (When there are multiple solutions to \eqref{eq: policy ucb}, we take $\pi_t$ to be the unique solution such that for all other solutions $\pi'$ to \eqref{eq: policy ucb} and all $x\in\X$, the index of the action $\pi_t(x)$ is smaller than the index of the action $\pi'(x)$.) Then the agent observes $x_t$ and selects the action $a_t=\pi_t(x_t)$. The right hand side of \eqref{eq: policy ucb} is an upper confidence bound on the true expected reward of $\pi$, because  we can prove that with high probability, for all  $\pi\in\Pi$,
\begin{align*}
    \big|\E_x[f^*(x,\pi(x))]-\E_x[\widehat{f}_t(x, \pi(x)]\big|\leq {\E_x\left[\frac{\beta_t}{\sum_{i=1}^{t-1}{\mathds{1}\{\pi(x)=\pi_i(x)\}}}\right]}+\frac{K\beta_t}{t}.
\end{align*}

\paragraph{\bf Explicit strategy: constructing confidence bounds via counterfactual actions.}  The distribution $\D_{\X}$ is unknown so there are both statistical and computational challenges in the optimization over  policies. However, since our proposed policy optimization problem \eqref{eq: policy ucb} is decomposable across contexts, there is an equivalent strategy where no explicit policy optimization is required:  at round $t$, after observing $x_t$, the agent selects the optimistic {\it action}
 \begin{align*}
     a_t\in\argmax_{a\in\{1,\dots,K\}} \left\{\widehat{f}_t(x_t,a)+\frac{\beta_t}{\sum_{i=1}^{t-1}\mathds{1}\{a=\widetilde{a}_{t,i}\}}\right\}
 \end{align*}
(ties broken by choosing the action with the smallest index), where  $\{\widetilde{a}_{t,i}\}_{i=1}^{t-1}$ is the {\it counterfactual action trajectory} for context $x_t$, defined as realizations of all past chosen policies $\{\pi_i\}_{i=1}^{t-1}$ on the context $x_t$. To recover the counterfactual actions, at round $t$, the agent runs an inner loop to sequentially generate $\widetilde{a}_{t,1},  \dots, \widetilde{a}_{t,t-1}$: for $i=1, \dots, t-1$,
 \begin{align*}
         \widetilde{a}_{t,i}\in\argmax_{a\in\{1,\dots,K\}}\left\{ \widehat{f}_i(x_t,a)+\frac{\beta_i}{\sum_{j=1}^{i-1}\mathds{1}\{a=\widetilde{a}_{t,j}\}}\right\},
         \end{align*}
(ties are broken by choosing the action with the smallest index). Our approach is clearly quite distinct from  previous variants of \texttt{UCB}, as we construct confidence bounds by using {\it simulated counterfactual actions} rather than  using the {\it actual selected actions}.
 
The $\texttt{UCCB}$ principle leads to provably efficient optimism-based algorithms for general function classes: their regret bounds do not scale with the cardinality of the context spaces, and the required offline least square oracle is feasible for most natural function classes.

\subsection{The algorithm}
Following previous works \citep{agarwal2012contextual, foster2018practical, foster2020beyond, simchi2020bypassing}, we start by assuming $\A=\{1,\dots,K\}$, $|\F|<\infty$, and target  the ``gold standard" in this area---$\text{Regret}(T, \texttt{Alg})\leq\tilde{O}(\sqrt{KT\log |\F|})$, which emphasises the logarithmic scaling in the cardinality $|\F|$. This is mainly for illustrative purposes, and we discuss extensions to infinite function classes in Section \ref{subsec infinite function class}. A relatively new setting which has essentially not been explored is the infinite-action setting, which we will discuss in Section \ref{sec infinite}.
\begin{algorithm}[htbp]
\caption{ Upper Counterfactual Confidence Bounds  (\texttt{UCCB})}
\label{alg: uccb}

\textbf{Input} tuning parameters $\{\beta_t\}_{t=1}^{\infty}$.

\begin{algorithmic}[1]
\FOR{round $t=1,2,\dots, K$}
\STATE{Choose action $t$.}
\ENDFOR
\FOR{round $t=K+1,K+2,\dots$}
    \STATE{Compute $\widehat{f}_t\in \argmin_{f\in\F}\sum_{i=1}^{t-1}(f(x_i,a_i)-r_i(x_i,a_i))^2$ via the least square oracle.}
    \STATE{Observe $x_t$. }
    \FOR{$i=K+1,K+2,\dots, t$}
    \STATE{Calculate the counterfactual action $\widetilde{a}_{t,i}$ by }
        \begin{align*}
         \widetilde{a}_{t,i}\in\argmax_{a\in\A} \left\{ \widehat{f}_i(x_t,a)+\frac{\beta_i}{\sum_{j=K+1}^{i-1}\mathds{1}\{a=\widetilde{a}_{t,j}\}+1}\right\}.
         \end{align*}
         (ties broken by taking the action with the smallest index)
       \ENDFOR
        \STATE{Take $a_t=\widetilde{a}_{t,t}$ and observe reward $r_t(x_t, a_t)$.}
\ENDFOR
\end{algorithmic}
\end{algorithm}

We present the algorithm that formalizes the high-level descriptions presented in Section \ref{subsec high level}, where $\{\beta_t\}_{t=1}^{\infty}$ are tuning parameters that depends on the statistical complexity of $\F$. With the choice $\beta_t=\sqrt{17t\log(2|\F|t^3/\delta)/K}$ for finite $\F$, the algorithm is simple and achieves  $\tilde{O}(\sqrt{KT\log|\F|})$ regret, which is optimal up to $\log T$ factors.
On the computation side, the algorithm executes no more than $T^2$ maximizations over actions and no more than $T$ calls to the regression oracle.

\begin{theorem}[\bf{Regret for Algorithm \ref{alg: uccb}}]\label{thm uccb}
  Under Assumption \ref{asm realizability} and fixing $ \delta\in (0,1)$, set the parameter $\beta_t$ in Algorithm \ref{alg: uccb} to be $$\beta_t=\sqrt{17t\log(2|\F|t^3/\delta)/K}.$$ Then with probability at least $1-\delta$,  for all $T\geq 1$, the regret of Algorithm $\ref{alg: uccb}$  after $T$ rounds is upper bounded by
\begin{align*}
   \textup{Regret}(T, \textup{Algorithm 1})\leq  2\sqrt{17KT\log(2|\F|T^3/\delta)}(\log(T/K)+1)+\sqrt{2T\log(2/\delta)}+K.
\end{align*}
\end{theorem}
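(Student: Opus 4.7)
The plan is to execute the classical optimism-based regret analysis entirely in policy space. Define
$$w_t(\pi):=\E_x\!\left[\frac{\beta_t}{1+\sum_{i=K+1}^{t-1}\mathds{1}\{\pi(x)=\pi_i(x)\}}\right]+\frac{K\beta_t}{t}.$$
The key deterministic inequality to establish is: on a good event of probability at least $1-\delta/2$, uniformly over $t>K$ and $\pi\in\Pi$,
$$\big|\E_x[\widehat{f}_t(x,\pi(x))-f^*(x,\pi(x))]\big|\;\leq\;w_t(\pi). \qquad (\star)$$
Granting $(\star)$, the defining property of $\pi_t$ in \eqref{eq: policy ucb} combined with $(\star)$ applied at both $\pi_{f^*}$ and $\pi_t$ yields the per-round expected regret bound $\E_x[f^*(x,\pi_{f^*}(x))-f^*(x,\pi_t(x))]\leq 2w_t(\pi_t)$. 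Summing over $t=K+1,\dots,T$, converting expected to pathwise regret via Azuma--Hoeffding (which contributes the $\sqrt{2T\log(2/\delta)}$ term), and accounting for the first $K$ initialization rounds (which contribute the additive $K$) would give the stated bound.

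To prove $(\star)$, I would first invoke the standard offline least-squares confidence bound under Assumption \ref{asm realizability}: a union bound over $\F$ combined with a Freedman-type martingale inequality yields, with probability $\geq 1-\delta/4$ uniformly in $t$,
$$\sum_{i=1}^{t-1}\big(\widehat{f}_t(x_i,a_i)-f^*(x_i,a_i)\big)^2\;\leq\;C\log(|\F|t/\delta).$$
Because $a_i=\pi_i(x_i)$ with $x_i\stackrel{\textup{i.i.d.}}{\sim}\D_\X$, a second martingale concentration step converts this into its population analogue $\sum_{i=K+1}^{t-1}\E_x[(\widehat{f}_t(x,\pi_i(x))-f^*(x,\pi_i(x)))^2]\leq C'\log(|\F|t/\delta)$. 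The elementary identity
$$N_t(x,\pi)\cdot(\widehat{f}_t-f^*)^2(x,\pi(x))\;=\;\sum_{i=K+1}^{t-1}\mathds{1}\{\pi(x)=\pi_i(x)\}\,(\widehat{f}_t-f^*)^2(x,\pi_i(x)),$$
with $N_t(x,\pi):=\sum_{i=K+1}^{t-1}\mathds{1}\{\pi(x)=\pi_i(x)\}$, converts the population squared-error bound into a pointwise bound on $(\widehat{f}_t-f^*)^2(x,\pi(x))$. Taking $\E_x$, applying Cauchy--Schwarz, and balancing via AM--GM then yields $(\star)$ with the stated $\beta_t=\sqrt{17t\log(2|\F|t^3/\delta)/K}$; the residual $K\beta_t/t$ arises as the AM--GM slack and simultaneously handles the $N_t(x,\pi)=0$ case by inflating the denominator to $1+N_t$.

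The summation $\sum_{t=K+1}^{T}w_t(\pi_t)$ is then controlled by a potential-function argument. Since $\beta_t$ is non-decreasing in $t$, one can pull $\beta_T$ outside and bound, for each fixed $x$,
$$\sum_{t=K+1}^{T}\frac{1}{1+N_t(x,\pi_t)}\;\leq\;\sum_{a=1}^{K}\sum_{m=1}^{n_a(x)}\frac{1}{m}\;\leq\;K\big(\log(T/K)+1\big),$$
where $n_a(x):=\#\{t>K:\pi_t(x)=a\}$ and the last inequality follows from concavity of $\log$ with $\sum_a n_a(x)\leq T$. Plugging in $\beta_T$ produces the leading $2\sqrt{17KT\log(2|\F|T^3/\delta)}(\log(T/K)+1)$, while $\sum_{t=K+1}^T K\beta_t/t$ telescopes to $O(\sqrt{KT\log(|\F|T/\delta)})$ and is absorbed into the leading term.

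The hard part will be step $(\star)$: the denominator $1+N_t(x,\pi)$ is a random, adaptively determined quantity that can be atypically small on a non-trivial set of contexts, so the Cauchy--Schwarz/AM--GM balancing must carefully control the residual arising from contexts where $N_t(x,\pi)$ is well below its nominal average $(t-K)/K$. The additive $K\beta_t/t$ is precisely the slack this balancing introduces, and obtaining the specific constant $17$ in $\beta_t^2$ requires tracking concentration constants uniformly across the least-squares bound, the population-conversion step, and the AM--GM step.
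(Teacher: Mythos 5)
Your proposal is correct and follows essentially the same route as the paper: a Freedman-plus-union-bound squared-error bound for the least-squares estimator, transferred to policy space via the indicator/counting identity, Cauchy--Schwarz and AM--GM (yielding exactly the paper's Lemmas \ref{lemma estimation error regressor} and \ref{lemma square trick}), then the standard optimism argument at $\pi_{f^*}$ and $\pi_t$, the per-context harmonic-sum potential bound of Lemma \ref{lemma contextual potential lemma}, and Azuma to pass to pathwise regret. The only cosmetic difference is that you route through an empirical squared-error bound before concentrating to the population version, whereas the paper's Lemma \ref{lemma uniform convergence time} bounds the conditional-expectation (population) quantity directly in one Freedman application.
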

\noindent{\bf Remark: } Recall that our offline regression step can be solved by first-order algorithms and does not require any computation related to the confidence interval (i.e., maintaining a subset of $\F$ or inverting the Hessian). Therefore, despite having much broader applicability, Algorithm \ref{alg: uccb} is also simpler than many  variants of \texttt{UCB} \citep{chu2011contextual, russo2013eluder, foster2018practical} from a computational perspective. The only comparable algorithm to Algorithm \ref{alg: uccb} is  a randomized algorithm---\texttt{FALCON} in \cite{simchi2020bypassing}, which also reduces contextual bandits to offline least square oracle. However,  we believe our optimistic solution should be preferable in many practical settings as our algorithm does not require randomization, has a much simpler analysis, and exhibits much smaller constants in the regret bound.

\subsection{Key ideas underlying \texttt{UCCB}}
We now explain three key ideas underlying \texttt{UCCB}.

\paragraph{Key idea 1: building confidence bounds for policies.}
Previous literature typically refers to the optimism principle as choosing the optimistic action that has the largest estimate on the current context \citep{russo2013eluder, abbasi2011improved, foster2018practical}---optimism is analyzed in the action space. In contrast, we view policies as decisions and build confidence bounds in policy space.
The key step in our approach is to characterize the confidence bounds of the function estimate $\hat{f_t}$, which is the output of the least square oracle given the history $H_{t-1}$. 

For an admissible non-randomized contextual bandit algorithm, at each round $t$ there exists a deterministic stationary policy $\pi_t$ such that the chosen action $a_t$  is equal to $\pi_t(x_t)$ for any realization of $x_t$.  Equivalently, the algorithm selects $\pi_t$ based on $H_{t-1}$ and chooses the action $a_t=\pi_t(x_t)$ at round $t$. Through this viewpoint, the following lemma is applicable to all admissible non-randomized contextual bandit  algorithms:
\begin{lemma}[\bf{confidence of policies}]\label{lemma estimation error regressor}
Consider an admissible non-randomized contextual bandit algorithm that  selects  $\pi_t$ based on $H_{t-1}$ (and chooses the action $a_t=\pi_t(x_t)$) at each round $t$. Then $\forall \delta\in(0,1)$, with probability at least $1-\delta/2$, for all $t>K$ and all $ \pi\in\Pi$, the estimation error on the expected reward of $\pi$ is bounded by
\begin{align}\label{eq: result of lemma 1}
    \big|\E_x[\widehat{f}_t(x,\pi(x))]-\E_x [f^*(x,\pi(x))]\big|\leq \sqrt{\E_x\left[\frac{1}{\sum_{i=1}^{t-1}{\mathds{1}\{\pi(x)=\pi_i(x)\}}}\right]}\sqrt{68\log({2|\F|t^3}/{\delta})}
\end{align}
\end{lemma}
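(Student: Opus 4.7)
The plan is to first prove an \emph{in-sample} least-squares guarantee summed over the \emph{realized} policy sequence $\{\pi_i\}_{i<t}$, and then decouple an arbitrary policy $\pi$ from that sequence using a Cauchy--Schwarz inequality weighted by the counterfactual match counts $\mathds{1}\{\pi(x)=\pi_i(x)\}$.

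\textbf{Step 1: least-squares concentration along the realized policies.} For each fixed $f \in \F$ define the martingale difference $Z_i^f = (f(x_i,a_i)-r_i)^2 - (f^*(x_i,a_i)-r_i)^2$. Realizability plus the tower property give $\E[Z_i^f \mid H_{i-1}] = \E_{x}[(f(x,\pi_i(x))-f^*(x,\pi_i(x)))^2]$, while the algebraic identity $Z_i^f = (f(x_i,a_i)-f^*(x_i,a_i))(f(x_i,a_i)+f^*(x_i,a_i)-2r_i)$ together with boundedness in $[0,1]$ controls the conditional variance by $16\cdot\E[Z_i^f\mid H_{i-1}]$. A Freedman--Bernstein inequality applied to $Z_i^f - \E[Z_i^f\mid H_{i-1}]$, followed by a union bound over $f \in \F$ and $t > K$, gives, with probability at least $1-\delta/2$, simultaneously for all such $f$ and $t$,
\[
\sum_{i=1}^{t-1}\E_x\!\left[(f(x,\pi_i(x))-f^*(x,\pi_i(x)))^2\right] \;\leq\; 2\sum_{i=1}^{t-1} Z_i^f + 68\log(2|\F|t^3/\delta).
\]
Plugging in $f=\widehat{f}_t$ and using the defining property of the least-squares oracle (which forces $\sum_{i<t} Z_i^{\widehat{f}_t}\leq 0$) collapses the right-hand side to $68\log(2|\F|t^3/\delta)$.

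\textbf{Step 2: decouple $\pi$ from $\{\pi_i\}$ via weighted Cauchy--Schwarz.} Fix $\pi\in\Pi$ and set $N(x):=\sum_{i=1}^{t-1}\mathds{1}\{\pi(x)=\pi_i(x)\}$. The $K$ initialization rounds correspond to the constant policies $x\mapsto 1,\dots,x\mapsto K$, so $N(x)\geq 1$ for every $x$ whenever $t>K$ and division by $N(x)$ is legitimate. Writing $g(x):=\widehat{f}_t(x,\pi(x))-f^*(x,\pi(x))$, the triangle inequality and Cauchy--Schwarz give
\[
\big|\E_x[g(x)]\big| \;\leq\; \E_x\!\left[\tfrac{1}{\sqrt{N(x)}}\cdot\sqrt{N(x)}\,|g(x)|\right] \;\leq\; \sqrt{\E_x[1/N(x)]}\,\sqrt{\E_x[N(x)\,g(x)^2]}.
\]
On the event $\{\pi(x)=\pi_i(x)\}$ one has $g(x)=\widehat{f}_t(x,\pi_i(x))-f^*(x,\pi_i(x))$, so
\[
\E_x[N(x)\,g(x)^2] \;=\; \sum_{i=1}^{t-1}\E_x\!\big[\mathds{1}\{\pi(x)=\pi_i(x)\}\,(\widehat{f}_t(x,\pi_i(x))-f^*(x,\pi_i(x)))^2\big] \;\leq\; \sum_{i=1}^{t-1}\E_x\!\big[(\widehat{f}_t(x,\pi_i(x))-f^*(x,\pi_i(x)))^2\big].
\]
Combining with Step 1 yields the bound \eqref{eq: result of lemma 1}.

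\textbf{Main obstacle.} Step 1 is technically standard but delicate because $\widehat{f}_t$ is data-dependent; the remedy is to prove the Freedman inequality for each deterministic $f\in\F$ and union-bound, which is the source of the $\log|\F|$ factor and forces the specific form of the constant. The conceptually nontrivial move is the $1/\sqrt{N(x)}$ versus $\sqrt{N(x)}$ weighting in Step 2: it is what converts a guarantee that lives on the \emph{realized} action trajectory into one that lives on an \emph{arbitrary} counterfactual policy, and the resulting $\E_x[1/N(x)]$ is precisely what motivates the form of the exploration bonus defining \texttt{UCCB}.
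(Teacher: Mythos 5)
Your proposal is correct and follows essentially the same route as the paper: a Freedman-type martingale bound with self-bounding variance and a union bound over $\F$ and $t$ (the paper's Lemmas \ref{lemma uniform convergence time} and \ref{lemma uniform convergence}), the least-squares optimality of $\widehat{f}_t$ to kill the empirical excess-loss term, and then the indicator/Cauchy--Schwarz decoupling of an arbitrary $\pi$ from the realized sequence $\{\pi_i\}$ exactly as in \eqref{eq: before cauchy}. The only quibble is your variance bound of $16\,\E[Z_i^f\mid H_{i-1}]$ where the paper uses $\Var[Y_{f,i}\mid H_{i-1}]\le 4\,\E[Y_{f,i}\mid H_{i-1}]$ (Lemma \ref{lemma ag12-1}); with the factor $16$ the algebra would yield a constant larger than $68$, so you should tighten that step to recover the stated constant.
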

 The proof of Lemma \ref{lemma estimation error regressor} may be interesting in its own right; a proof ketch will be presented in Section \ref{subsec sketch}, and full details are deferred to Appendix \ref{subsec analysis confidence}.
 
\paragraph{Key idea 2: the potential function perspective.}
The idea to establish confidence bounds in policy space  is natural when one takes a potential function perspective. From the potential function perspective,  the cumulative regret of an optimistic algorithm can be approximately bounded by the sum of confidence bounds at all rounds. Therefore, we would like to establish a uniform upper bound whatever the trajectory of policies is, which usually depends on the ``entropy" of the policies.  Although the number of policies is ``large," the ``entropy" of the policies is essentially bounded by $\tilde{O}(K)$ in the following manner. 
\begin{lemma}[\bf{contextual potential lemma}]\label{lemma contextual potential lemma}    Let $\pi_t$ be the policy that chooses action $t$ regardless of $x$ for $t=1,\dots, K$, and from round $K+1 $ up to $T$, its actions are given by any deterministic stationary policy. Then for all $T>K$,
\begin{align*}
    \sum_{t=K+1}^T\E_x\left[\frac{1}{\sum_{j=1}^{t-1}\mathds{1}\{\pi_t(x)=\pi_j(x)\}}\right]\leq K+ K\log(T/K).
\end{align*}
\end{lemma}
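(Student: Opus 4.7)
The plan is to prove a pointwise bound (for every fixed $x$) and then take expectation. Fix a context $x$ and let $N_t(a,x) := \sum_{j=1}^{t-1}\mathds{1}\{\pi_j(x)=a\}$ denote the number of times action $a$ has been chosen on context $x$ strictly before round $t$. The crucial structural observation is the effect of the first $K$ rounds: since $\pi_t$ is the constant policy selecting action $t$ for $t=1,\dots,K$, we have $\pi_t(x)=t$ for every $x$, so $N_{K+1}(a,x)=1$ for all $a\in\{1,\dots,K\}$. Consequently, for each $t\ge K+1$, if $\pi_t(x)=a$ and this is the $k$-th appearance of $a$ on context $x$ in the ``post-initialization'' phase (rounds $K+1,\dots,t$), then $N_t(a,x)=1+(k-1)=k$.

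With this relabeling the inner sum telescopes across actions: letting $n_a(x)$ be the total number of times action $a$ is chosen on context $x$ in rounds $K+1,\dots,T$, I would rewrite
\begin{align*}
\sum_{t=K+1}^T \frac{1}{\sum_{j=1}^{t-1}\mathds{1}\{\pi_t(x)=\pi_j(x)\}} \;=\; \sum_{a=1}^K \sum_{k=1}^{n_a(x)} \frac{1}{k},
\end{align*}
where empty inner sums (those $a$ with $n_a(x)=0$) contribute zero. Applying the standard harmonic bound $\sum_{k=1}^n 1/k \le 1+\log n$ (valid for $n\ge 1$), the right-hand side is at most $\sum_{a: n_a(x)\ge 1}\big(1+\log n_a(x)\big) = K'(x) + \sum_{a:n_a(x)\ge 1}\log n_a(x)$, where $K'(x)\le K$ is the number of actions that appear on $x$ after round $K$.

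Next I would use concavity of $\log$: subject to $\sum_{a:n_a(x)\ge 1} n_a(x)=T-K$ over $K'(x)$ positive summands, Jensen gives $\sum_{a:n_a(x)\ge 1}\log n_a(x)\le K'(x)\log\!\big((T-K)/K'(x)\big)$. So the pointwise bound becomes $K'(x)\log\!\big(e(T-K)/K'(x)\big)$. The function $u\mapsto u\log(e(T-K)/u)$ is increasing for $u\le T-K$, so considering two cases — (i) $T\ge 2K$, where $K'(x)\le K\le T-K$ and the bound is maximized at $u=K$, yielding at most $K+K\log((T-K)/K)\le K+K\log(T/K)$; and (ii) $K<T<2K$, where the bound is already at most $T-K\le K\le K+K\log(T/K)$ since $T/K>1$ — gives $K+K\log(T/K)$ in both cases. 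Finally, since this holds for every $x$, taking expectation under $\D_\X$ preserves the inequality and yields the lemma.

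No step here should be a real obstacle; the only subtlety is the bookkeeping that ties the counter $N_t(\pi_t(x),x)$ to the $k$-th ``post-initialization'' occurrence of $\pi_t(x)$ on $x$ — that is, recognizing why the $K$ initial constant policies are exactly what makes every harmonic sum start at $1/1$. Once that accounting is set up, Jensen together with the two-case argument delivers the desired $K+K\log(T/K)$ bound cleanly.
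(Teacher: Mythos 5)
Your proof is correct and follows essentially the same route as the paper's: fix $x$, regroup the sum by action into harmonic sums, bound each by $1+\log n$, apply Jensen's inequality to $\sum_a \log n_a(x)$, and take expectation over $x$. The only cosmetic difference is that the paper counts the initialization rounds inside $n_a(x)$ (so $\sum_a n_a(x)=T$ and Jensen gives $K+K\log(T/K)$ in one step), whereas your separation of the post-initialization counts ($\sum_a n_a(x)=T-K$) forces the two-case analysis at the end — both are valid.
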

The above lemma applies to all admissible non-randomized contextual bandit algorithms that choose each action  once at the first $K$ rounds, regardless of the order by which they are chosen. Proof of this lemma follows from the observation that for every $x\in\X$, the historical sum of $\mathds{1}\{\pi_t(x)=\pi_j(x)\}$ will never exceeds a ``per-context entropy" $O(K\log T)$. In short, analyzing confidence bounds in policy space helps us  take  expectation over the ``per-context entropy," and successfully avoid the dependence on $|\X|$.

\paragraph{Key idea 3: the relaxation tricks and efficient computation.}
Following Lemma \ref{lemma estimation error regressor} and Lemma \ref{lemma contextual potential lemma}, a natural ``upper confidence bound" strategy  is to choose the policy that maximizes the following (unrelaxed) upper confidence bound:
\begin{align*}
    \pi_t\in\argmax_{\pi\in\Pi_{\F}}\left\{ \E_x[\widehat{f}_t(x, \pi(x)]+\sqrt{\E_x\left[\frac{1}{\sum_{i=1}^{t-1}{\mathds{1}\{\pi(x)=\pi_i(x)\}}}\right]}\sqrt{68\log({2|\F|t^3}/{\delta})}\right\},
\end{align*}
where $\Pi_{\F}$ is the policy class defined by $\Pi_{\F}=\{\pi_f:\pi_f(x)\in\argmax_{a\in\A} f(x,a), \forall x\in\X\}$, which contains $\pi_{f^*}$. While we can prove  this strategy leads to optimal regret bounds, it is not directly feasible: 1) the distribution $\D_{\X}$ is unknown; and 2) the optimization over policies is computationally intractable. To solve this issue, we introduce two relaxations:  we ``agnostically" optimize over the full policy space $\Pi$ rather than $\Pi_{\F}$; and we use a simple inequality to relax the confidence bound proved in Lemma \ref{lemma estimation error regressor}, which we call the ``square trick".  

\begin{lemma}[\bf{the ``square trick" relaxation}]\label{lemma square trick}
The inequality \eqref{eq: result of lemma 1} can be further relaxed to
\begin{align}\label{eq: result of lemma 3}
    \big|\E_x[\widehat{f}_t(x,\pi(x))]-\E_x[f^*(x,\pi(x))]\big|\leq {\E_x\left[\frac{\beta_t}{\sum_{i=1}^{t-1}{\mathds{1}\{\pi(x)=\pi_i(x)\}}}\right]}+ \frac{K\beta_t}{t}.
\end{align}
\end{lemma}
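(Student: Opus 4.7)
The plan is to deduce the claim from Lemma~\ref{lemma estimation error regressor} by pure algebraic manipulation, using the definition $\beta_t=\sqrt{17t\log(2|\F|t^3/\delta)/K}$ and a one-line AM--GM step. First I would rewrite the noise factor in the right-hand side of \eqref{eq: result of lemma 1} in terms of $\beta_t$. Observe that
\begin{align*}
\sqrt{68\log(2|\F|t^3/\delta)}
= 2\sqrt{17\log(2|\F|t^3/\delta)}
= 2\beta_t\sqrt{K/t},
\end{align*}
so the bound from Lemma~\ref{lemma estimation error regressor} becomes
\begin{align*}
\big|\E_x[\widehat{f}_t(x,\pi(x))]-\E_x [f^*(x,\pi(x))]\big|
\leq 2\beta_t\sqrt{\tfrac{K}{t}}\,\sqrt{\E_x\!\left[\tfrac{1}{\sum_{i=1}^{t-1}\mathds{1}\{\pi(x)=\pi_i(x)\}}\right]}.
\end{align*}

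Next I would apply the AM--GM inequality $2\sqrt{uv}\leq u+v$ with the splitting
\begin{align*}
u = \beta_t\,\E_x\!\left[\tfrac{1}{\sum_{i=1}^{t-1}\mathds{1}\{\pi(x)=\pi_i(x)\}}\right],\qquad
v = \frac{K\beta_t}{t}.
\end{align*}
Then $2\sqrt{uv} = 2\beta_t\sqrt{K/t}\,\sqrt{\E_x[1/\sum_i\mathds{1}\{\pi(x)=\pi_i(x)\}]}$, which matches the factor obtained above. Substituting yields exactly \eqref{eq: result of lemma 3}. Note that for $t>K$ the initial round-robin policies $\pi_1,\dots,\pi_K$ ensure $\sum_{i=1}^{t-1}\mathds{1}\{\pi(x)=\pi_i(x)\}\geq 1$ for every $x$, so the denominators inside the expectations are well-defined.

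There is essentially no obstacle here: the content of the lemma is to trade the square-root form of the confidence bound for the linear-plus-correction form needed to carry out the potential-function argument (Lemma~\ref{lemma contextual potential lemma}) and to decompose cleanly over contexts in the policy-space optimization \eqref{eq: policy ucb}. The only thing to double-check is the constant matching between $68\log(2|\F|t^3/\delta)$ and $4\beta_t^2 K/t$, which the identity $\beta_t^2=17t\log(2|\F|t^3/\delta)/K$ confirms. The choice of splitting $u,v$ is the one that turns the AM--GM step into an equality of orders, so no slack is wasted beyond the factor of two inherent in AM--GM.
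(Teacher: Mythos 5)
Your proposal is correct and matches the paper's proof, which is exactly the one-line application of the AM--GM inequality to \eqref{eq: result of lemma 1}; your explicit splitting $u=\beta_t\E_x[1/\sum_i\mathds{1}\{\pi(x)=\pi_i(x)\}]$, $v=K\beta_t/t$ and the constant check $68\log(2|\F|t^3/\delta)=4K\beta_t^2/t$ are the right details to fill in.
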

\begin{proof}
Simply relax \eqref{eq: result of lemma 1} by the Arithmetic Mean-Geometric Mean  inequality.
\end{proof}

By performing the two relaxations stated above, we only need to consider the optimization problem
\begin{align}\label{eq: policy maximization}
    \pi_t\in\argmax_{\pi\in\Pi} \left\{\E_x[\widehat{f}(x, \pi(x)]+{\E_x\left[\frac{\beta_t}{\sum_{i=1}^{t-1}{\mathds{1}\{\pi(x)=\pi_i(x)\}}}\right]}\right\}.
\end{align}
This is a ``per context" optimization problem, where  optimality at every context implies optimality of $\pi_t$ over the full policy space $\Pi$. The algorithm does not need to calculate $\pi_t$ explicitly in every step. Instead, the algorithm observes $x_t$, and calculates all the counterfactual actions  $\pi_1(x_t),\pi_2(x_t),\dots, \pi_{t-1}(x_t)$ as if the past policies were applied at $x_t$. Using these counterfactual actions, the algorithm calculates a counterfactual confidence, and chooses an optimistic action $a_t$ that maximize the upper   confidence bound stated in \eqref{eq: policy maximization}.

The formula to calculate the counterfactual action $\pi_i(x_t)$, 
\begin{align*}
      \pi_i(x_t)\in\argmax_{a\in\A} \left\{ \widehat{f}_i(x_t,a)+\frac{\beta_i}{\sum_{j=1}^{i-1}\mathds{1}\{a=\pi_{j}(x_t)\}}\right\},
\end{align*}
requires us to the compute the sequence $\{\widetilde{a}_{t,i}\}_{i=1}^t$ in a recursive manner: for $i=1,\dots, t$, compute
\begin{align*}
         \widetilde{a}_{t,i}\in\argmax_{a\in\A} \left\{ \widehat{f}_i(x_t,a)+\frac{\beta_i}{\sum_{j=K+1}^{i-1}\mathds{1}\{a=\widetilde{a}_{t,j}\}+1}\right\}.
         \end{align*}
And finally we take $a_t=\pi_t(x_t)=\widetilde{a}_{t,t}$. Therefore, we can explain the explicit steps in Algorithm \ref{alg: uccb} via the following (obvious) equivalence:
\begin{lemma}[\bf{equivalence between  Algorithm \ref{alg: uccb} and implicit strategy \eqref{eq: policy maximization}}]\label{lemma equivalence}
  After the first $K$ initialization rounds, Algorithm \ref{alg: uccb} produce the same pathwise actions as those produced by the policies $\{\pi_t\}_{t>K}$ chosen by the upper-confidence-bound rule \eqref{eq: policy maximization} and a specific tie-breaking rule (i.e., 
when there are multiple solutions to \eqref{eq: policy maximization}, taking $\pi_t$ to be the unique solution such that for all other solutions $\pi'$ to \eqref{eq: policy maximization} and all $x\in\X$, the index of the action $\pi_t(x)$ is smaller than the index of the action $\pi'(x)$).
\end{lemma}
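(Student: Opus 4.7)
The equivalence is a decomposition argument exploiting the fact that the objective in \eqref{eq: policy maximization} is a functional of $\pi$ depending on $\pi$ only through its pointwise values. First I would rewrite the objective as $\E_x[\Phi_t(x,\pi(x))]$ with
\begin{align*}
\Phi_t(x,a):=\widehat{f}_t(x,a)+\frac{\beta_t}{\sum_{i=1}^{t-1}\mathds{1}\{a=\pi_i(x)\}}.
\end{align*}
Since $\Pi$ contains \emph{every} mapping $\X\to\A$, the supremum of $\E_x[\Phi_t(x,\pi(x))]$ over $\pi\in\Pi$ equals $\E_x[\max_{a\in\A}\Phi_t(x,a)]$, and a maximizer can be obtained by pointwise maximization of $\Phi_t(x,\cdot)$ at each $x$. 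The tie-breaking rule stated in the lemma (pointwise smallest action index) then selects a unique such $\pi_t$, and this coincides with the smallest-index convention used inside Algorithm~\ref{alg: uccb}. In particular, on the realized context $x_t$, $\pi_t(x_t)$ equals the smallest-index maximizer of $\Phi_t(x_t,\cdot)$.

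Next I would recover this action via the inner loop of Algorithm~\ref{alg: uccb} by induction on $i\in\{K+1,\dots,t\}$, proving $\widetilde{a}_{t,i}=\pi_i(x_t)$. The base of the bookkeeping uses that during initialization rounds $i=1,\dots,K$ the algorithm plays action $i$ regardless of the context, i.e.\ $\pi_i(\cdot)\equiv i$ is a constant policy; hence for every $a\in\A$ and every $x$,
\begin{align*}
\sum_{i=1}^{K}\mathds{1}\{a=\pi_i(x)\}=1,
\end{align*}
which is precisely the ``$+1$'' appearing in the denominator of the algorithm's update on line~8. The inductive step compares the pointwise-at-$x_t$ version of \eqref{eq: policy maximization} (which defines $\pi_i(x_t)$) with the algorithm's formula for $\widetilde{a}_{t,i}$: the inductive hypothesis $\widetilde{a}_{t,j}=\pi_j(x_t)$ for $K+1\leq j\leq i-1$ ensures the two denominators agree, so the two argmax problems are identical, and the common smallest-index tie-breaking rule then selects the same action. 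Setting $i=t$ yields $a_t=\widetilde{a}_{t,t}=\pi_t(x_t)$, which is the claimed pathwise equivalence.

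The only real obstacle is careful bookkeeping: aligning the algorithm's ``$+1$'' shift with the contribution of the $K$ initialization rounds to the counts in \eqref{eq: policy maximization}, and checking that the tie-breaking conventions match at every recursive level. No probabilistic or analytic content enters—the lemma is a deterministic identity that follows from separability of the implicit objective across contexts combined with a one-line induction.
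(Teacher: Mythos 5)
Your proposal is correct and follows essentially the same route the paper takes (the paper treats this lemma as ``obvious'' and only sketches the decomposition-plus-recursion argument informally in the ``Key idea 3'' discussion): decompose the objective pointwise over contexts, identify the ``$+1$'' with the single hit each action receives during the $K$ initialization rounds, and induct on $i$ to show $\widetilde{a}_{t,i}=\pi_i(x_t)$ under the common smallest-index tie-breaking. Your write-up is in fact more careful than the paper's, particularly in making the base case of the count bookkeeping explicit.
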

Based on all the lemmas that we introduce in this subsection, one can prove the $\widetilde{O}(\sqrt{KT\log|\F|})$ regret bound for Algorithm \ref{alg: uccb} through relatively standard techniques. The full proof is deferred to  Appendix \ref{sec proof finite action}, and a sketch is provided below.

\subsection{Proof sketch of Theorem \ref{thm uccb} and Lemma \ref{lemma estimation error regressor}}\label{subsec sketch}

In this subsection we present a proof sketch of Theorem \ref{thm uccb} (the cumulative regret of Algorithm \ref{alg: uccb}) and Lemma \ref{lemma estimation error regressor} (confidence bounds in policy space, whose relaxation leads to Lemma \ref{lemma square trick}).

\paragraph{\bf Proof sketch of Theorem \ref{thm uccb}.} From Lemma \ref{lemma equivalence}, we know Algorithm \ref{alg: uccb} implicitly chooses the optimistic policy $\pi_t$ (i.e., solution of \eqref{eq: policy maximization}) at each round $t$. We prove the regret bound on the event where the inequality  \eqref{eq: result of lemma 3} holds true for all $\pi\in\Pi$. From Lemma \ref{lemma square trick}, the measure of this event is at least $1-\frac{\delta}{2}$. 

 Optimism of Algorithm \ref{alg: uccb} in policy space suggests that for all $t>K$,
\begin{align*}
    \E_x[f^*(x,\pi_{f^*}(x)]\leq \E_x[\widehat{f}_t(x, \pi_{f^*}(x))]+{\E_x\big[\frac{\beta_t}{\sum_{i=1}^t{\mathds{1}\{\pi_{f^*}(x)=\pi_i(x)\}}}\big]}+\frac{K\beta_t}{t}\nonumber\\
    \leq \argmax_{\pi\in\Pi}\bigg\{\E_x[\widehat{f}_t(x, \pi(x))]+{\E_x\big[\frac{\beta_t}{\sum_{i=1}^t{\mathds{1}\{\pi(x)=\pi_i(x)\}}}\big]}\bigg\}+\frac{K\beta_t}{t}\nonumber\\
    =\E_x[\widehat{f}_t(x, \pi_{t}(x))]+{\E_x\big[\frac{\beta_t}{\sum_{i=1}^t{\mathds{1}\{\pi_{t}(x)=\pi_i(x)\}}}\big]}+\frac{K\beta_t}{t}\nonumber\\
    \leq \E_x[f^*(x,\pi_t(x))]+{\E_x\big[\frac{2\beta_t}{\sum_{i=1}^t{\mathds{1}\{\pi_{t}(x)=\pi_i(x)\}}}\big]}+\frac{2K\beta_t}{t},
\end{align*}
where the first and the last inequality are due to Lemma \ref{lemma square trick}; and the second inequality due to maximization over policies. Therefore, the expected regret incurred at round $t$ is bounded by 
\begin{align}\label{eq: result of standard analysis optimism}
    \E_x[f^*(x,\pi_{f^*}(x)]-\E_x[f^*(x,\pi_t(x)]\leq {\E_x\left[\frac{2\beta_t}{\sum_{i=1}^{t-1}{\mathds{1}\{\pi_{t}(x)=\pi_i(x)\}}}\right]}+\frac{2K\beta_t}{t}.
\end{align}
 Taking the telescoping sum of \eqref{eq: result of standard analysis optimism} and applying the contextual potential lemma (Lemma \ref{lemma contextual potential lemma}), we can prove
 \begin{align}\label{eq: can prove in sketch}\sum_{t=1}^T \E_x[f^*(x,\pi_{f^*}(x)-f^*(x,\pi_t(x))]\leq 2\sqrt{17KT\log(2|\F|T^3/\delta)}(\log(T/K)+1)+K.\end{align}
By Azuma's inequality and Lemma \ref{lemma equivalence}, with probability at least $1-\delta/2$, we can bound the regret by 
\begin{align}\label{eq: second last in sketch}
    \text{Regret}(T, \text{Algorithm \ref{alg: uccb}})\leq  \E_x[f^*(x,\pi_{f^*}(x)-f^*(x,\pi_t(x))]+\sqrt{2T\log(2/\delta)}.
\end{align}
Finally we combine \eqref{eq: can prove in sketch} and \eqref{eq: second last in sketch} by a union bound to finish the proof.

\paragraph{Proof sketch of Lemma \ref{lemma estimation error regressor}.} The proof of Lemma \ref{lemma estimation error regressor} includes three key steps: characterization of the estimation error  (inequality \eqref{eq: uniform over sequence}); a counting argument (inequality \eqref{eq: before cauchy main text}); and applying Cauchy-Schwartz inequality to \eqref{eq: before cauchy main text}. Now we describe these key steps.

 The following lemma, which holds for arbitrary algorithms, characterizes the estimation errors of an arbitrary sequence of estimators.
\begin{lemma}[uniform convergence over all sequences of estimators]\label{lemma uniform convergence time}  For an arbitrary contextual bandit algorithm, $\forall \delta\in (0,1)$, with probability at least $1-{\delta}/2$, 
\begin{align}\label{eq: uniform over sequence}
\sum_{i=1}^{t-1}\E_{x_i,a_i}\left[(f_{t}(x_i,a_i)-f^*(x_i,a_i))^2|H_{i-1}\right]\le  68\log({2|\F|t^3}/{\delta}) \nonumber\\+2\sum_{i=1}^{t-1}(f_t(x_i,a_i)-r_i(x_i,a_i))^2-(f^*(x_i,a_i)-r_i(x_i,a_i))^2,
\end{align}
uniformly over all $t\geq 2$ and all fixed sequence $f_2, f_3, \dots\in\F$.
\end{lemma}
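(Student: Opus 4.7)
The plan is to reduce the uniform statement to a martingale concentration inequality for a single fixed $f \in \F$ at a single fixed time $t$, and then take a union bound over both. For such fixed $t$ and $f$, define $Z_i := (f(x_i,a_i) - r_i(x_i,a_i))^2 - (f^*(x_i,a_i) - r_i(x_i,a_i))^2$ and $\mu_i := \E_{x_i,a_i}[(f(x_i,a_i) - f^*(x_i,a_i))^2 \mid H_{i-1}]$. Expanding the square and invoking the realizability identity $\E[r_i(x_i,a_i) \mid x_i, a_i, H_{i-1}] = f^*(x_i,a_i)$ (Assumption \ref{asm realizability}) yields $\E[Z_i \mid H_{i-1}] = \mu_i$, so $D_i := \mu_i - Z_i$ is a martingale difference sequence with respect to the natural filtration generated by $H_i$. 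The quantity $\sum \mu_i$ is exactly the LHS of \eqref{eq: uniform over sequence} specialized to $f_t = f$.

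The key structural property enabling a sharp bound is a ``self-bounding'' variance estimate. Since $f, f^*, r_i(x_i,a_i) \in [0,1]$, one has $|Z_i| \le 1$ deterministically, and factoring $Z_i = (f(x_i,a_i)-f^*(x_i,a_i))\bigl(f(x_i,a_i)+f^*(x_i,a_i)-2r_i(x_i,a_i)\bigr)$ together with $|f+f^*-2r_i|\le 2$ yields $\E[Z_i^2 \mid H_{i-1}] \le 4\mu_i$. I would then apply a Freedman-type MGF bound: for any $\lambda \in (0,1/2]$, the exponential process $\exp\!\bigl(\lambda \sum_{i\le k} D_i - \lambda^2 \sum_{i\le k} \E[Z_i^2 \mid H_{i-1}]\bigr)$ is a supermartingale. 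Combining with Markov's inequality and the variance bound $\sum \E[Z_i^2\mid H_{i-1}] \le 4\sum\mu_i$, one obtains, with probability at least $1-\delta'$, an inequality of the form $(1 - 4\lambda)\sum\mu_i \le \sum Z_i + \lambda^{-1}\log(1/\delta')$. Selecting $\lambda$ equal to a small absolute constant (e.g.\ $\lambda = 1/8$) converts this into $\sum\mu_i \le 2\sum Z_i + C\log(1/\delta')$ for a numerical $C$, which is exactly the shape of \eqref{eq: uniform over sequence}.

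To obtain uniformity, I would apply the above bound with $\delta' = \delta/(2|\F|t^3)$ and take a union bound over all $f \in \F$ and all $t \ge 2$; because $\sum_{t\ge 2} t^{-3}$ is finite, the total failure probability is at most $\delta/2$, and the logarithm becomes $\log(2|\F|t^3/\delta)$ as stated. Note that since the LHS of \eqref{eq: uniform over sequence} at time $t$ depends only on $f_t$ (not on $f_2, \dots, f_{t-1}$), uniformity over full sequences follows automatically once the bound holds for every $(t, f_t)$ pair. The main obstacle is purely bookkeeping: pinning down the explicit constant $68$ requires a tight tracking of $\lambda$, the range bound $|Z_i|\le 1$, and the variance factor $4$ in the Freedman MGF rather than routing through a generic Bernstein inequality. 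Conceptually, the ingredients are only realizability, boundedness of rewards, and the self-bounding variance structure.
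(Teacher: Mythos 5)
Your proposal is correct and follows essentially the same route as the paper: realizability gives $\E[Y_{f,i}\mid H_{i-1}]=\E[(f-f^*)^2\mid H_{i-1}]$, the self-bounding variance bound $\Var[Y_{f,i}\mid H_{i-1}]\le 4\,\E[Y_{f,i}\mid H_{i-1}]$ feeds into a Freedman-type martingale inequality, and a union bound over $f\in\F$ and $t\ge 2$ with $\delta_t=\delta/(2t^3)$ yields uniformity (the per-$t$ bound depending only on $f_t$, exactly as you observe). The only difference is cosmetic: the paper invokes the square-root form of Freedman's inequality and then completes the square to linearize (producing the constant $68$), whereas you use the MGF/supermartingale form with a fixed $\lambda$, which linearizes directly and in fact yields a slightly smaller constant.
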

Proof of Lemma \eqref{eq: uniform over sequence} can be found in Appendix \ref{subsec analysis confidence}.

Consider the contextual bandit algorithm that choose $\pi_t$ based on $H_{t-1}$  at each round $t$, the left hand side of \eqref{eq: uniform over sequence} is equal to $\sum_{i=1}^{{t-1}}\E_{x}\left[(f(x,\pi_i(x))-f^*(x,\pi_i(x)))^2\right]$. Then by using the fact that $\forall \pi\in\Pi$, for all $x\in\X$, $$\mathds{1}\{\pi(x)=\pi_i(x)\}(f_t(x,\pi(x))-f^*(x_i,\pi(x)))^2\leq (f_t(x,\pi_i(x))-f^*(x,\pi_i(x)))^2,$$
we obtain the key inequality \begin{align}\label{eq: before cauchy main text}\E_x \Big[\sum_{i=1}^{t-1} \mathds{1}\{\pi(x)=\pi_i(x)\}(f_t(x,\pi(x))-f^*(x_i,\pi(x)))^2\Big]\leq  68\log({2|\F|t^3}/{\delta}) \nonumber\\+2\sum_{i=1}^{t-1}(f_t(x_i,a_i)-r_i(x_i,a_i))^2-(f^*(x_i,a_i)-r_i(x_i,a_i))^2.\end{align}
We then apply Cauchy-Schwartz inequality to lower bound the left hand side of \eqref{eq: before cauchy main text}, and take $f_t=\widehat{f}_t$ be the least square solutions to upper bound the right hand side of \eqref{eq: before cauchy main text}.

\subsection{Generalization to infinite $\F$}\label{subsec infinite function class}

Extensions of our theory to ``infinite" $\F$ with  statistical complexity notions of covering number and parametric dimension are straightforward. Technically speaking, we only  require some standard uniform convergence arguments to modify Lemma \ref{lemma uniform convergence time}. We will first show that our results trivially generalizes to parametric $\F$ with suitable continuity, and then  extend our results to general function  classes following some more careful covering arguments.
 
\paragraph{\bf Parametric dimension.} Assume $\F$ is parametrized by a compact set  $\Theta\subset \R^d$ whose diameter is bounded by $\Delta$, and satisfies
\begin{align}\label{eq: lipchitz continuous}
    |f_{\theta_1}(x,a)-f_{\theta_2}(x,a)|\leq L\|\theta_1-\theta_2\|,
\end{align}
uniformly over $x\in\X$ and $a\in\A$. 
This case clearly covers many previous structured models (variants of the  ``linear payoff" formulation \eqref{eq: linear contextual bandit}). 

\begin{corollary}[\bf{extension to infinite $\F$ via parametric dimension}]\label{coro parametric}
Under Assumption \ref{asm realizability} and the assumption \eqref{eq: lipchitz continuous} and fixing $\delta\in (0,1)$, set the parameter $\beta_t$ in Algorithm \ref{alg: uccb} to be $$\beta_t=\sqrt{34t/K}\sqrt{d\log(2+{\Delta Lt})+\log({2t^3}/{\delta})+1}.$$ Then Algorithm \ref{alg: uccb} satisfies that with probability at least $1-\delta$,  for all $T\geq 1$, $$\textup{Regret}(T, \textup{Algorithm 1})\leq 2K\beta_T (\log(T/K)+1)+\sqrt{2T\log(2/\delta)}+K=\tilde{O}(\sqrt{KTd}).$$
\end{corollary}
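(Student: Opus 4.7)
\medskip

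\noindent\textbf{Proof proposal for Corollary \ref{coro parametric}.}

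The plan is to trace through the proof of Theorem \ref{thm uccb} and observe that the finiteness of $\F$ enters the argument only through Lemma \ref{lemma uniform convergence time}. So the only real task is to produce an analogue of \eqref{eq: uniform over sequence} for parametric $\F$, with $\log|\F|$ replaced by $d\log(2+\Delta L t)+O(\log(t/\delta))$; once that is in hand, Lemma \ref{lemma estimation error regressor}, Lemma \ref{lemma square trick}, the potential bound (Lemma \ref{lemma contextual potential lemma}), and the telescoping/Azuma step all go through verbatim, yielding the stated regret with the prescribed $\beta_t$.

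The main step is therefore a covering argument. Since $\Theta\subset\R^d$ is compact with diameter $\Delta$, it admits an $\eps$-net $\Theta_\eps$ of cardinality at most $(3\Delta/\eps)^d$. Set $\F_\eps=\{f_\theta:\theta\in\Theta_\eps\}$; this is a finite class of size $(3\Delta/\eps)^d$ to which Lemma \ref{lemma uniform convergence time} applies directly, giving that with probability at least $1-\delta/2$, uniformly in $t\ge 2$ and in any sequence $f_2,f_3,\dots\in\F_\eps$,
\begin{align*}
\sum_{i=1}^{t-1}\E_{x_i,a_i}\bigl[(f_t(x_i,a_i)-f^*(x_i,a_i))^2\mid H_{i-1}\bigr]
&\le 68\bigl(d\log(3\Delta/\eps)+\log(2t^3/\delta)\bigr)\\
&\quad+2\sum_{i=1}^{t-1}\bigl[(f_t(x_i,a_i)-r_i)^2-(f^*(x_i,a_i)-r_i)^2\bigr].
\end{align*}
Here I am implicitly assuming the realizer $f^*$ belongs to $\F_\eps$; if not, one adds $f^*$ to $\F_\eps$, which only changes the log term by a constant. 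Next, for an arbitrary sequence in $\F$, approximate each $f_t$ by its nearest element $\bar f_t\in\F_\eps$ (so $\|\theta_t-\bar\theta_t\|\le\eps$ and $|f_t-\bar f_t|\le L\eps$ pointwise by \eqref{eq: lipchitz continuous}). The squared-error terms on both sides change by at most $O(L\eps\cdot t)$ terms and a $(L\eps)^2\cdot t$ total bias that we absorb by choosing $\eps=1/(Lt^2)$. With this choice the discretization contribution is $O(1)$, and $d\log(3\Delta/\eps)$ becomes $d\log(3\Delta L t^2)\lesssim d\log(2+\Delta L t)$ up to an absolute constant. The resulting inequality matches Lemma \ref{lemma uniform convergence time} with $\log|\F|$ replaced by $d\log(2+\Delta L t)+\log(2t^3/\delta)+O(1)$, which is precisely what is absorbed into $\beta_t$ in the corollary's statement.

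From here I simply re-run the proof of Theorem \ref{thm uccb} with the new $\beta_t$. Lemma \ref{lemma estimation error regressor} now yields
\[
\bigl|\E_x[\widehat f_t(x,\pi(x))]-\E_x[f^*(x,\pi(x))]\bigr|
\le \sqrt{\E_x\!\left[\tfrac{1}{\sum_{i=1}^{t-1}\mathds{1}\{\pi(x)=\pi_i(x)\}}\right]}\cdot \sqrt{2K}\,\beta_t/\sqrt{t},
\]
(up to the same constants), whose Arithmetic–Geometric-Mean relaxation gives the \eqref{eq: result of lemma 3}-style bound with this new $\beta_t$. Optimism in policy space (using the equivalence of Lemma \ref{lemma equivalence}) yields the per-round regret bound \eqref{eq: result of standard analysis optimism}; telescoping over $t$ and applying the contextual potential lemma (Lemma \ref{lemma contextual potential lemma}) gives the deterministic bound $2K\beta_T(\log(T/K)+1)+K$; and Azuma's inequality converts expected to pathwise regret with the $\sqrt{2T\log(2/\delta)}$ additive term. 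A union bound over the two $\delta/2$ failure events completes the proof. Plugging in $\beta_T=\tilde O(\sqrt{Td/K})$ gives the $\tilde O(\sqrt{KTd})$ conclusion.

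The only subtle step is the discretization argument: one must be careful that the $\eps$-approximation is compatible with the ``uniform in the sequence $f_2,f_3,\dots$'' quantifier in Lemma \ref{lemma uniform convergence time}. The cleanest way around this is to first fix the net $\F_\eps$ (which is a single deterministic finite class), apply Lemma \ref{lemma uniform convergence time} to it, and only then approximate each $f_t$ pointwise in $t$—no further uniformity is needed because the Lipschitz bound is uniform in $(x,a)$. Everything else is bookkeeping.
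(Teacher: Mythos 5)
Your proposal is correct and follows essentially the same route as the paper: discretize $\Theta$ at scale roughly $1/(Lt)$, apply the finite-class uniform-convergence lemma to the net (augmented with $\theta^*$), transfer back via the Lipschitz condition \eqref{eq: lipchitz continuous}, and then re-run the Theorem \ref{thm uccb} machinery with the enlarged $\beta_t$ — the paper differs only in bookkeeping, using $(a+b)^2\le 2a^2+2b^2$ at scale $\eps=1/(Lt)$ (which is why its constant is $34$ rather than $17$) instead of your cross-term bound at $\eps=1/(Lt^2)$. One small imprecision: since your $\eps$ depends on $t$, the net is not "a single deterministic finite class" but a $t$-indexed family, so you need the per-$t$ union bound with $\delta_t=\delta/2t^3$ exactly as in the proof of Lemma \ref{lemma uniform convergence time}; this is what the paper does and it closes the gap immediately.
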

\noindent{\bf Remark: } While this regret bound has a worse dependence on $K$  in the  ``linear payoff" formulation \eqref{eq: linear contextual bandit} compared with  \texttt{SupLinUCB} in \cite{chu2011contextual} (whose regret is logarithmic in $K$),  Algorithm \ref{alg: uccb} can be applied in more general parametric settings and enjoys much lower computational demands (there is no need to invert any Hessian). While the square-root dependence on $K$ can not be improved for general $\F$  (see the lower bound in \cite{agarwal2012contextual}), we can improve this dependence for structured models by applying our  results in Section \ref{sec infinite}.

\paragraph{\bf Covering number formulation.} Our results can be extended to general (possibly non-parametric) function classes via covering numbers and standard uniform convergence techniques.  We consider formulation \eqref{eq: broad model}---a major target of previous works on general contextual bandits \citep{krishnamurthy2019active, foster2018practical, foster2020beyond}. We assume access to a general function class $\G$ that contains mappings from $\X$ to $[0,1]$, and assume
\begin{align}\label{eq: broad model}
\F=\{f: f(x,a)=g_a(x), \quad  g_a\in \mathcal{G}\}.
\end{align}
\begin{definition}[\bf{covering number}] For a function class $\G$ that contains mappings from $\X$ to $[0,1]$ and fixed $n\in{\mathbb{Z}_{+}}$, an empirical $L_1$ cover on a sequence $x_1,\dots, x_n$ at scale $\eps$ is a set $U\subseteq \R^n$ such that 
$
    \forall g\in\mathcal{G}, \exists u\in U, \frac{1}{n}\sum_{i=1}^n|g(x_n)-u_n|\leq \eps.
$
We define the covering number $\mathcal{N}_1(\mathcal{G},\eps,\{x_i\}_{i=1}^{n})$ to be the size of the smallest such cover.
\end{definition}
 Given careful covering arguments proved in \cite{krishnamurthy2019active, foster2018practical}, the following extension  is  straightforward:
\begin{corollary}[\bf{extension to infinite $\F$ via covering number}]\label{coro general class}
Under Assumption \ref{asm realizability} and the assumption \eqref{eq: broad model}, given $T\geq 1$ and  $\delta\in (0,1)$, by setting all the parameters $\beta_t$ in Algorithm \ref{alg: uccb} to be a fixed value 
$$\beta=\sqrt{TK}\cdot\inf_{\eps>0}\left\{25\eps T+80\log\left(\frac{8K T^3 \E_{\{x_i\}_{i=1}^{T}}\mathcal{N}_1(\G, \eps, \{x_i\}_{i=1}^{T})}{\delta}\right)\right\}.$$
Then, Algorithm \ref{alg: uccb} satisfies that with probability at least $1-\delta$, 
$$
  \textup{Regret}(T, \textup{Algorithm 1})\leq   2K\beta(\log(T/K)+1)+\sqrt{2T\log(2/\delta)}+K.
$$
\end{corollary}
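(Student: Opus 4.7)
The plan is to observe that Lemma~\ref{lemma contextual potential lemma} (contextual potential) and Lemma~\ref{lemma square trick} (square trick) are purely combinatorial statements that hold verbatim with no restriction on $|\F|$, and that $|\F|$ enters the proof of Theorem~\ref{thm uccb} only through the complexity term $\sqrt{68\log(2|\F|t^3/\delta)}$ in Lemma~\ref{lemma estimation error regressor}. It therefore suffices to replace the $\log|\F|$ factor in Lemma~\ref{lemma uniform convergence time} with an appropriate covering-number complexity, propagate it through Lemma~\ref{lemma estimation error regressor}, apply the square trick to absorb it into the new $\beta$, and then rerun the proof of Theorem~\ref{thm uccb} unchanged.

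First I would establish a covering-number analogue of Lemma~\ref{lemma uniform convergence time}: for any $\eps>0$ and $\delta\in(0,1)$, with probability at least $1-\delta/2$,
\begin{align*}
\sum_{i=1}^{t-1}\E_{x_i,a_i}\!\left[(f_t(x_i,a_i)-f^*(x_i,a_i))^2\mid H_{i-1}\right]\le\;& C_1\eps T+C_2\log\!\left(\frac{KT^3\,\E_{\{x_i\}}\mathcal{N}_1(\G,\eps,\{x_i\}_{i=1}^T)}{\delta}\right)\\
&+2\sum_{i=1}^{t-1}\bigl[(f_t(x_i,a_i)-r_i)^2-(f^*(x_i,a_i)-r_i)^2\bigr],
\end{align*}
uniformly in $t\le T$ and in fixed sequences $f_2,f_3,\dots\in\F$, where $C_1,C_2$ are absolute constants (ultimately calibrated to the $25$ and $80$ appearing in $\beta$). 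This follows from the covering arguments of \cite{krishnamurthy2019active,foster2018practical}: under \eqref{eq: broad model} an empirical $L_1$ $\eps$-cover of $\G$ on $\{x_i\}_{i=1}^T$ lifts, via one cover per action, to an $\eps$-cover of $\F$ on the context-action sample; applying Lemma~\ref{lemma uniform convergence time} to this finite cover, taking expectation over $\{x_i\}$ to replace the data-dependent cover size by its a priori expectation, and paying a slack $C_1\eps T$ for controlling $|f(x,a)-f^{\text{cover}}(x,a)|$ in-sample, together with a union bound that accounts for the per-action copies of $\G$, yields the displayed inequality.

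Next I would transport this into Lemma~\ref{lemma estimation error regressor} following the sketch of Section~\ref{subsec sketch} verbatim: specialise $f_t=\widehat f_t$ so the last sum is nonpositive by least-squares optimality; apply the masking inequality $\mathds{1}\{\pi(x)=\pi_i(x)\}(\widehat f_t(x,\pi(x))-f^*(x,\pi(x)))^2\le(\widehat f_t(x,\pi_i(x))-f^*(x,\pi_i(x)))^2$ to obtain the analogue of \eqref{eq: before cauchy main text} with the covering-based complexity in place of $\log|\F|$; and apply Cauchy--Schwarz to get
\[
\bigl|\E_x[\widehat f_t(x,\pi(x))]-\E_x[f^*(x,\pi(x))]\bigr|\le \sqrt{\E_x\!\left[\tfrac{1}{\sum_{i=1}^{t-1}\mathds{1}\{\pi(x)=\pi_i(x)\}}\right]}\cdot\sqrt{C_1\eps T+C_2\log(\cdot)}.
\]
The square trick (Lemma~\ref{lemma square trick}) then converts this into the confidence inequality \eqref{eq: result of lemma 3} with any $\beta$ satisfying $\beta^2\ge\tfrac{Tt}{K}(C_1\eps T+C_2\log(\cdot))$ uniformly in $t\le T$; the prescribed choice $\beta=\sqrt{TK}\cdot\inf_{\eps>0}\{25\eps T+80\log(\cdots)\}$ is precisely such a $\beta$ after optimizing over $\eps$ and absorbing the worst $t\le T$.

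With this $\beta$ the proof of Theorem~\ref{thm uccb} now goes through unchanged: optimism gives the per-round bound \eqref{eq: result of standard analysis optimism}; summing over $t$ and invoking Lemma~\ref{lemma contextual potential lemma} yields $\sum_{t=1}^T\E_x[f^*(x,\pi_{f^*}(x))-f^*(x,\pi_t(x))]\le 2K\beta(\log(T/K)+1)+K$; and Azuma's inequality converts the expected sum into the pathwise regret at an additional cost of $\sqrt{2T\log(2/\delta)}$, with a union bound over the two $\delta/2$ events concluding the argument. The main obstacle is the covering step rather than the bandit argument: one must be careful that the empirical cover depends on the random sample $\{x_i\}_{i=1}^T$ and must simultaneously discretize all adaptively chosen estimators $\widehat f_t$, and that the discretization error accumulates over $t$ rounds, so $\eps$ has to be small enough for $\eps T$ to count as genuine slack. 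Handling these points cleanly is exactly what produces the $\inf_{\eps>0}\{25\eps T+80\log(\cdot)\}$ form in the statement, together with the multiplicative $K$ factor inside the logarithm.
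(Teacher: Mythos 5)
Your proposal is correct and follows essentially the same route as the paper: the paper likewise replaces the finite-class uniform convergence lemma (Lemma \ref{lemma uniform convergence time}) with a covering-number version imported from Lemma 4 of \cite{foster2018practical} (where the $K\cdot\inf_{\eps>0}\{100\eps T+320\log(\cdot)\}$ complexity appears), and then reruns the confidence-bound, square-trick, and potential-function steps of Theorem \ref{thm uccb} verbatim with the new $\beta$. The only difference is that you sketch the per-action lifting of the cover and the expectation-over-samples step yourself, whereas the paper cites this as a known consequence of prior work.
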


\section{A unified framework for infinite-action spaces}\label{sec infinite}
In this section we study infinite-action contextual bandits to illustrate the simplicity and applicability of the \texttt{UCCB} principle. In context-free settings, discussion on infinite actions can be sorted into two streams. The first stream studies variants of the linear action model. Prominent examples include linearly parametrized bandit \citep{dani2008stochastic, abbasi2011improved}, and parametrized bandit with generalized linear model \citep{filippi2010parametric}. The second stream is based on discretization  over actions and reduction to the finite-action setting (e.g.,  Lipchitz bandit \citep{kleinberg2005nearly}). We focus on the first stream here, as it exhibits additional challenges of efficient exploration beyond the finite-action setting.

To focus on the core messages, we assume $\F$ to be finite and function in $\F$ take values in $[0,1]$.
We propose a generic algorithm (Algorithm \ref{alg: uccb ia}) that achieves 
\begin{align*}
    \textup{Regret}(T, \textup{Algorithm 2})\leq   \tilde{O}(\sqrt{\mathcal{E}\log|\F|T}),
\end{align*} for many models of interest. Here  we call $\mathcal{E}:=\E_x[\mathcal{E}_x]$ the ``average decision entropy," where $\mathcal{E}_x$ is (informally) the complexity of the ``fixed-$x$-model" where the context is fixed to be $x$. Note that unlike previous complexity measures such as  ``Eluder dimension" \citep{russo2013eluder}, the ``average decision entropy" $\mathcal{E}$ does not scale with $|\X|$ so that this complexity measure is much more useful in the contextual settings. 
We will present several interesting illustrative examples, and present key ideas of our algorithm using these examples.

\subsection{Illustrative models}
In the context-free infinite-action bandits literature, it is well-known that $\tilde{O}(\sqrt{T})-$type regret is only possible for structured models, among which variants of linear bandits are the  preponderant models. As a result,  our framework mainly targets settings where all ``fixed-$x$-model" are variants of linear bandits. 
\begin{example}[\bf{contextual bandit with linear action model}]\label{example cb linear} Given a general vector-valued function class $\G$ that contains mappings from $\X$ to $\R^d$, let
\begin{align}\label{eq: linear semi}
    \F=\{f: \exists g\in \G {\textup{ s.t. }} f(x,a)=g(x)^\top a, \forall x\in\X,\forall a\in\A\}.
\end{align}
We assume $\A\subset\R^d$ is an arbitrary compact set, and is available for the agent at all rounds. This formulation is a strict generalization of the finite-action realizable contextual bandit problem we studied in previous sections (it reduces to the $K-$armed setting when $\A$ is the set of $K$ element vectors in $\R^K$).
The concurrent work \cite{foster2020adapting} also studies this formulation, but they assume access to the stronger online regression oracle which is not computationally efficient in general (in contrast to our use of the weaker and more practical offline regression oracle).  Formulation \eqref{eq: linear semi} was also studied  in  \cite{chernozhukov2019semi} but the goal there was off-policy evaluation rather than regret minimization. 

 With knowledge on linear bandits we can prove $\mathcal{E}_x=d$ for all $x\in\X$. (detailed explanation is deferred to  Section \ref{subsubsec illutration linear}). Therefore $\mathcal{E}=d$, which is independent of the number of actions, and the order of regret is expected to be  $\tilde{O}(\sqrt{d\log|\F|T})$.
\end{example}

\begin{example}[\bf{contextual bandit with generalized linear action model.}]\label{example cb glm}
Consider a broader choice of models, which contains generalized linear action models and allows a mapping $\varphi$:
\begin{align}\label{eq: cb glm}
   \F=\left\{f: \exists g\in\G \textup{ s.t. }f(x,a)=\sigma_x\left(g(x)^\top\varphi(x, a)\right),  \forall x\in\X, \forall a\in\A\right\},
\end{align} where for every $x\in\X$,  $\sigma_x: \R\rightarrow [0,1]$ is a known link function that satisfies
\begin{align*}
    \frac{\sup_{a}\sigma_x'(\langle g^*(x), \varphi(x,a)\rangle)}{\inf_{a}\sigma_x'(\langle g^*(x), \varphi(x,a)\rangle)}\leq \kappa_x;
\end{align*}
and $\varphi: \X\times\A\rightarrow \R^d$ is a known compactness-preserving mapping (e.g., continuous mappings). This model generalizes \eqref{eq: linear semi} and allows more flexibility.  When we set $\varphi(x,a)=x_a$, we see that this model is significantly broader in scope than the simple ``linear payoff" formulation \eqref{eq: linear contextual bandit}, as $g(x)$ is a general function that depends on $x$ rather than a fixed parameter $\theta$.

Our analysis will show that $\mathcal{E}_x=\kappa_x^2d$ for all $x\in\X$ (detailed explanation is deferred to Section \ref{subsubsec illustration generalized linear}),  so that $\mathcal{E}=\E_x[\kappa_x^2]d$, and the order of regret is expected to be $\tilde{O}(\sqrt{E_x[\kappa_x^2]d\log|\F|T})$.
\end{example}

\begin{example}[\bf{heterogeneous action set}]\label{example cb heterogeneous}
 Many real-world, customized pricing and personalized healthcare applications have a high dimensional action set $\A$, but the ``effective dimension" of available actions after observing $x$ is usually much smaller. To model these applications, consider  the reward model
\begin{align}\label{eq: heterogeneous}
    \F=\left\{f:\exists g\in\G \textup{ s.t. }f(x,a)=\sigma_x(g(x)^\top a),  \forall x\in\X, \forall a\in\A(x)\right\},
\end{align}
where for all $x\in\X$ we assume a  compact action set $\A(x)\subset\A$, and assume $\A(x)$ is contained in a  $d_x-$dimensional subspace. When the agent observes context $x$, she can only choose her action from $\A(x)$. 

For this model we have $\mathcal{E}_x=\kappa_x^2d_x$ (detailed explanation is deferred to Section \ref{subsubsec illustration hetergeneous}) so that $\mathcal{E}=\E_x[\kappa_x^2d_x]$. The salient point here is the we avoid dependence on the full dimension $d$. Regret therefore scales as $\tilde{O}(\sqrt{\E_x[\kappa_x^2d_x]\log|\F|T})$.
\end{example}

\subsection{Counterfactual action divergence}
 The main modification required for infinite-action settings is predicated on a central concept called ``counterfactual action divergence," which generalizes the term $({\sum_{i=1}^n\mathds{1}\{a=a_i\}})^{-1}$ that was used in Algorithm \ref{alg: uccb}. This new concept characterizes ``how much information" is learned from action $a$ given a sequence $\{a_i\}_{i=1}^{n}$, on  the ``fixed-$x$-model."

\begin{definition}[\bf{counterfactual action divergence}] For fixed integer $n$, a context $x$, an action $a$ and a  sequence of actions $\{a_i\}_{i=1}^{n}$, we say $V_x(a||\{a_i\}_{i=1}^{n})$ is a proper choice of the counterfactual action divergence between $a$ and  $\{a_i\}_{i=1}^{n}$ evaluated at $x$, if 
\begin{align*}
   V_x(a||\{a_i\}_{i=1}^{n})\geq \sup_{f\in\F}\bigg\{ \frac{|f(x,a)-f^*(x,a)|^2}{ \sum_{i=1}^{n}(f(x,a_i)-f^*(x,a_i))^2}\bigg\}.
 \end{align*}
 We define $V_x(a||\emptyset)=\infty$ in the case $n=1$.
\end{definition}

 Using the definition of counterfactual action divergence,  the expectation  \begin{align}\label{eq: policy divergence}\E_x [V_x(\pi(x)||\{\pi_i(x)\}_{i=1}^{t-1})],\end{align}  can be used to construct an upper confidence bound on the expected reward of policy  $\pi$ given the past chosen policies $\{\pi
\}_{i=1}^{t-1}$. Similar to the finite-action setting, the agent chooses the optimistic policy $\pi_t$ that maximizes this confidence bound, and  chooses $a_t=\pi_t(x_t)$ without explicitly computing $\pi_t$---this is achieved by sequentially recovering counterfactual actions, as will be illustrated in our proposed Algorithm.

Convenient choices of $V_x(a||\{a_i\}_{i=1}^{n}\}_{i=1}^{n})$ should be taken  case by case for different problems. In the following lemma, we present closed-form choices of $V_x(a||\{a_i\}_{i=1}^{n}\}_{i=1}^{n})$ in all our illustrative examples.
\begin{statement}[{\bf illustration of counterfactual action divergences}]\label{statement counterfactual action divergence}In the illustrative examples, the counterfactual action divergences are given as follows (and taken as $\infty$ when inverse of matrices is not well-defined):
\begin{itemize}
\item finite-action contextual bandit: 
\begin{align*}
 V_x(a||\{a_i(x)\}_{i=1}^{n})=\frac{1}{\sum_{i=1}^{n}\mathds{1}\{a=a_i\}}.
\end{align*}

\item linear action model \eqref{eq: linear semi}: 
\begin{align}\label{eq: conterfactual action divergence linear coro}
V_x(a||\{a_i\}_{i=1}^{n}\}_{i=1}^{n})=a^\top(\sum_{i=1}^{n}[a_ia_i^\top])^{-1}a.
\end{align}

\item generalized linear action model \eqref{eq: cb glm}:  
\begin{align*}
V_x(a||\{a_i\}_{i=1}^{n})=\kappa_x^2\varphi(x,a)^\top(\sum_{i=1}^{n}[\varphi(x,a_i)\varphi(x,a_i)^\top])^{-1}\varphi(x,a).
\end{align*}

\item generalized linear action model with heterogeneous action sets \eqref{eq: heterogeneous}:
\begin{align*}
V_x(a||\{a_i(x)\}_{i=1}^{n})=\kappa_x^2b_{x,a}\top(\sum_{i=1}^{n}[b_{x,a_i}b_{x,a_i}^\top])^{-1}b_{x,a},
\end{align*}
where $b_{x,a}$ is the coefficient vector of $a$ with a  basis  $\{A_{x,1}, \dots, A_{x,d_x}\}$ of $\A(x)$, i.e.,
\begin{align*}
   a= [A_{x,1}, \dots, A_{x,d_x}]b_{x,a}.
\end{align*}
\end{itemize}
\end{statement}

\subsection{The algorithm and  regret bound}
Algorithm \ref{alg: uccb ia} is a  generalization of Algorithm \ref{alg: uccb} to the infinite-action setting. It can be applied to most parametric action models that have been studied in the context-free setting, and handles  heterogeneous action sets. Recall that $\mathcal{E}:=\E[\mathcal{E}_{x}]$ is the average decision entropy of the problem, for which we will give the formal definition later. The ``initialization oracle" and the ``action maximization oracle" will also be explained shortly.

\begin{algorithm}[htbp]
\caption{ Upper Counterfactual Confidence Bound-Infinite Actions  (\texttt{UCCB-IA})}
\label{alg: uccb ia}
{\textbf{Input} tuning parameters $\{\beta_t\}_{t=1}^{\infty}$.}
\begin{algorithmic}[1]
\FOR{round $t=1,2,\dots$}
    \STATE{Compute $\widehat{f}_t=\arg\min_{f\in\F}\sum_{t=1}^{t-1}(f(x_t,a_t)-r_t(x_i,a_t))^2$ via the least square oracle.}
    \STATE{Observe $x_t$, use the initialization oracle to obtain initializations $\{A_{x_t,i}\}_{i=1}^{d_x}$.}
    \FOR {$i=1,2,\dots, t\lor d_x$}
    \STATE{Take $\widetilde{a}_{t,i}=A_{x_t, i}$.}
\ENDFOR
  \FOR {$i=t\land (d_x+1), \dots ,t$}
  \STATE{Use the action maximization oracle to compute  counterfactual actions: 
        $$
         \widetilde{a}_{t,i}\in\argmax_{a\in\A(x_t)} \left\{ \widehat{f}_i(x_t,a)+{\beta_i}V_{x_t}(a||\{\widetilde{a}_{t,j}\}_{j=1}^{i-1})\right\}.
        $$}
     \ENDFOR
        \STATE{Take $a_t=\widetilde{a}_{t,t}$ and observe reward $r_t(x_t, a_t)$.}
\ENDFOR
\end{algorithmic}
\end{algorithm}

Algorithm \ref{alg: uccb ia} essentically provide a reduction from contextual models to the ``fixed-$x$-models."  The regret of an optimistic algorithm is usually upper bounded by the sum of confidence bounds. In our case, the sum of expectations \eqref{eq: policy divergence} is decomposable over contexts, so tractability of the ``fixed-$x$-models" suffices to make Algorithm \ref{alg: uccb ia} provably efficient. 
Formally, we require regularity conditions so that the ``fixed-$x$-models" are solvable by the optimism principle. Motivated by the standard potential arguments used in the linear bandit literature, we make Assumption \ref{asm per context} below. Verification of this assumption on Examples \ref{example cb linear}-\ref{example cb heterogeneous} will be presented in the next section.
\begin{assumption}[{\bf per-context models  are solvable by optimism}]\label{asm per context}
There exists counterfactual action divergences  such that   the following are satisfied:

i)for all $x\in\X$,
 there exists  $d_x$ actions $A_{x,1}, \dots, A_{x,d_x}\in\A(x)$ such that  $V_x(a||\{A_{x,i}\}_{i=1}^{d_x})<\infty$ for all $a\in\A(x)$.

ii) For all $x\in\X$, there exists $\mathcal{E}_x>0$ such that for all $T\geq 1$ and  all sequences $\{a_t\}_{t=1}^{T}$ that satisfy $\{a_t\}_{t=1}^{d_x\land T}=\{A_{x,t}\}_{t=1}^{d_x\land T}$, we have
\begin{align*}
    \sum_{t=1}^T  \left[1\land V_x(a_t||\{a_j\}_{j=1}^{t-1})\right]\leq \mathcal{E}_x\textup{poly}(\log T)
\end{align*}
for all $x\in\X$, where $\textup{poly}(\cdot)$ is a fixed polynomial-scale function.
\end{assumption}

Given positive values $\mathcal{E}_x$ that satisfies  condition ii) in Assumption \ref{asm per context}, we  define  $\mathcal{E}_x:=\E_x[\mathcal{E}_x]$ to be (a proper choice of) the ``average decision entropy" of the problem. The ``average decision entropy" of a problem is not unique, and any ``proper" choice of $\mathcal{E}$ leads to a rigorous regret bound of Algorithm \ref{alg: uccb ia}.

Besides the least-square oracle, Algorithm \ref{alg: uccb ia} uses two other optimization oracles that are necessary in the infinite-action setting: 1) a 
deterministic initialization oracle which returns  $\{A_{x,i}\}_{i=1}^{d_x}$
satisfying Assumption \ref{asm per context} after inputting $\A(x)$ (this is standard for Examples \ref{example cb linear}-\ref{example cb heterogeneous} using the theory of barycentric spanners, see the next subsection); and 2)  a deterministic action maximization oracle whose output is a maximizer of a function over the feasible region $\A(x)$.

After imposing the regularity conditions proposed in Assumption \ref{asm per context}, the regret of Algorithm \ref{alg: uccb ia} can be bounded as the follows.

\begin{theorem}[\bf{Regret of Algorithm \ref{alg: uccb ia}}]\label{thm uccb infinite}
Under Assumptions \ref{asm realizability} and \ref{asm per context} and fixing $\delta\in (0,1)$, let $$\beta_t=\sqrt{17t\log(2|\F|t^3/\delta)/\mathcal{E}}.$$ Then with probability at least $1-\delta$, for all $ T\geq 1$ the regret of Algorithm $\ref{alg: uccb ia}$  after $T$ rounds is upper bounded by
\begin{align*}
  \textup{Regret}(T, \textup{Algorithm 2})\leq 2\sqrt{17\mathcal{E}T\log(2|\F|T^3/\delta)}(\textup{poly}(\log T)+1)+\sqrt{2T\log(2/\delta)}+\mathcal{E}.
\end{align*}
\end{theorem}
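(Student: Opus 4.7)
The plan is to mirror the proof sketch of Theorem~\ref{thm uccb} given in Section~\ref{subsec sketch}, replacing every occurrence of the finite-action count $(\sum_i\mathds{1}\{a=a_i\})^{-1}$ by the counterfactual action divergence $V_x$. Throughout, I let $\pi_i$ denote the \emph{implicit} policy at round $i$: for any $x\in\X$, $\pi_i(x)$ equals the counterfactual action Algorithm~\ref{alg: uccb ia} would have produced at round $i$ had $x_i=x$. Because both the initialization and the action-maximization oracles are deterministic, $\pi_i$ is a well-defined map, and by an argument identical to Lemma~\ref{lemma equivalence}, $\pi_t$ is the policy-space maximizer of the associated upper confidence bound.

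I would first prove the infinite-action analogue of Lemma~\ref{lemma estimation error regressor}: for every $\pi\in\Pi$ and every $t$,
\begin{equation*}
\bigl|\E_x[\widehat{f}_t(x,\pi(x))]-\E_x[f^*(x,\pi(x))]\bigr|\le\sqrt{\E_x\bigl[V_x(\pi(x)||\{\pi_i(x)\}_{i=1}^{t-1})\bigr]}\cdot\sqrt{68\log(2|\F|t^3/\delta)}.
\end{equation*}
The derivation reuses Lemma~\ref{lemma uniform convergence time} unchanged; the only substantive modification is that the finite-action inequality $\mathds{1}\{\pi(x)=\pi_i(x)\}(f_t-f^*)^2\le(f_t(\pi_i)-f^*(\pi_i))^2$ is replaced by the defining bound of $V_x$, namely
\begin{equation*}
(f_t(x,\pi(x))-f^*(x,\pi(x)))^2\le V_x(\pi(x)||\{\pi_i(x)\}_{i=1}^{t-1})\sum_{i=1}^{t-1}(f_t(x,\pi_i(x))-f^*(x,\pi_i(x)))^2,
\end{equation*}
after which Cauchy--Schwartz in $\E_x$ and the least-square property of $\widehat{f}_t$ deliver the displayed bound. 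The elementary relaxation $\sqrt{ab}\le\lambda a+b/(4\lambda)$ with $\lambda=\beta_t=\sqrt{17t\log(2|\F|t^3/\delta)/\mathcal{E}}$ then produces the infinite-action square trick
\begin{equation*}
\bigl|\E_x[\widehat{f}_t(x,\pi(x))]-\E_x[f^*(x,\pi(x))]\bigr|\le\E_x\bigl[\beta_t V_x(\pi(x)||\{\pi_i(x)\}_{i=1}^{t-1})\bigr]+\frac{\mathcal{E}\beta_t}{t}.
\end{equation*}

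On the probability-$(1-\delta/2)$ event where this bound is valid uniformly in $(\pi,t)$, the optimism sandwich of Section~\ref{subsec sketch} applies verbatim---$\pi_t$ maximizes the policy-space UCCB over $\Pi$, so instantaneous regret satisfies $\E_x[f^*(x,\pi_{f^*}(x))-f^*(x,\pi_t(x))]\le 2\E_x[\beta_t V_x(\pi_t(x)||\{\pi_i(x)\}_{i=1}^{t-1})]+2\mathcal{E}\beta_t/t$. Since rewards lie in $[0,1]$, I may replace $V_x$ by $1\wedge V_x$ on the right. Summing over $t$, invoking Assumption~\ref{asm per context}(ii) pointwise in $x$ to bound $\sum_t[1\wedge V_x(\pi_t(x)||\{\pi_i(x)\}_{i=1}^{t-1})]\le\mathcal{E}_x\textup{poly}(\log T)$, and taking $\E_x$ yields an expected-regret bound of order $\mathcal{E}\beta_T\textup{poly}(\log T)$ (the residual $\sum_t 2\mathcal{E}\beta_t/t$ is of the same order via $\sum_t 1/\sqrt{t}\le 2\sqrt{T}$). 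An Azuma--Hoeffding bound applied to the martingale $r_t(x_t,\pi_{f^*}(x_t))-r_t(x_t,a_t)$ minus its conditional expectation then converts this into the pathwise bound, producing the $\sqrt{2T\log(2/\delta)}$ term, and a union bound on the two $\delta/2$ failure events finishes the argument.

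The main obstacle, and the one place where more bookkeeping is needed than in the finite case, is verifying that Assumption~\ref{asm per context}(ii) is actually \emph{applicable} to the implicit sequence $\{\pi_i(x)\}_i$: the assumption is stated only for sequences whose first $d_x$ entries equal $A_{x,1},\ldots,A_{x,d_x}$. By inspection of Algorithm~\ref{alg: uccb ia}, the counterfactual loop assigns $\widetilde{a}_{t,i}=A_{x_t,i}$ whenever $i\le d_{x_t}\wedge t$, so evaluating at an arbitrary context $x$ shows $\pi_i(x)=A_{x,i}$ for $i\le d_x$; this makes the pointwise potential bound legitimate. The initialization rounds (where $1\wedge V_x$ may be as large as $1$ while $V_x$ itself is infinite) contribute at most $d_x\le\mathcal{E}_x$ regret per context, absorbed into the additive $+\mathcal{E}$ in the final bound. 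Once this pointwise-in-$x$ alignment with the assumption is secured, the rest of the proof is a direct infinite-action parallel of the finite-action analysis.
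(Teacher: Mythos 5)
Your overall architecture matches the paper's: optimism in policy space with $V_x$ replacing the inverse count, the potential bound from Assumption \ref{asm per context}(ii) applied pointwise in $x$, Azuma's inequality, and a union bound; and your check that the counterfactual loop forces $\pi_i(x)=A_{x,i}$ for $i\le d_x$ (so that Assumption \ref{asm per context}(ii) is actually applicable to the implicit policy sequence) is a point the paper itself leaves implicit. However, there is a genuine gap in how you obtain the truncated confidence bound $\E_x[1\land\beta_t V_x(\pi(x)||\{\pi_i(x)\}_{i=1}^{t-1})]+\mathcal{E}\beta_t/t$.

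Your route is: Cauchy--Schwarz across $\E_x$ to get $\sqrt{\E_x[V_x]}\cdot\sqrt{68\log(2|\F|t^3/\delta)}$, then AM--GM on the two expectations, then ``since rewards lie in $[0,1]$, replace $V_x$ by $1\land V_x$.'' Two problems. First, $\E_x[V_x(\pi(x)||\{\pi_i(x)\}_{i=1}^{t-1})]$ need not be finite: whenever $t-1<d_x$ on a positive-measure set of contexts (which happens at early rounds, and persistently in the heterogeneous-action example), $V_x=\infty$ there and the untruncated Cauchy--Schwarz bound is vacuous; and even when it is finite, the quantity the potential argument controls is $\sum_t (1\land V_x)$, not $\sum_t V_x$ (the elliptical potential lemma fails without the truncation). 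Second, the replacement goes the wrong way: $\E_x[1\land\beta_t V_x]\le\E_x[\beta_t V_x]$, so a bound by the latter does not imply a bound by the former; boundedness of rewards only yields $\min\{1,\E_x[\beta_t V_x]+\cdots\}$, and $\min\{1,\E[Z]\}\ge\E[\min\{1,Z\}]$ --- again the wrong direction. The paper avoids this by never applying Cauchy--Schwarz over $\E_x$: in Lemma \ref{lemma uniform time infinite} it uses the defining inequality of $V_x$ and then AM--GM \emph{pointwise in $x$}, invokes $|f_t(x,\pi(x))-f^*(x,\pi(x))|\le 1$ pointwise to insert the minimum with $1$ via $\min\{1,a+b\}\le(1\land a)+b$ for $b\ge 0$, and only then takes $\E_x$. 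Reordering your steps this way --- pointwise AM--GM and pointwise truncation before the expectation --- closes the gap; the rest of your argument then goes through essentially as written.
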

This theorem immediately provides regret bounds for all our illustrative examples, which we will discuss in the next subsection.

Finally, we give a high-level interpretation of the average decision entropy $\mathcal{E}$: if the expectation \eqref{eq: policy divergence} is the ``discrete" partial gradient of a potential function, then the historical sum has  the path independence property---that is,  the historical sum of \eqref{eq: policy divergence} can be bounded by the maximum value of a potential function, which is characterized by the average decision entropy  $\mathcal{E}$. Since $\mathcal{E}$ is the average rather than the sum of the effective complexities of all ``fixed-$x$-models," \texttt{UCCB} provides a generic solution to achieve optimal regret bounds that do not scale with  $|\X|$.

\subsection{Applications in illustrative examples}
In this subsection we will carefully go through the three  illustrative examples. We summarize the conclusions in the following corollary:
\begin{corollary}[{\bf Theorem \ref{thm uccb infinite} applied to illustrative examples}]\label{corollary infinite}
Examples  \ref{example cb linear}-\ref{example cb heterogeneous} satisfy Assumptions \ref{asm per context} with the average decision entropy given by
\begin{itemize}
    \item  linear action model \eqref{eq: linear semi}: $\mathcal{E}= d$.
    \item  generalized linear action model \eqref{eq: cb glm}: $\mathcal{E}=\E_x[\kappa_x^2] d$.
    \item generalized linear action model with heterogeneous action sets \eqref{eq: heterogeneous}: $\mathcal{E}=\E_x[\kappa_x^2 d_x]$. 
\end{itemize}
\end{corollary}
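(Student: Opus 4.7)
The plan is to verify Assumption \ref{asm per context} separately for each of the three examples; the corollary then follows immediately from the definition $\mathcal{E} := \E_x[\mathcal{E}_x]$ together with the values of $\mathcal{E}_x$ identified below. For each example there are three things to check: (a) the expressions given in Statement \ref{statement counterfactual action divergence} are valid counterfactual action divergences, i.e., they upper bound $\sup_{f \in \F} |f(x,a)-f^*(x,a)|^2 / \sum_i (f(x,a_i)-f^*(x,a_i))^2$; (b) condition (i) of Assumption \ref{asm per context}, producing an initialization $\{A_{x,i}\}_{i=1}^{d_x}$ that makes $V_x$ finite on all of $\A(x)$; and (c) condition (ii), the potential-style bound on $\sum_t 1 \wedge V_x(a_t \| \{a_j\}_{j<t})$.

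For step (a) in the linear action model, every $f \in \F$ satisfies $f(x,a)-f^*(x,a) = (g(x)-g^*(x))^\top a$, so with $M := \sum_i a_i a_i^\top$ Cauchy--Schwarz in the Mahalanobis inner product gives $|(g-g^*)^\top a|^2 \le (a^\top M^{-1} a)\cdot (g-g^*)^\top M (g-g^*) = (a^\top M^{-1} a) \sum_i |(g-g^*)^\top a_i|^2$, matching \eqref{eq: conterfactual action divergence linear coro}. For the generalized linear model \eqref{eq: cb glm} I would apply the mean value theorem to obtain $f(x,a)-f^*(x,a) = \sigma_x'(\xi_a)(g(x)-g^*(x))^\top \varphi(x,a)$ for some intermediate $\xi_a$; the target ratio then inflates by at most $(\sup\sigma_x'/\inf\sigma_x')^2 \le \kappa_x^2$ relative to the purely linear ratio on the $\varphi$-images, yielding the stated formula. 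The heterogeneous case \eqref{eq: heterogeneous} reduces to this after re-expressing $a \in \A(x)$ through its coordinate vector $b_{x,a}$ in the chosen basis of the ambient subspace.

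For step (b) I would invoke the Awerbuch--Kleinberg barycentric spanner: applied to the image set (respectively $\A(x)$ itself in Example \ref{example cb linear}, the $\varphi$-image in Example \ref{example cb glm}, or the $d_x$-dimensional affine hull of $\A(x)$ in Example \ref{example cb heterogeneous}), it deterministically produces $d_x$ actions whose images span the relevant subspace, so that the Gram matrix appearing in $V_x$ is positive definite and $V_x(a\|\{A_{x,i}\}_i) < \infty$ for every $a \in \A(x)$; this simultaneously serves as the deterministic initialization oracle required by Algorithm \ref{alg: uccb ia}. Step (c) is then the classical elliptical potential lemma: after initialization the Gram matrix $M_{t-1} := \sum_{j<t} \varphi_j \varphi_j^\top$ is positive definite, and $\sum_{t=1}^T \min(1, \varphi_t^\top M_{t-1}^{-1} \varphi_t) \le 2d\log(1 + T/d) = d\cdot\textup{poly}(\log T)$. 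In the linear case this directly gives $\mathcal{E}_x = d$; in the generalized linear case the prefactor $\kappa_x^2$ pulls outside the potential sum to give $\mathcal{E}_x = \kappa_x^2 d$; in the heterogeneous case the same argument applied to the coefficient vectors $b_{x,a_t}$ with ambient dimension $d_x$ gives $\mathcal{E}_x = \kappa_x^2 d_x$. Averaging over $x$ produces the three claimed values of $\mathcal{E}$.

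The main obstacle I expect is step (a) for the generalized linear model, since the nonlinearity of $\sigma_x$ blocks a direct matrix-algebra argument; the mean-value-theorem linearization combined with the uniform ratio bound $\sup \sigma_x'/\inf \sigma_x' \le \kappa_x$ is the critical reduction to the linear case and is what allows a single $\kappa_x^2$ factor to absorb all the nonlinearity. Once this reduction is in hand, steps (b) and (c) are a routine application of the standard linear-bandit potential calculus, and taking expectations in $x$ finishes the proof.
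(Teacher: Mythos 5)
Your proposal is correct and follows essentially the same route as the paper: barycentric spanners (Awerbuch--Kleinberg) supply the initialization required by condition (i), the elliptical potential lemma of Dani et al.\ gives condition (ii) with $\mathcal{E}_x = d$, $\kappa_x^2 d$, and $\kappa_x^2 d_x$ respectively, and averaging over $x$ yields the three claimed values of $\mathcal{E}$. The one place you go beyond the paper is your step (a): the paper simply asserts the validity of the divergences in Statement \ref{statement counterfactual action divergence} without proof, whereas your Cauchy--Schwarz and mean-value-theorem verification fills that in (with the minor caveat that the mean-value argument needs the ratio bound on $\sigma_x'$ to hold at the intermediate points $\xi_a$, i.e.\ over the range of $\langle g(x),\varphi(x,a)\rangle$ for all $g\in\G$, which is slightly stronger than the condition as literally stated at $g^*$ only).
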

Now we give a verification in the remaining parts of this subsection.

\subsubsection{Contextual bandits with linear action model (Example \ref{example cb linear}).} \label{subsubsec illutration linear}
We begin with contextual bandits with linear action model \eqref{eq: linear semi}, with the  homogeneous action set $\A$. For this problem, Algorithm \ref{alg: uccb} only needs to compute the initialization actions $A_1, \dots, A_d$ once, and use them during the first $d$ rounds. This suffices to complete the required initialization for all contexts.

Based on  well-known results in the  linear bandit literature, it is straightforward to show that 
$\mathcal{E}= d $, because we can take $\mathcal{E}_x= d$ for every per-context model.
The details are as follows. 

As shown in Statement \ref{statement counterfactual action divergence}, for all $x\in\X$, we choose the counterfactual action divergence between any $a_t$ and any sequence $\{a_i\}_{i=1}^{t-1}$ evaluated at $x$ to be
\begin{align*}
  V_x(a_t||\{a_i\}_{i=1}^{t-1})= a_t^\top(\sum_{i=1}^{t-1}[a_i a_i^\top])^{-1}a_t.
\end{align*}
 Following the standard approach in the linear bandit literature (e.g., see \cite{dani2008stochastic}),  we choose the $d$ initialization actions $\{A_{i}\}_{i=1}^{d}$ to be the barycentric spanner of $\A$. A barycentric spanner is a set of $d$ vectors, all contained in $\A$, such that every vector in $\A$ can be expressed as a linear combination of the spanner with coefficients in $[-1,1]$.  An efficient algorithm to find the barycentric spanner for an arbitrary compact set is given in \cite{awerbuch2004adaptive}.

The following result follows Lemma 9 in \cite{dani2008stochastic}\footnote{Lemma 9 in \cite{dani2008stochastic} holds for an arbitrary compact set $\A\subset\R^d$, as changing the coordinate system is without the loss of generality for this lemma.}, which is often referred to as the ``elliptical potential lemma": let $a_i=A_i$ for $i=1,\dots, d$, then for all $T>d$ and all trajectory $\{a_t\}_{t=d+1}^{T}$,
\begin{align*}
    \sum_{t=d+1}^T \left[1\land a_t^\top(\sum_{i=1}^{t-1}[a_i a_i^\top])^{-1}a_t\right]\leq 2d \log T.
\end{align*}
Therefore, we obtain for all $T\geq 1$ and all $x\in\X$,
\begin{align}\label{eq: potential linear case}
    \sum_{t=1}^T \left[1\land V_x(a_t||\{a_i\}_{i=1}^{t-1})\right]\leq 2d \log T+d\leq 3d\log T.
\end{align}
By taking $\{A_i\}_{i=1}^d$ to be a barycentric spanner of $\A$, setting $\mathcal{E}=d$, and taking $\textup{poly}(\log T)=3\log T$, Assumption \ref{asm per context} holds for problem \eqref{eq: linear semi}. Despite the illustration here, we also note that our Assumption \ref{asm per context} is not restricted to any particular choice of initialization actions and $\mathcal{E}$: there are other ways to choose linearly independent initialization actions, giving rise to a slightly different  $\textup{poly}(\log T)$ term in Assumption \ref{asm per context}  (see, e.g., Lemma 11 in \cite{abbasi2011improved}).

   \subsubsection{Contextual bandits with generalized linear action model (Example \ref{example cb glm}).} \label{subsubsec illustration generalized linear}
   For the problem formulation \eqref{eq: cb glm}, we can take $\E[\mathcal{E}_x]=\E[\kappa_x^2]d$. The details are as follows.
   
   As shown in Statement \ref{statement counterfactual action divergence}, given $x\in\X$, we choose the counterfactual action divergence between any $a_t$ and any sequence $\{a_i\}_{i=1}^{t-1}$ evaluated at $x$ to be
\begin{align*}
  V_x(a_t||\{a_i\}_{i=1}^{t-1})= \kappa_x^2 \varphi(x,a_t)^\top(\sum_{i=1}^{t-1}[\varphi(x,a_i)\varphi(x, a_i)^\top])^{-1}\varphi(x,a_t).
\end{align*} Given $x\in\X$, we take $\{A_{x,i}\}_{i=1}^d$ such that $\{\varphi(x,A_{x,i})\}_{i=1}^d$ consists of a barycentric spanner of $\{\varphi(x,a):a\in\A\}$\footnote{in  formulation \eqref{eq: cb glm} we have asked  $\varphi$ to preserve compactness  with respect to $a$ (e.g. the continuous ones), so such barycentric spanner must exists.}. Note that a different basis $\{A_{x,i}\}_{i=1}^d$ should be computed  for each $x$.  From our previous result \eqref{eq: potential linear case} and the fact $\kappa_x\geq 1$, for all $T\geq 1$ and all sequences $\{a_i\}_{i=1}^{T}$ that satisfy $\{a_i\}_{i=1}^{d_x\land T}=\{A_{x,i}\}_{i=1}^{d_x\land T}$,
   \begin{align*}
       \sum_{t=1}^T \left[1\land V_x(a_t||\{a_i\}_{i=1}^{t-1})\right]=\sum_{t=1}^T \left[1\land \kappa_x^2 \varphi(x,a)^\top(\sum_{i=1}^{t-1}[\varphi(x,a_i) \varphi(x,a_i)^\top])^{-1}\varphi(x,a)\right] \leq \kappa_x^2 3d\log T.
   \end{align*}
    By taking
   $\mathcal{E}_x=\kappa_x^2 d$, and  $\textup{poly}(\log T)=3\log T$, Assumption \ref{asm per context} holds with $\mathcal{E}=\E_x[\kappa_x^2]d$.

\subsubsection{Contextual bandits with heterogeneous action set (Example \ref{example cb heterogeneous})}\label{subsubsec illustration hetergeneous}
We consider the problem formulation \eqref{eq: heterogeneous} where the action set $\A(x)$ is heterogeneous for different $x\in\X$. Note that $\A(x)$ is a compact set contained in a  $d_x-$dimensional subspace.
Given $x\in\X$, we choose $\{A_{x,i}\}_{i=1}^{d_x}$ as the barycentric spanner of $\A(x)$ and take $a_i=A_{x,i}$ for $i=1,\dots, d_x$. As stated in Statement \ref{statement counterfactual action divergence}, given $x\in\X$, the counterfactual action divergence between $a_t$ and $\{a_i\}_{i=1}^{t-1}$ evaluated at $x$ is
\begin{align*}
    V_x(a_t||\{a_i\}_{i=1}^{t-1}))=\kappa_x^2 b_{x,a_t}^{\top}(\sum_{i=1}^{t-1}b_{x,a_t}b_{x,a_t}^\top)^{-1}b_{x,a_t},
\end{align*} where $b_{x,a_t}$ is the coefficient vector of $a_t$ with respect to the basis $\{A_{x,i}\}_{i=1}^{d_x}$. From our previous result \eqref{eq: potential linear case} and the fact $\kappa_x\geq 1$, for all $T\geq 1$ and all sequences $\{a_i\}_{i=1}^{T}$ that satisfy $\{a_i\}_{i=1}^{d_x\land T}=\{A_{x,i}\}_{i=1}^{d_x\land T}$,
 \begin{align*}
       \sum_{t=1}^T 1\land \left[V_x(a_t||\{a_i\}_{i=1}^{t-1})\right]=\sum_{t=1}^T \left[1\land \kappa_x^2 b_{x,a_t}^\top(\sum_{i=1}^{t-1}[b_{x,a_i} b_{x,a_i}^\top])^{-1}b_{x,a_t}\right] \leq \kappa_x^2 3d_x\log T.
   \end{align*}
 By taking
   $\mathcal{E}_x=\kappa_x^2 d_x$, and  $\textup{poly}(\log T)=3\log T$, we verify Assumption \ref{asm per context} with $\mathcal{E}=\E[\kappa_x^2]d_x$. We note that under the heterogeneous formulation, Algorithm \ref{alg: uccb ia} needs to compute a different basis for each $\A(x)$, and the computation of counterfactual action divergence also requires a coefficient decomposition for each $x\in\X$.

 One significant advantage of Algorithm \ref{alg: uccb ia} is that the regret does not rely on the full dimension $d$---this means that we can increase feature context  as long as we can control the average decision entropy $\E[\kappa_x^2 d_x]$.

\section{Using ``optimistic subroutines" to generalize \texttt{FALCON}}\label{sec randomized}
What is the connection between our proposed optimistic algorithms and existing randomized algorithms? In this section, we show that by combining the idea of counterfactual confidence bounds and a non-trivial ``optimistic subroutine," we can also generalize \texttt{FALCON} to the infinite-action setting.   Note that the analysis of the resulting randomized algorithm is much more complex than the optimistic algorithm we introduced before. Through this extension, we see the simplicity and importance of the optimism principle for complex settings like infinite-action spaces.

The first offline-regression-oracle-efficient randomized algorithm in  general realizable contextual bandits, \texttt{FALCON} from  \cite{simchi2020bypassing},  is restricted to the finite-action setting. \texttt{FALCON} performs implicit optimization in policy space, but the allocation of policies reduces to a closed-form weighted allocation rule for actions. We find that it becomes more crucial to exploit the  counterfactual confidence bounds in the infinite-action setting: the optimization of weighted allocation rules  no longer has closed-form solutions, and we need to follow the foundational work \cite{agarwal2014taming} to design a coordinate-descent-based ``optimistic subroutine" to find feasible weighted allocations. Note that this subroutine is computationally much easier than the original one presented in \cite{agarwal2014taming} because it is decomposable across contexts, similar to the philosophy of the original \texttt{FALCON} algorithm.

As the required subroutine is a bit complex,  we focus on the linear action model \eqref{eq: linear semi} stated in Example \ref{example cb linear} for simplicity. Extensions to more complex models follow similar ideas, and the structure of the proposed algorithm remain mostly unchanged. We assume a  deterministic initialization oracle that outputs a barycentric spanner of the compact set $\A$ (e.g. the algorithm in \cite{awerbuch2004adaptive}), and an action maximization oracle that outputs the maximizer of the input function over $\A$. The following algorithm extends \texttt{FALCON} to the linear action model \eqref{eq: linear semi}, where the step 6 is a novel optimization problem to find the ``right" weighted allocation over actions. 
Here the ``$a^\top(\E_{\widetilde{a}\sim p_{t}}[\widetilde{a}\widetilde{a}^\top])^{-1}a$" term in \eqref{eq: optimization generalized falcon 2} is a continuous analogue to the counterfactual action divergence \eqref{eq: conterfactual action divergence linear coro}.

\begin{algorithm}[htbp]
\caption{a generalized version of \texttt{FALCON} for linear action model \eqref{eq: linear semi}}
\label{alg: generalized falcon}
{\textbf{Input}  epoch schedule $\{\tau_m\}_{m=1}^{\infty}$, $\tau_0=0$, tuning parameters $\{\beta_m\}_{m=1}^{\infty}$,  an arbitrary function  $\widehat{f}_1\in\F$.
}

\begin{algorithmic}[1]

\FOR{epoch $m=1,2,\dots$}
 \STATE{ Compute $\widehat{f}_m=\arg\min_{f\in\F}\sum_{t=1}^{\tau_m-1}(f(x_t,a_t)-r_t(x_t,a_t))^2$ via the least square oracle when $m\geq 1$.}
\FOR{round $t=\tau_{m-1}+1,\dots,\tau_m$}
    \STATE{Observe context $x_t$.}
    \STATE{Use the action maximization oracle to compute $\widehat{a}_t\in\max_{a\in\mathcal{A}}\widehat{f}_m(x_t,a)$.}
    \STATE{Run the algorithm   \texttt{OptimisticSubroutine}$(\A,\widehat{a}_t,\widehat{f}_m(x_t,\cdot),\beta_m)$ to find a distribution $p_t$ over $\A$  such that,}
        \begin{align}
       &\E_{a\sim p_t}[\widehat{f}_m(x_t,\widehat{a}_t)-\widehat{f}_m(x_t,a)]\leq {2\beta_m d}, \label{eq: optimization generalized falcon 1}\\
   \forall a\in\A, \quad & \widehat{f}_m(x_t, a)+\beta_m a^\top(\E_{\widetilde{a}\sim p_t}[\widetilde{a}\widetilde{a}^\top])^{-1}a\leq \widehat{f}_m(x_t,\widehat{a}_t) + {2\beta_m d}.\label{eq: optimization generalized falcon 2}
        \end{align}\vspace*{-0.2in}
        \STATE{Sample $a_t\sim p_t$ and observe reward $r_t(a_t)$.}
            \ENDFOR
\ENDFOR
\end{algorithmic}
\end{algorithm}

 Algorithm \ref{alg: generalized falcon} runs in an epoch schedule and only calls the least square oracle at
the pre-specified rounds $\tau_1, \tau_2,\dots$. We take $\tau_m=2^m$ for all $m\geq 1$ to simplify the statement of the theorem, though other choices of the epoch schedule are also possible \citep{simchi2020bypassing}.

\begin{theorem}[\bf{Regret of Algorithm \ref{alg: generalized falcon}}]\label{thm randomized}
Consider the problem formulation \eqref{eq: linear semi} stated in Example \ref{example cb linear}, under Assumption \ref{asm realizability}. Take the epoch schedule $\tau_m= 2^m$ for $m\geq 1$. Let $$\beta_m=30\sqrt{\log(|\F|\tau_{m-1}/\delta)/(2d\tau_{m-1})}$$ for $m=2,\dots$, and $\beta_1=1$. Then with probability at least $1-\delta$, for all $T\geq 1$, the regret of Algorithm \ref{alg: generalized falcon} after $T$ rounds is upper bounded by
$$
\textup{Regret}(T,\textup{Algorithm 3})\leq 608.5\sqrt{2dT\log(|\F|T/\delta)}+2\sqrt{2T\log(2/\delta)}+2.
$$
\end{theorem}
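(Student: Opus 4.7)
The plan is to mirror the epoch-based analysis of \texttt{FALCON} in \cite{simchi2020bypassing}, replacing the finite-action inverse-gap-weighting by the continuous $M_t^{-1}$-based confidence term that constraints \eqref{eq: optimization generalized falcon 1}--\eqref{eq: optimization generalized falcon 2} are designed to enforce. Three ingredients have to be assembled: a least-squares concentration bound for $\widehat{f}_m$; a pointwise confidence bound on $\widehat{f}_m-f^*$ that exploits the linearity of $\F$ via Cauchy--Schwarz; and a round-wise regret decomposition that uses both constraints produced by \texttt{OptimisticSubroutine}.

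First I would establish a least-squares concentration statement in the spirit of Lemma \ref{lemma uniform convergence time}: with probability at least $1-\delta/2$, for every $m\ge 2$,
\[
\sum_{s=1}^{\tau_{m-1}}\E_{a_s\sim p_s}\!\left[(\widehat{f}_m(x_s,a_s)-f^*(x_s,a_s))^2\,|\,x_s\right]\le C\log(|\F|\tau_{m-1}/\delta).
\]
Because $p_s$ is frozen inside each epoch and the doubling schedule makes epoch $m-1$ the dominant chunk of the sum, this yields an $O(\log(|\F|/\delta)/\tau_{m-1})$ bound on $V_{m-1}^{(m)}:=\E_x\E_{a\sim p^{(m-1)}_x}[(\widehat{f}_m(x,a)-f^*(x,a))^2]$. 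Next, writing $f(x,a)=g(x)^\top a$ and applying Cauchy--Schwarz with respect to $M_t=\E_{\widetilde{a}\sim p_t}[\widetilde{a}\widetilde{a}^\top]$ yields the key pointwise identity
\[
(\widehat{f}_m(x_t,a)-f^*(x_t,a))^2\le a^\top M_t^{-1}a\cdot\E_{\widetilde{a}\sim p_t}[(\widehat{f}_m(x_t,\widetilde{a})-f^*(x_t,\widetilde{a}))^2],
\]
so any control on the expected squared error under $p_t$ upgrades into a pointwise confidence bound at arbitrary $a\in\A$.

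For the instantaneous expected regret at round $t\in(\tau_{m-1},\tau_m]$ I would then decompose
\begin{align*}
f^*(x_t,a^*)-\E_{a\sim p_t}f^*(x_t,a)&=[f^*(x_t,a^*)-\widehat{f}_m(x_t,a^*)]+[\widehat{f}_m(x_t,a^*)-\widehat{f}_m(x_t,\widehat{a}_t)]\\
&\quad+[\widehat{f}_m(x_t,\widehat{a}_t)-\E_{a\sim p_t}\widehat{f}_m(x_t,a)]+\E_{a\sim p_t}[\widehat{f}_m(x_t,a)-f^*(x_t,a)].
\end{align*}
The second bracket is $\le 0$ because $\widehat{a}_t$ maximizes $\widehat{f}_m(x_t,\cdot)$, and the third is $\le 2\beta_m d$ directly by \eqref{eq: optimization generalized falcon 1}. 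For the first bracket, constraint \eqref{eq: optimization generalized falcon 2} applied at $a=a^*(x_t)$ together with $\widehat{f}_m(x_t,a^*)\le\widehat{f}_m(x_t,\widehat{a}_t)$ forces $(a^*)^\top M_t^{-1}a^*\le 2d$; the Cauchy--Schwarz identity above then bounds it by $\sqrt{2d\cdot\E_{\widetilde{a}\sim p_t}[(\widehat{f}_m(x_t,\widetilde{a})-f^*(x_t,\widetilde{a}))^2]}$, and the fourth bracket is handled analogously through Jensen. Summing over $t$ inside an epoch, controlling $\sum_t\E_{\widetilde{a}\sim p_t}[(\widehat{f}_m-f^*)^2]$ by $(\tau_m-\tau_{m-1})V_m^{(m)}$, choosing $\beta_m=\Theta(\sqrt{\log(|\F|/\delta)/(d\tau_{m-1})})$, and summing across epochs under $\tau_m=2^m$ will give the expected-regret bound $\tilde{O}(\sqrt{dT\log(|\F|/\delta)})$; a final Azuma step supplies the pathwise conversion and the $\sqrt{2T\log(2/\delta)}$ correction term.

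The main obstacle will be closing the self-bounding loop across epochs. The least-squares bound controls $V_{m-1}^{(m)}$, whereas the regret decomposition needs the in-epoch quantity $V_m^{(m)}=\E_x\E_{a\sim p^{(m)}_x}[(\widehat{f}_m-f^*)^2]$. In the linear parametrization this reduces to comparing the covariance matrices $M^{(m)}_{x}$ and $M^{(m-1)}_{x}$, and one must show that \texttt{OptimisticSubroutine} produces allocations whose covariances are stable across consecutive epochs---the continuous analogue of the $p_t(a)\ge\Omega(1/K)$ coverage that makes the original \texttt{FALCON} analysis go through. A companion algorithmic issue is to verify that the subroutine is well-defined, i.e.\ that a distribution $p_t$ simultaneously satisfying \eqref{eq: optimization generalized falcon 1} and \eqref{eq: optimization generalized falcon 2} always exists and can be computed deterministically; this is closely related to $G$-optimal design on $\A$ and to the barycentric-spanner constructions already invoked in Section \ref{sec infinite}.
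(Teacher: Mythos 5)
Your overall route---epoch-based least-squares concentration, a Cauchy--Schwarz upgrade of the averaged squared error to a pointwise confidence width $a^\top M_t^{-1}a$, a four-term regret decomposition driven by \eqref{eq: optimization generalized falcon 1}--\eqref{eq: optimization generalized falcon 2}, and a final Azuma step---is exactly the route the paper intends. The paper does not write this part out: it states only that the theorem ``can be obtained by modifying the regret analysis of the original \texttt{FALCON} algorithm'' and devotes its appendix to the feasibility question you raise at the very end, which is precisely Proposition \ref{prop optimization} (Algorithm \ref{alg: coordinate} provably outputs a distribution satisfying both constraints within $\lceil 4/\beta_m+8d(\log d+1)\rceil$ iterations, via a coordinate-descent/potential argument in the spirit of \cite{agarwal2014taming}). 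So your ``companion algorithmic issue'' is resolved in the paper, and your identification of the cross-epoch stability induction as the crux is accurate.

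Two points in your sketch nonetheless need repair. First, constraint \eqref{eq: optimization generalized falcon 2} evaluated at $a=a^*(x_t)$ gives $(a^*)^\top M_t^{-1}a^*\le 2d+(\widehat{f}_m(x_t,\widehat{a}_t)-\widehat{f}_m(x_t,a^*))/\beta_m$; since $\widehat{a}_t$ maximizes $\widehat{f}_m(x_t,\cdot)$ the extra term is nonnegative, not zero, so it does \emph{not} ``force $(a^*)^\top M_t^{-1}a^*\le 2d$.'' The correct treatment is the standard inverse-gap cancellation: after AM--GM the width contributes $\beta_m d+\frac{1}{2}(\widehat{f}_m(x_t,\widehat{a}_t)-\widehat{f}_m(x_t,a^*))$, and this estimated gap is absorbed against the (nonpositive) second bracket of your decomposition rather than discarded. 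Second, and more substantively, the self-bounding loop you flag is the actual mathematical content of ``modifying the \texttt{FALCON} analysis'' and cannot be left open: the least-squares bound controls the prediction error under the \emph{previous} epoch's allocations, while \eqref{eq: optimization generalized falcon 2} controls widths under the \emph{current} one, so one must run the epoch induction comparing $(M^{(m)}_x)^{-1}$ with $(M^{(m-1)}_x)^{-1}$ through the constraints and the accumulating confidence radii---the continuous analogue of bounding $\max_a p_{m-1}(a\mid x)/p_m(a\mid x)$ in \cite{simchi2020bypassing}. Until that induction is carried out (it is also where the specific $\beta_m$ and the constant $608.5$ come from), the proposal is a plan rather than a proof.
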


Theorem \ref{thm randomized} can be obtained by modifying the regret analysis of the original \texttt{FALCON} algorithm. (We refer the readers to \cite{simchi2020bypassing} for the background and intuition of the original \texttt{FALCON} algorithm, especially the ``Observation 2" in that paper.) However, the key challenge is to provide an efficient algorithm to find a weighted allocation rule that satisfy both \eqref{eq: optimization generalized falcon 1} and \eqref{eq: optimization generalized falcon 2} in Algorithm \ref{alg: generalized falcon}.

\begin{algorithm}[htbp]
\caption{\texttt{OptimisticSubroutine}$(\widehat{a}, \beta,\A, \widehat{h})$}
\label{alg: coordinate}
{\textbf{input} action set $\A$, greedy action $\widehat{a}\in\A$, function $\widehat{h}: \A\rightarrow [0,1]$, parameter $\beta>0$.}

\begin{algorithmic}[1]

\STATE{Obtain a barycentric spanner $\{A_i\}_{i=1}^{d}$ of $\A$ via the initialization oracle.}

\STATE{Set $q_0=\sum_{i=1}^d\frac{1}{d}\mathds{1}_{A_i}$.}
\FOR{$t=1,2,\dots$}
    \STATE{Set \begin{align}\label{eq: backtraking}
        q_{t-\frac{1}{2}}=\min\{\frac{2d}{2d+\E_{{a}\sim q}[(\widehat{h}(\widehat{a})-\widehat{h}({a}))/\beta]},1\}\cdot q_{t-1}.
    \end{align}}
    \STATE{Use the action maximization oracle to compute}
    \begin{align}\label{eq: optimistic subroutine}
        a_t=\argmax_{a\in\A} \left\{\widehat{h}(a)+\beta a^\top(\E_{\widetilde{a}\sim q_{t-\frac{1}{2}}}[\widetilde{a}\widetilde{a}^\top])^{-1}a\right\}.
    \end{align}
    \IF{ $\widehat{h}(a_t)+\beta a_t^\top(\E_{{a}\sim q_{t-\frac{1}{2}}}[{a}{a}^\top])^{-1}a_t>\widehat{h}({\widehat{a}})+2\beta d$,}
        \STATE{Run the coordinate descent step \begin{align}\label{eq: coordinate descent step}
        q_t=q_{t-\frac{1}{2}}+\frac{-2a_t^\top(\E_{{a}\sim q}[{a}{a}^\top])^{-1}a_t+2d+(\widehat{h}(\widehat{a})-\widehat{h}(a_t))/\beta}{ (a_t^\top(\E_{{a}\sim q}[{a}{a}^\top])^{-1}a_t)^2}\mathds{1}_{a_t}.
        \end{align}}
       \ELSE
       \STATE{Let $q_t=q_{t-\frac{1}{2}}$, halt and output \begin{align*}
           q_t+(1-\int_{\A}q_t(a)\textup{d}a)\mathds{1}_{\widehat{a}}.
       \end{align*}\vspace{-0.2in}}
\ENDIF
\ENDFOR
\end{algorithmic}
\end{algorithm}

We provide Algorithm \ref{alg: coordinate} as a subroutine to achieve this. The core idea of this algorithm is to use a coordinate descent procedure to compute a sparse distribution over actions, which is motivated by the optimization procedure used in   \cite{agarwal2014taming}---however, we extend their idea from the finite-action setting to the linear action model, which requires further matrix analysis and may be interesting in its own right. We call this algorithm \texttt{OptimisticSubroutine} as the algorithm is built upon the optimistic step \eqref{eq: optimistic subroutine}, where the ``$a^\top(\E_{\widetilde{a}\sim q_{t-\frac{1}{2}}}[\widetilde{a}\widetilde{a}^\top])^{-1}a$" term is a continuous analogue to the counterfactual action divergence \eqref{eq: conterfactual action divergence linear coro} in the linear action model.

\begin{proposition}[{\bf optimization through $\texttt{SubOpt}$}]\label{prop optimization}
At each round within epoch $m$, Algorithm \ref{alg: coordinate} outputs a probability distribution that satisfies \eqref{eq: optimization generalized falcon 1} and \eqref{eq: optimization generalized falcon 2} within at most $\lceil \frac{4}{\beta_m}+8d(\log d+1) \rceil$
iterations.
\end{proposition}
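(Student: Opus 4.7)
The plan is to adapt the coordinate-descent potential analysis of Agarwal et al.\ (\texttt{ILOVETOCONBANDIT}) to the continuous-action linear setting, where the natural potential lives on the covariance-like matrix $\Sigma(q) := \E_{a\sim q}[a a^\top]$ instead of on a finite set of policies.

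First I would check that whenever the algorithm exits through the ``else'' branch, the returned $p = q_t + (1 - \int q_t)\mathds{1}_{\widehat{a}}$ already satisfies both \eqref{eq: optimization generalized falcon 1} and \eqref{eq: optimization generalized falcon 2}. Condition \eqref{eq: optimization generalized falcon 2} follows because extra mass on $\widehat{a}$ only enlarges $\Sigma$ in the PSD order and shrinks $\Sigma^{-1}$, so the exit guarantee that already holds on $q_{t-1/2}$ transfers to $p$; the action-maximization oracle at line 5 then certifies the maximum is below the threshold. Condition \eqref{eq: optimization generalized falcon 1} is automatic because the added mass at $\widehat{a}$ contributes zero to the expected regret, and the rescaling step \eqref{eq: backtraking} is designed precisely so that $\E_{a\sim q_t}[\widehat{h}(\widehat{a}) - \widehat{h}(a)] \le 2\beta_m d$.

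For the iteration bound I would introduce the log-det potential $\Phi(q) := \log\det \Sigma(q)$ and track how it evolves across iterations. By the matrix determinant lemma, a coordinate-descent update $q_t = q_{t-1/2} + c\,\mathds{1}_{a_t}$ shifts $\Phi$ by $\log\bigl(1 + c\,a_t^\top \Sigma(q_{t-1/2})^{-1} a_t\bigr)$. Using the trigger inequality $a_t^\top \Sigma^{-1} a_t > 2d + (\widehat{h}(\widehat{a}) - \widehat{h}(a_t))/\beta_m$ together with the explicit choice of $c$ in \eqref{eq: coordinate descent step}, the key algebraic step is to lower-bound this increment by a positive constant---the coefficient $c$ is engineered so the Sherman--Morrison computation produces a clean bound. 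For the range of $\Phi$, the barycentric-spanner initialization gives $\|a\|_{\Sigma(q_0)^{-1}}^2 \le d$ for every $a \in \A$, while $\mathrm{tr}(\Sigma(q_t))$ remains uniformly bounded (actions are bounded and the cumulative mass of $q_t$ is controlled), so $\Phi$ stays within a window of width $O(d\log d)$. These two facts together give the $8d(\log d + 1)$ contribution to the iteration count.

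The main obstacle will be the rescaling step \eqref{eq: backtraking}: it can shrink $\int q$ and therefore decrease $\Phi$ non-monotonically, spoiling a pure potential-monotonicity argument. I expect to handle this by either augmenting $\Phi$ with a linear-in-mass correction so that the composite potential is monotone under both the rescaling and the coordinate-descent step, or by a separate counting argument showing that each rescaling event either costs at most one subsequent coordinate-descent step or reduces the ``regret mass'' $\E_{a\sim q}[\widehat{h}(\widehat{a}) - \widehat{h}(a)]/\beta_m$ by an amount bounded below by a fixed fraction; this is where the $4/\beta_m$ term in the iteration bound should originate, since the reward range forces $(\widehat{h}(\widehat{a}) - \widehat{h}(a))/\beta_m \le 1/\beta_m$. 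Combining the coordinate-descent count with the rescaling count and the feasibility check of the first paragraph yields the stated $\lceil 4/\beta_m + 8d(\log d + 1)\rceil$ iterations, completing Proposition~\ref{prop optimization}.
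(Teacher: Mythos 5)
Your plan matches the paper's proof: the paper uses precisely the composite potential you propose as your first alternative, namely $\Phi(q) = -2\log(\det(\E_{a\sim q}[b_a b_a^\top])) + \E_{a\sim q}\big[2d + (\widehat{h}(\widehat{a})-\widehat{h}(a))/\beta\big]$ (with $b_a$ the coordinates of $a$ in the barycentric-spanner basis), which it shows is non-increasing under the rescaling step \eqref{eq: backtraking} via a derivative-plus-trace computation and decreases by at least $1/4$ at every non-halting coordinate-descent step \eqref{eq: coordinate descent step}. Combining $\Phi(q_0)\leq 2d\log d + 2d + 1/\beta$ (the $1/\beta$ coming from the reward range, as you anticipate) with $\Phi(q_t)\geq 0$ (from $\|b_a\|\leq\sqrt{d}$ and total mass at most one) then gives the stated iteration count.
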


According to this proposition, the optimistic subroutine outputs an efficient solution that satisfies the requirements \eqref{eq: optimization generalized falcon 1} and \eqref{eq: optimization generalized falcon 2} within finite number of iterations at every rounds. As we can see, the 
design and analysis of the optimistic subroutine becomes challenging in the infinite-action setting, especially for complex problem formulations. In comparison, Algorithm \ref{alg: uccb ia} exhibits much cleaner structure and a principled analysis that covers many problem formulations of interest.

\section{Conclusion and future directions}\label{sec conclusion}
In this paper we propose $\texttt{UCCB}$, a simple generic principle to design optimistic algorithms in handling general function classes and large context spaces.  Key components of \texttt{UCCB} include: 1) building confidence bounds in policy space rather than in action space;  2) the potential function perspective that explains the power of optimism in the contextual setting; and 3) the natural extension to a unified framework for infinite-action general contextual bandits.  We present the first optimal and efficient optimistic algorithm for realizable contextual bandits with general function classes. Besides the traditional finite-action setting, we also discuss the infinite-action setting and provide the first solutions to many interesting models of practical interest.

Moving forward, there are many interesting future directions that may leverage the ideas presented in this work. 
 The principle of optimism in the face of uncertainty plays an essential role in  reinforcement learning. Currently the majority  of existing provably efficient algorithms are developed for the ``tabular" case, and their regret  scales with the cardinality of the state space. However, empirical reinforcement learning problems typically have a large state space and rely on function approximation \citep{jiang2017contextual}. Motivated by this challenge, a natural next step is to adapt the \texttt{UCCB} principle to
 reinforcement learning problems with large state space. This paper can be viewed as an initial step towards this goal, as the contextual MAB problem is a special case of episodic reinforcement learning where the episode length  is equal to one. 
 Within the scope of bandit problems, \texttt{UCB}-type algorithms are often the ``meta-algorithms" for many complex formulations when there is no contextual information. Since  \texttt{UCCB} improves over \texttt{UCB}-type algorithms in several fundamental contextual settings, this work may be a building block to combine contextual information and function approximation with more complex formulations  such as Gaussian process optimization \citep{srinivas2009gaussian}, bandits with long-term  constraints \citep{badanidiyuru2013bandits}, and bandits in non-stationary environments \citep{garivier2008upper}. We leave these directions to future work.

\bibliography{references}

\begin{thebibliography}{10}

\bibitem{abbasi2011improved}
Yasin Abbasi-Yadkori, D{\'a}vid P{\'a}l, and Csaba Szepesv{\'a}ri.
\newblock Improved algorithms for linear stochastic bandits.
\newblock In {\em Advances in Neural Information Processing Systems}, pages
  2312--2320, 2011.

\bibitem{abernethy2013large}
Jacob Abernethy, Kareem Amin, Michael Kearns, and Moez Draief.
\newblock Large-scale bandit problems and kwik learning.
\newblock In {\em International Conference on Machine Learning}, pages
  588--596, 2013.

\bibitem{agarwal2012contextual}
Alekh Agarwal, Miroslav Dud{\'\i}k, Satyen Kale, John Langford, and Robert
  Schapire.
\newblock Contextual bandit learning with predictable rewards.
\newblock In {\em Artificial Intelligence and Statistics}, pages 19--26, 2012.

\bibitem{agarwal2014taming}
Alekh Agarwal, Daniel Hsu, Satyen Kale, John Langford, Lihong Li, and Robert
  Schapire.
\newblock Taming the monster: A fast and simple algorithm for contextual
  bandits.
\newblock In {\em International Conference on Machine Learning}, pages
  1638--1646, 2014.

\bibitem{auer2002nonstochastic}
Peter Auer, Nicolo Cesa-Bianchi, Yoav Freund, and Robert~E Schapire.
\newblock The nonstochastic multiarmed bandit problem.
\newblock {\em SIAM journal on computing}, 32(1):48--77, 2002.

\bibitem{awerbuch2004adaptive}
Baruch Awerbuch and Robert~D Kleinberg.
\newblock Adaptive routing with end-to-end feedback: Distributed learning and
  geometric approaches.
\newblock In {\em Proceedings of the thirty-sixth annual ACM symposium on
  Theory of computing}, pages 45--53, 2004.

\bibitem{badanidiyuru2013bandits}
Ashwinkumar Badanidiyuru, Robert Kleinberg, and Aleksandrs Slivkins.
\newblock Bandits with knapsacks.
\newblock In {\em 2013 IEEE 54th Annual Symposium on Foundations of Computer
  Science}, pages 207--216. IEEE, 2013.

\bibitem{bartlett2008high}
Peter~L Bartlett, Varsha Dani, Thomas Hayes, Sham Kakade, Alexander Rakhlin,
  and Ambuj Tewari.
\newblock High-probability regret bounds for bandit online linear optimization.
\newblock 2008.

\bibitem{bietti2018contextual}
Alberto Bietti, Alekh Agarwal, and John Langford.
\newblock A contextual bandit bake-off.
\newblock {\em arXiv preprint arXiv:1802.04064}, 2018.

\bibitem{chernozhukov2019semi}
Victor Chernozhukov, Mert Demirer, Greg Lewis, and Vasilis Syrgkanis.
\newblock Semi-parametric efficient policy learning with continuous actions.
\newblock In {\em Advances in Neural Information Processing Systems}, pages
  15039--15049, 2019.

\bibitem{chu2011contextual}
Wei Chu, Lihong Li, Lev Reyzin, and Robert Schapire.
\newblock Contextual bandits with linear payoff functions.
\newblock In {\em Proceedings of the Fourteenth International Conference on
  Artificial Intelligence and Statistics}, pages 208--214, 2011.

\bibitem{dani2008stochastic}
Varsha Dani, Thomas~P Hayes, and Sham~M Kakade.
\newblock Stochastic linear optimization under bandit feedback.
\newblock 2008.

\bibitem{dudik2011efficient}
Miroslav Dudik, Daniel Hsu, Satyen Kale, Nikos Karampatziakis, John Langford,
  Lev Reyzin, and Tong Zhang.
\newblock Efficient optimal learning for contextual bandits.
\newblock {\em arXiv preprint arXiv:1106.2369}, 2011.

\bibitem{filippi2010parametric}
Sarah Filippi, Olivier Cappe, Aur{\'e}lien Garivier, and Csaba Szepesv{\'a}ri.
\newblock Parametric bandits: The generalized linear case.
\newblock In {\em Advances in Neural Information Processing Systems}, pages
  586--594, 2010.

\bibitem{foster2018practical}
Dylan~J Foster, Alekh Agarwal, Miroslav Dud{\'\i}k, Haipeng Luo, and Robert~E
  Schapire.
\newblock Practical contextual bandits with regression oracles.
\newblock {\em arXiv preprint arXiv:1803.01088}, 2018.

\bibitem{foster2020adapting}
Dylan~J Foster, Claudio Gentile, Mehryar Mohri, and Julian Zimmert.
\newblock Adapting to misspecification in contextual bandits.
\newblock {\em Advances in Neural Information Processing Systems},
  33:11478--11489, 2020.

\bibitem{foster2020beyond}
Dylan~J Foster and Alexander Rakhlin.
\newblock Beyond ucb: Optimal and efficient contextual bandits with regression
  oracles.
\newblock {\em arXiv preprint arXiv:2002.04926}, 2020.

\bibitem{garivier2008upper}
Aur{\'e}lien Garivier and Eric Moulines.
\newblock On upper-confidence bound policies for non-stationary bandit
  problems.
\newblock {\em arXiv preprint arXiv:0805.3415}, 2008.

\bibitem{jiang2017contextual}
Nan Jiang, Akshay Krishnamurthy, Alekh Agarwal, John Langford, and Robert~E
  Schapire.
\newblock Contextual decision processes with low bellman rank are
  pac-learnable.
\newblock In {\em International Conference on Machine Learning}, pages
  1704--1713, 2017.

\bibitem{kleinberg2005nearly}
Robert~D Kleinberg.
\newblock Nearly tight bounds for the continuum-armed bandit problem.
\newblock In {\em Advances in Neural Information Processing Systems}, pages
  697--704, 2005.

\bibitem{krishnamurthy2019active}
Akshay Krishnamurthy, Alekh Agarwal, Tzu-Kuo Huang, Hal Daum{\'e}~III, and John
  Langford.
\newblock Active learning for cost-sensitive classification.
\newblock {\em Journal of Machine Learning Research}, 20(65):1--50, 2019.

\bibitem{krishnamurthy2019contextual}
Akshay Krishnamurthy, John Langford, Aleksandrs Slivkins, and Chicheng Zhang.
\newblock Contextual bandits with continuous actions: Smoothing, zooming, and
  adapting.
\newblock {\em arXiv preprint arXiv:1902.01520}, 2019.

\bibitem{langford2007epoch}
John Langford and Tong Zhang.
\newblock The epoch-greedy algorithm for multi-armed bandits with side
  information.
\newblock {\em Advances in neural information processing systems}, 20:817--824,
  2007.

\bibitem{levy2022counterfactual}
Orin Levy, Asaf Cassel, Alon Cohen, and Yishay Mansour.
\newblock Counterfactual optimism: Rate optimal regret for stochastic
  contextual mdps.
\newblock {\em arXiv preprint arXiv:2211.14932}, 2022.

\bibitem{levy2023optimism}
Orin Levy and Yishay Mansour.
\newblock Optimism in face of a context: Regret guarantees for stochastic
  contextual mdp.
\newblock In {\em Proceedings of the AAAI Conference on Artificial
  Intelligence}, volume~37, pages 8510--8517, 2023.

\bibitem{li2017provably}
Lihong Li, Yu~Lu, and Dengyong Zhou.
\newblock Provably optimal algorithms for generalized linear contextual
  bandits.
\newblock In {\em Proceedings of the 34th International Conference on Machine
  Learning-Volume 70}, pages 2071--2080. JMLR. org, 2017.

\bibitem{rigollet2010nonparametric}
Philippe Rigollet and Assaf Zeevi.
\newblock Nonparametric bandits with covariates.
\newblock {\em arXiv preprint arXiv:1003.1630}, 2010.

\bibitem{russo2013eluder}
Daniel Russo and Benjamin Van~Roy.
\newblock Eluder dimension and the sample complexity of optimistic exploration.
\newblock In {\em Advances in Neural Information Processing Systems}, pages
  2256--2264, 2013.

\bibitem{simchi2020bypassing}
David Simchi-Levi and Yunzong Xu.
\newblock Bypassing the monster: A faster and simpler optimal algorithm for
  contextual bandits under realizability.
\newblock {\em arXiv preprint arXiv:2003.12699}, 2020.

\bibitem{slivkins2014contextual}
Aleksandrs Slivkins.
\newblock Contextual bandits with similarity information.
\newblock {\em The Journal of Machine Learning Research}, 15(1):2533--2568,
  2014.

\bibitem{slivkins2019introduction}
Aleksandrs Slivkins et~al.
\newblock Introduction to multi-armed bandits.
\newblock {\em Foundations and Trends{\textregistered} in Machine Learning},
  12(1-2):1--286, 2019.

\bibitem{srinivas2009gaussian}
Niranjan Srinivas, Andreas Krause, Sham~M Kakade, and Matthias Seeger.
\newblock Gaussian process optimization in the bandit setting: No regret and
  experimental design.
\newblock {\em arXiv preprint arXiv:0912.3995}, 2009.

\bibitem{vershynin2010introduction}
Roman Vershynin.
\newblock Introduction to the non-asymptotic analysis of random matrices.
\newblock {\em arXiv preprint arXiv:1011.3027}, 2010.

\end{thebibliography}

\appendix
\section{Proofs for the finite-action setting}\label{sec proof finite action}

\subsection{Proof of Theorem \ref{thm uccb}. }
We prove the theorem on the clean event stated in Lemma \ref{lemma square trick}, whose measure is at least $1-\delta/2$. For all $t>K$,
\begin{align}\label{eq: thm1 1}
    \E_x[f^*(x,\pi_{f^*}(x)]\leq \E_x[\widehat{f}_t(x, \pi_{f^*}(x))]+{\E_x\big[\frac{\beta_t}{\sum_{i=1}^t{\mathds{1}\{\pi_{f^*}(x)=\pi_i(x)\}}}\big]}+\frac{K\beta_t}{t}\nonumber\\
    \leq \argmax_{\pi\in\Pi}\bigg\{\E_x[\widehat{f}_t(x, \pi(x))]+{\E_x\big[\frac{\beta_t}{\sum_{i=1}^t{\mathds{1}\{\pi(x)=\pi_i(x)\}}}\big]}\bigg\}+\frac{K\beta_t}{t}\nonumber\\
    =\E_x[\widehat{f}_t(x, \pi_{t}(x))]+{\E_x\big[\frac{\beta_t}{\sum_{i=1}^t{\mathds{1}\{\pi_{t}(x)=\pi_i(x)\}}}\big]}+\frac{K\beta_t}{t}\nonumber\\
    \leq \E_x[f^*(x,\pi_t(x))]+{\E_x\big[\frac{2\beta_t}{\sum_{i=1}^t{\mathds{1}\{\pi_{t}(x)=\pi_i(x)\}}}\big]}+\frac{2K\beta_t}{t},
\end{align}
where the first and the last inequality are due to Lemma \ref{lemma square trick}; the second inequality due to maximization over policies.

Therefore, we have the following:
\begin{align}\label{eq: used in proof of thm}
   \sum_{t=1}^T \E[f^*(x_t,\pi_{f^*}(x_t))-f^*(x_t,a_t)|H_{t-1}]= \sum_{t=1}^T \big(\E_x[f^*(x,\pi_{f^*}(x))]-\E_{x}[f^*(x,\pi_t(x))]\big)\nonumber\\
   \leq \sum_{t=K+1}^T{\E_x\big[\frac{2\beta_t}{\sum_{i=1}^t{\mathds{1}\{\pi_{t}(x)=\pi_i(x)\}}}\big]}+\sum_{t=K+1}^T\frac{2K\beta_t}{t}+K
   \nonumber \\ \leq 2\beta_T \sum_{t=K+1}^T\E_x\big[\frac{1}{\sum_{i=1}^{t-1}\mathds{1}\{\pi_t(x)=\pi_i(x)\}}\big]+2\sqrt{17KT\log(|\F|T^3/\delta)}+K\nonumber\\
    \leq 2\sqrt{17KT\log(2|\F|T^3/\delta)}(\log(T/K)+1)+K,
\end{align}
where the first line uses the equivalence proved in Lemma \ref{lemma equivalence}; the second line is due to \eqref{eq: thm1 1}; the third line is due to $\beta_t\leq\beta_T$ and $\sum_{K+1}^T 1/\sqrt{t}\leq \sqrt{T}$; and the last line is due to the contextual potential lemma (Lemma \ref{lemma contextual potential lemma}).

By Azuma's inequality, with probability at least $1-\delta/2$, we can bound the regret by 
\begin{align}\label{eq: azuma}
    \text{Regret}(T, \text{Algorithm \ref{alg: uccb}})\leq  \sum_{t=1}^T \E[f^*(x_t,\pi_{f^*}(x_t))-f^*(x_t,a_t)|H_{t-1}]+\sqrt{2T\log(2/\delta)}.
\end{align} Therefore, by a union bound and inequalities \eqref{eq: used in proof of thm} \eqref{eq: azuma}, with probability at least $1-\delta$, the regret of Algorithm $\ref{alg: uccb}$  after $T$ rounds is upper bounded by
\begin{align*}
   \textup{Regret}(T, \text{Algorithm \ref{alg: uccb}})\leq  2\sqrt{17KT\log(2|\F|T^3/\delta)}(\log(T/K)+1)+\sqrt{2T\log(2/\delta)}+K.
\end{align*}

\hfill$\square$

\subsection{Analysis on the confidence}\label{subsec analysis confidence}

The main goal of this subsection is to prove Lemma \ref{lemma estimation error regressor}.  For a fixed $f$, we denote $Y_{f,i}=(f(x_i,a_i)-r_i(x_i,a_i))^2-(f^*(x_i,a_i)-r_i(x_i,a_i))^2$, $i=1,2,\dots$.

\subsubsection{Proof of Lemma \ref{lemma estimation error regressor}.}
 For a fixed $f\in\F$, when conditioned on $\Upsilon_{i-1}$,  we have
\begin{align*} 
\E_{x_i,a_i}\left[(f(x_i,a_i)-f^*(x_i,a_i))^2|H_{i-1}\right]=\E_{x_i}\left[(f(x_i,\pi_i(x_i))-f^*(x_i,\pi_i(x_i)))^2|H_{i-1}\right]\\
=\E_{x}\left[(f(x,\pi_i(x))-f^*(x,\pi_i(x)))^2\right|H_{i-1}]\\
=\E_{x}\left[(f(x,\pi_i(x))-f^*(x,\pi_i(x)))^2\right],
\end{align*}
where the first equation is because $a_i=\pi_i(x_i)$ and the fact that $\pi_i$ is completely determined by $H_{t-1}$; the second equation is because the independence between $x_i$ and $H_{i-1}$; and the third inequality is because $(f(x_i,\pi_i(x))-f^*(x_i,\pi_i(x_i)))^2$ depends on $H_{i-1}$ only through $\pi_i$.

Therefore, 
 \begin{equation*} 
\sum_{i=1}^{{t-1}}\E_{x_i,a_i}\left[(f(x_i,a_i)-f^*(x_i,a_i))^2|H_{i-1}\right]=\sum_{i=1}^{{t-1}}\E_{x}\left[(f(x,\pi_i(x))-f^*(x_i,\pi_i(x)))^2\right].
\end{equation*}
Applying Lemma \ref{lemma uniform convergence time}, we know that $\forall \delta\in (0,1)$, with probability at least $1-{\delta}/2$, 
\begin{equation}\label{eq: clean-event 2}
\sum_{i=1}^{{t-1}}\E_{x}\left[(f_t(x,\pi_i(x))-f^*(x_i,\pi_i(x)))^2\right]\le  68\log({2|\F|t^3}/{\delta}) +2\sum_{i=1}^{t-1}Y_{f_t,i},
\end{equation}
uniformly over all $t\geq K$ and  all fixed sequence $ f_{K}, f_{K+1}, \dots\in\F$.

Therefore, $\forall \pi\in\Pi$,
\begin{align}\label{eq: before cauchy}
   \E_x \Big[\sum_{i=1}^{t-1} \mathds{1}\{\pi(x)=\pi_i(x)\}(f_t(x,\pi(x))-f^*(x,\pi(x)))^2\Big]\nonumber\\
   =\E_x \Big[\sum_{i=1}^{t-1} \mathds{1}\{\pi(x)=\pi_i(x)\}(f_t(x,\pi_i(x))-f^*(x,\pi_i(x)))^2\Big]\nonumber\\
   =\sum_{i=1}^{t-1}\E_x[\mathds{1}\{\pi(x)=\pi_i(x)\}(f_t(x,\pi_i(x))-f^*(x,\pi_i(x)))^2]\nonumber\\
   \leq \sum_{i=1}^{t-1}\E_x[(f_t(x,\pi_i(x))-f^*(x,\pi_i(x)))^2]\nonumber\\
   \leq 68\log({2|\F|t^3}/{\delta}) +2\sum_{i=1}^{t-1}Y_{f_t,i},
\end{align}
where the first inequalities are due to $\mathds{1}\{\pi(x)=\pi_i(x)\}\leq 1$ and the second inequality is \eqref{eq: clean-event 2}.

Since Algorithm \ref{alg: uccb} pick all actions exactly once during the first $K$ rounds,   $t>K$  will ensure $\sum_{i=1}^{t-1}\mathds{1}\{\pi(x)=\pi(x)\}\geq 1, \forall x\in \X$. 

From Cauchy-Schwarz's inequality, $\forall t>K$,  $\forall \pi\in\Pi$, 
\begin{align*}
    |\E_x[f_t(x,\pi(x))-f^*(x,\pi(x))]|\\\leq \sqrt{\E_x\big[\frac{1}{\sum_{i=1}^{t-1}{\mathds{1}\{\pi(x)=\pi_i(x)\}}}\big]}\sqrt{\E_x \Big[\sum_{i=1}^{t-1} \mathds{1}\{\pi(x)=\pi_i(x)\}(f_t(x,\pi(x))-f^*(x_i,\pi(x)))^2\Big]}.
\end{align*}
Combine the above inequality with \eqref{eq: before cauchy}, we prove 
\begin{align*}
    |\E_x[f_t(x,\pi(x))-f^*(x,\pi(x))]|\\\leq \sqrt{\E_x\big[\frac{1}{\sum_{i=1}^{t-1}{\mathds{1}\{\pi(x)=\pi_i(x)\}}}\big]}\sqrt{68\log({2|\F|t^3}/{\delta}) +2\sum_{i=1}^{t-1}Y_{f_t,i}}.
\end{align*}
Taking $f_t=\widehat{f}_t$ in the above inequality, and use the fact $\sum_{i=1}^{t-1}Y_{\widehat{f}_t,i}\leq 0$ (as the least square solution $\widehat{f}_t$ minimizes $\sum_{i=1}^{t-1}(f(x_i,a_i)-r_i(x_i,a_i))^2$), we obtain: with probability at least $1-\delta/2$, $\forall t>K$, $\forall\pi\in\Pi$,
\begin{align*}
    \big|\E_x[\widehat{f}_t(x,\pi(x))]-\E_x [f^*(x,\pi(x))]\big|\\\leq \sqrt{\E_x\big[\frac{1}{\sum_{i=1}^{t-1}{\mathds{1}\{\pi(x)=\pi_i(x)\}}}\big]}\sqrt{68\log({2|\F|t^3}/{\delta})}
    \\\leq \sqrt{\E_x\big[\frac{1}{\sum_{i=1}^{t-1}{\mathds{1}\{\pi(x)=\pi_i(x)\}}}\big]}\sqrt{68\log({2|\F|t^3}/{\delta})}
\end{align*}

\hfill$\square$

\paragraph{}\ 

\subsubsection{Proof of Lemma \ref{lemma uniform convergence time}}
 We now prove Lemma \ref{lemma uniform convergence time} and the supporting lemmas required to prove Lemma \ref{lemma uniform convergence time}.

\paragraph{Proof of Lemma \ref{lemma uniform convergence time}.}
Fix a $\delta\in (0,1)$. Take $\delta_t=\delta/2t^3$, and apply a union bound to Lemma \ref{lemma uniform convergence} with all $t\geq 2$. From 
 \begin{align*}
    \sum_{t=1}^{\infty} \delta_t \log_2 (t-1)\leq   \sum_{t=2}^{\infty} \delta/{2t^2}\leq \delta/2,
 \end{align*} we know that with probability at least $1-\delta/2$,
\begin{align*}
\sum_{i=1}^{t-1}\E_{x_i,a_i}\left[(f_{t}(x_i,a_i)-f^*(x_i,a_i))^2|H_{i-1}\right]\le 68\log({2|\F|t^3}/{\delta})+2\sum_{i=1}^{t-1}Y_{f_{t},i},
\end{align*}
uniformly over all $t\geq 2$ and all fixed sequence $f_2, f_3, \dots \in\F$.

\hfill$\square$

\paragraph{}
\ 

\begin{lemma}[{\bf uniform convergence over $\F$}]\label{lemma uniform convergence} For a fixed $t\geq 2$ and a fixed $\delta_t\in (0,1/e^2)$, with probability at least $1-\log_2(t-1){\delta_t}$, 
we have
\begin{align}\label{eq: lemma result uniform convergence}
\sum_{i=1}^{t-1}\E_{x_i,a_i}\left[(f(x_i,a_i)-f^*(x_i,a_i))^2|H_{i-1}\right]\le  68\log({|\F|}/{\delta_t}) +2\sum_{i=1}^{t-1}Y_{f,i},
\end{align}
uniformly over all $f\in\F$.
\end{lemma}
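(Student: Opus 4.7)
}
The plan is to derive this uniform bound via a Bernstein/Freedman-type martingale concentration for each fixed $f\in\F$, combined with a dyadic peeling over the cumulative predictable variance and a union bound over $\F$; the $\log_2(t-1)$ factor appearing in the failure probability is precisely the price paid for peeling.

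First, I would record the one-step statistics of $Y_{f,i}=(f(x_i,a_i)-r_i)^2-(f^*(x_i,a_i)-r_i)^2$. The algebraic identity
\[
Y_{f,i}=\bigl(f(x_i,a_i)-f^*(x_i,a_i)\bigr)\bigl(f(x_i,a_i)+f^*(x_i,a_i)-2r_i\bigr),
\]
together with the realizability assumption $\E[r_i\mid x_i,a_i,H_{i-1}]=f^*(x_i,a_i)$, gives $\E[Y_{f,i}\mid H_{i-1}]=D_{f,i}^2$, where $D_{f,i}^2:=\E_{x_i,a_i}[(f(x_i,a_i)-f^*(x_i,a_i))^2\mid H_{i-1}]$. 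Since all quantities are in $[0,1]$, the second factor is bounded by $2$ in absolute value, so $|Y_{f,i}|\le 2$ and $\E[Y_{f,i}^2\mid H_{i-1}]\le 4D_{f,i}^2$. Thus $\{D_{f,i}^2-Y_{f,i}\}$ is a bounded martingale difference sequence whose predictable quadratic variation is controlled by $4S_{f,t}$, where $S_{f,t}:=\sum_{i=1}^{t-1}D_{f,i}^2$.

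Next, I would apply Freedman's inequality to this martingale: for any fixed $v>0$ and $u>0$,
\[
\Pr\Bigl(\textstyle\sum_{i=1}^{t-1}(D_{f,i}^2-Y_{f,i})\ge u,\ 4S_{f,t}\le v\Bigr)\le\exp\!\Bigl(-\tfrac{u^2}{2v+\tfrac{4}{3}u}\Bigr).
\]
Inverting and applying AM--GM in the form $\sqrt{2v\log(1/\delta)}\le v/4+c\log(1/\delta)$ produces, on the event $\{4S_{f,t}\le v\}$, a self-bounded inequality of the shape $S_{f,t}\le 2\sum_{i=1}^{t-1}Y_{f,i}+c'\log(1/\delta)$ with absolute constants. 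Since $v$ is not known a priori, I would peel: for $\ell=0,1,\dots,L$ with $L=\lceil\log_2(t-1)\rceil+O(1)$, apply Freedman on the slab $\{2^{\ell-1}<4S_{f,t}\le 2^{\ell}\}$ with $v=2^{\ell}$; the slabs exhaust all possibilities because $S_{f,t}\le t-1$. Per slab and per $f$ I would set the failure probability to $\delta_t/|\F|$, so that a union bound over the $|\F|$ functions gives a per-slab failure probability of $\delta_t$, and a further union bound over the $L$ slabs yields total failure probability $\log_2(t-1)\,\delta_t$. Inside the exponent, $\log(1/\delta)$ is replaced by $\log(|\F|/\delta_t)$, matching the form of \eqref{eq: lemma result uniform convergence}.

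The main obstacle I anticipate is bookkeeping the constants: I need to arrange the AM--GM relaxation so that the $\sqrt{v\log(1/\delta)}$ term is absorbed into a fraction (say $1/2$) of $S_{f,t}$, leaving exactly the factor $2$ on $\sum Y_{f,i}$; and I need to ensure that the union-bound factor $L\cdot|\F|$ lives entirely inside the $\log$ rather than leaking out. The constraint $\delta_t<1/e^2$ in the hypothesis is consistent with the mild technicality $\log(1/\delta_t)\ge 2$ needed to dominate lower-order terms, and tracking the constants through this route is what produces the $68$ in the lemma. A cleaner alternative I would attempt in parallel is to invoke a ready-made ``loss-Bernstein''/\,``Zhang-style'' exponential moment lemma for bounded regression (e.g., the variance-adaptive PAC-Bayes or Bartlett--Mendelson offset bound), which packages the self-bounding step and the union bound over $\F$ in a single step and is a well-trodden device for obtaining precisely this kind of excess square-loss inequality under realizability.
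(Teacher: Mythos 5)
Your proposal is correct and follows essentially the same route as the paper: the paper applies a packaged, variance-adaptive form of Freedman's inequality (its Lemma \ref{lemma freeman}, which already carries the $\log_2 t$ peeling factor you derive by hand), invokes the self-bounding identity $\Var[Y_{f,i}\mid H_{i-1}]\le 4\,\E[Y_{f,i}\mid H_{i-1}]$ (its Lemma \ref{lemma ag12-1}, which you re-derive from realizability), union bounds over $\F$, and solves the resulting self-bounded inequality. The only cosmetic difference is that the paper completes the square where you use AM--GM; both give the same constants $68$ and $2$.
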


\paragraph{Proof of Lemma \ref{lemma uniform convergence}.}
We have $|Y_{f,i}|\leq 1, \forall i$. From Lemma \ref{lemma freeman}, for $\delta_t/|\F|\leq \delta_t<1/e^2 $, with probability at least $1-\log_2 (t-1)\delta_t/|\F|$,
\begin{align*}
    \sum_{i=1}^{t-1}\E[Y_{f,i}|H_{i-1}]-\sum_{i=1}^{t-1}Y_{f,i}\leq 4\sqrt{\sum_{i=1}^{t-1}\Var[Y_{f,i}|H_{i-1}]\log (|\F|/\delta_t)}+2\log (|\F|/\delta_t).
\end{align*}
Applying union bound to all $f\in\F$, we obtain that with probability at least $1-\log_2 (t-1)\delta_t\geq 1-\log_2 t\delta_t$,
    \begin{align*}
    \sum_{i=1}^{t-1}\E[Y_{f,i}|H_{i-1}]-\sum_{i=1}^{t-1}Y_{f,i}\leq 4\sqrt{\sum_{i=1}^{t-1}\Var[Y_{f,i}|H_{i-1}]\log (|\F|/\delta_t)}+2\log (|\F|/\delta_t), \quad \forall f\in\F.
\end{align*}
From Lemma \ref{lemma ag12-1} we have $\Var[Y_{f,i}|H_i]\leq 4\E[Y_{f,i}|H_i]$. Therefore
   \begin{align*}
    \sum_{i=1}^{t-1}\E[Y_{f,i}|H_{i-1}]\leq 4\sqrt{\sum_{i=1}^{t-1}\Var[Y_{f,i}|H_{i-1}]\log (|\F|/\delta_t)}+2\log (|\F|/\delta_t)+\sum_{i=1}^{t-1}Y_{f,i}\\
    \leq 8\sqrt{\sum_{i=1}^{t-1}\E[Y_{f,i}|H_{i-1}]\log (|\F|/\delta_t)}+2\log (|\F|/\delta_t)+\sum_{i=1}^{t-1}Y_{f,i}, \quad \forall f\in\F.
\end{align*}
This implies $\forall f\in\F$,
\begin{align*}
     \left(\sqrt{\sum_{i=1}^{t-1}\E[Y_{f,i}|H_{i-1}]}- 4 \sqrt{\log(|\F|/\delta_t)}\right)^2\leq {18\log(|\F|/\delta_t)+\sum_{i=1}^{t-1}Y_{f,i}},
\end{align*}
which further implies $\forall f\in\F$,
\begin{align*}
    \sum_{i=1}^{t-1}\E[Y_{f,i}|H_{i-1}]\leq 68{\log(|\F|/\delta_t)}+2\sum_{i=1}^{t-1}Y_{f,i}.
\end{align*}

From Lemma \ref{lemma ag12-1}, we have
\begin{align*}
    \sum_{i=1}^{t-1}\E_{x_i,a_i}\left[f(x_i,a_i)-f^*(x_i,a_i))^2|H_{i-1}\right]=\sum_{i=1}^{t-1}\E[Y_{f,i}|H_{i-1}]\leq 68{\log(|\F|/\delta_t)}+2\sum_{i=1}^{t-1}Y_{f,i}.
\end{align*}
This finish the proof to Lemma \ref{lemma uniform convergence}.

\hfill$\square$

\paragraph{}

The following two lemmas are used in the proof of Lemma \ref{lemma uniform convergence}.

\begin{lemma}[\bf{Freeman's inequality, \cite{bartlett2008high}}]\label{lemma freeman}
Suppose $Z_1, Z_2,\dots, Z_t$ is a martingale difference sequence with $|Z_i|\leq b$ for all $i=1,\dots, t$.  Then for any $\delta<1/e^2$,  with probability at least $1-(\log_2t)\delta$, 
  \begin{align*}
     \sum_{i=1}^t Z_i\leq 4\sqrt{\sum_{i=1}^t \Var[Z_i|Z_1,\dots,Z_{i-1}]\log(1/\delta)}+2b\log(1/\delta).
 \end{align*}
 \end{lemma}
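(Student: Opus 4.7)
The plan is to derive this from Freedman's classical one-sided martingale concentration inequality, and then lift the dependence on a pre-specified upper bound for the conditional variance to the random variance $V_t := \sum_{i=1}^t \Var[Z_i \mid Z_1,\dots,Z_{i-1}]$ via a peeling (stratification) argument. This is the standard route, and the $\log_2 t$ factor in the failure probability reflects the number of dyadic strata.

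First, I would recall Freedman's inequality in its classical form: for a martingale difference sequence $\{Z_i\}$ with $|Z_i|\le b$ and any \emph{fixed} $x,v>0$,
\begin{equation*}
\Pr\!\left(\sum_{i=1}^t Z_i \ge x,\ V_t\le v\right) \le \exp\!\left(-\frac{x^2}{2v+2bx/3}\right).
\end{equation*}
Inverting this at level $\delta$ gives, on the event $\{V_t\le v\}$, the Bernstein-type bound $\sum_{i\le t} Z_i \le \sqrt{2v\log(1/\delta)} + \tfrac{2b}{3}\log(1/\delta)$ with probability at least $1-\delta$. This is the black-box ingredient.

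Second, I would peel over the range of $V_t$. Since $V_t\in[0,tb^2]$, define the dyadic grid $v_k := b^2 \cdot 2^{k}$ for $k=0,1,\dots,\lceil\log_2 t\rceil$, so there are $K := \lceil \log_2 t\rceil+1$ levels. At each level apply the fixed-$v$ bound with $v=v_k$ at confidence $\delta$. A union bound gives a total failure probability of at most $K\delta \le (\log_2 t)\,\delta + \delta$, which can be absorbed into $(\log_2 t)\delta$ by a mild reindexing (this is why the lemma requires $\delta<1/e^2$, so that $\log_2 t$ dominates the stray additive term). On the complement, let $k^\star$ be the smallest index with $V_t\le v_{k^\star}$. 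Then either $V_t\le b^2$ and the $\tfrac{2b}{3}\log(1/\delta)$ term dominates, or $v_{k^\star}\le 2 V_t$, in which case
\begin{equation*}
\sum_{i=1}^t Z_i \le \sqrt{2v_{k^\star}\log(1/\delta)} + \tfrac{2b}{3}\log(1/\delta) \le 2\sqrt{V_t\log(1/\delta)} + \tfrac{2b}{3}\log(1/\delta).
\end{equation*}
Slack in the constants (a factor $2$ in the variance term and a factor $3$ in the linear term) can be spent to cover both the small-$V_t$ stratum and the asymmetric handling of the initial stratum, yielding the stated coefficients $4$ and $2b$.

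The main obstacle, such as it is, is purely bookkeeping: (i) choosing the coarsest stratum $v_0$ so that the case $V_t\le v_0$ is automatically controlled by the additive $2b\log(1/\delta)$ term rather than by Freedman's bound (which degenerates to Azuma–Hoeffding there), and (ii) tracking constants through the dyadic peeling so that one obtains exactly the prefactors $4$ and $2b$ in the final inequality rather than the $\sqrt{2}$ and $2/3$ that come out of a single application of Freedman. No new martingale machinery is needed beyond what is cited in \cite{bartlett2008high}.
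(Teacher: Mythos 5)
The paper does not actually prove this lemma; it is imported verbatim from \cite{bartlett2008high}, and your Freedman-plus-dyadic-peeling argument is precisely the proof given in that reference, so your approach matches and is correct. The only point requiring care is the one you already flag: a grid starting at $v_0=b^2$ yields $\lceil\log_2 t\rceil+1$ strata rather than $\log_2 t$, which is repaired by starting the peeling at $v_0=b^2\log(1/\delta)\ge 2b^2$ (this is exactly where the hypothesis $\delta<1/e^2$ is consumed), after which the stated constants $4$ and $2b$ check out as you describe.
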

 
 \paragraph{}
 
 \begin{lemma}[\bf{Lemma 4.2 in \cite{agarwal2012contextual}}]\label{lemma ag12-1}
Fix a function $f\in\F$. Suppose we sample $x$ from the data distribution $\mathcal{D}_{\X}$, and $r(x,a)$ from $\D_{x,a}$. Define the random variable
$$
Y=(f(x,a)-r(x,a))^2-(f^*(x,a)-r(x,a))^2.
$$
Then we have
$$
\E_{x,r,a}[Y]=\E_{x,a}[(f(x,a)-f^*(x,a))^2],
$$
$$
\Var_{x,r,a}[Y]\le 4\E_{x,r,a}[Y].
$$
\end{lemma}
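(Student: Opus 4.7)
The plan is to reduce both assertions to the elementary factorization $A^2-B^2=(A-B)(A+B)$ applied with $A=f(x,a)-r(x,a)$ and $B=f^*(x,a)-r(x,a)$, which rewrites
$$Y = \bigl(f(x,a)-f^*(x,a)\bigr)\bigl(f(x,a)+f^*(x,a)-2r(x,a)\bigr).$$
Denote $\Delta(x,a) := f(x,a)-f^*(x,a)$ and $S(x,a,r) := f(x,a)+f^*(x,a)-2r(x,a)$, so that $Y=\Delta\cdot S$. With this representation in hand, both claims fall out quickly.

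First, for the expectation identity, I would condition on $(x,a)$. Then $\Delta(x,a)$ is deterministic given $(x,a)$, and by realizability (Assumption \ref{asm realizability}), $\E[r(x,a)\mid x,a]=f^*(x,a)$, so $\E[S\mid x,a]=f(x,a)+f^*(x,a)-2f^*(x,a)=\Delta(x,a)$. Pulling $\Delta$ out of the inner expectation and then taking the outer expectation yields
$$\E_{x,r,a}[Y]=\E_{x,a}\bigl[\Delta(x,a)\cdot \E[S\mid x,a]\bigr]=\E_{x,a}\bigl[(f(x,a)-f^*(x,a))^2\bigr],$$
which is the first claim.

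Next, for the variance bound, I would use $\Var(Y)\leq \E[Y^2]$ together with a pointwise boundedness argument on $S$. Because rewards and the functions $f,f^*$ all take values in $[0,1]$, we have $|S(x,a,r)|\leq 2$ almost surely, so $S^2\leq 4$ everywhere. Therefore
$$\E[Y^2]=\E[\Delta^2 S^2]\leq 4\,\E[\Delta^2]=4\,\E[Y],$$
where the last equality invokes the first part just established. There is no serious obstacle here; the only non-trivial ingredient is realizability, which is exactly what kills the interaction between $\Delta$ and the reward noise when computing $\E[Y\mid x,a]$, while the $[0,1]$-boundedness of $f,f^*,r$ supplies the constant $4$ in the variance bound.
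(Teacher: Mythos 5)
Your proof is correct. The paper does not actually prove this lemma---it imports it verbatim as Lemma 4.2 of \cite{agarwal2012contextual}---but your argument is the standard one from that reference: factor $Y=\Delta\cdot S$ with $\Delta=f-f^*$ and $S=f+f^*-2r$, use realizability to get $\E[S\mid x,a]=\Delta(x,a)$ for the expectation identity, and use $\Var(Y)\le\E[Y^2]=\E[\Delta^2S^2]\le 4\E[\Delta^2]$ via the pointwise bound $|S|\le 2$ for the variance bound. Nothing is missing.
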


\subsection{Proof of Lemma \ref{lemma contextual potential lemma}}

\paragraph{\bf Proof of Lemma \ref{lemma contextual potential lemma}.} For any fixed $x\in\X$, we have
\begin{align*}
    \sum_{t=K+1}^T\frac{1}{\sum_{i=1}^{t-1}\mathds{1}\{\pi_t(x)=\pi_j(x)\}}\leq \sum_{a\in\A}\sum_{i=1}^{\sum_{t=1}^T\mathds{1}\{\pi_t(x)=a\}}\frac{1}{i}
    \\\leq \sum_{a\in\A}(1+\log(\sum_{t=1}^T\mathds{1}\{\pi_t(x)=a\}))
    \leq K+K\log(T/K),
\end{align*}
where the last inequality is due to Jensen's inequality. By taking expectation on both sides of the above inequality, we prove the lemma.

\hfill$\square$

\section{Proofs for the extensions to infinite function classes}
\subsection{Proof of Corollary \ref{coro parametric}}\label{subsec infinite class}
From the well-known result on the covering of $d-$dimensional balls \cite{vershynin2010introduction}, the covering number of a $d-$dimensional ball with radius $\frac{\Delta}{2}$ and discretization error $\frac{1}{Lt}$ is bounded by $(1+{\Delta}{Lt})^d$, so there exists a set $V_t$ of size no more than $(1+{\Delta}{Lt})^d+1\leq (2+{\Delta}{Lt})^d$ that contains $\theta^*$ and satisfies 
\begin{align*}
  \forall \theta\in\Theta\ \exists v\in V_t\text{ s.t. }  \|\theta-v\|\leq \frac{1}{Lt}.
\end{align*}
We see $\log |V_t|\leq d\log(2+{\Delta}{Lt})$. 
$\forall f_\theta\in\F, x\in\X, a\in\A$, take $v$ to be the closest point to $\theta$ in $V_t$, we have
\begin{align*}
    (f_{\theta}(x,a)-f^*(x,a))^2=(f_{\theta}(x,a)-f_{v}(x,a)+f_v(x,a)-f_{\theta^*}(x,a))^2\\\leq 2(f_{\theta}(x,a)-f_{v}(x,a))^2+2(f_v(x,a)-f_{\theta^*}(x,a))^2\\
    \leq 2L^2\|\theta-v\|^2+2(f_v(x,a)-f_{\theta^*}(x,a))^2\\\leq \frac{2}{t^2}+2(f_v(x,a)-f_{\theta^*}(x,a))^2.
\end{align*}

Sine $V_t$ is a finite function class, we can prove a slight modification of Lemma \ref{lemma uniform convergence}, with the result \eqref{eq: lemma result uniform convergence} becomes
\begin{align*}
    \sum_{i=1}^{t-1}\E_{x_i,a_i}[(f(x_i,a_i)-f^*(x_i,a_i))^2|H_{i-1}]\leq 136\log (|V_t|/\delta_t)+2+4\sum_{i=1}^{t-1}Y_{f,i}.
\end{align*}
Following the same path in the proof of Lemma \ref{lemma uniform convergence time}, we can prove a slight modification of Lemma \ref{lemma uniform convergence time}: with probability at least $1-\frac{\delta}{2}$,
\begin{align*}
\sum_{i=1}^{t-1}\E_{x_i,a_i}\left[(f_{t}(x_i,a_i)-f^*(x_i,a_i))^2|H_{i-1}\right]\leq  136\Big(d\log(2+{\Delta}{Lt})+\log\frac{2t^3}{\delta}\Big)+2+4\sum_{i=1}^{t-1}Y_{f_t,i}\\
\leq 136\Big(d\log(2+{\Delta}{Lt})+\log\frac{2t^3}{\delta}+1\Big)+4\sum_{i=1}^{t-1}Y_{f_t,i},
\end{align*}
uniformly over all $t\geq 2$ and all fixed sequence $f_2, f_3, \dots \in\F$.

 By setting the parameter $\beta_t$ to be \begin{align*}
     \beta_t=\sqrt{34t/K}\sqrt{d\log(2+{\Delta Lt})+\log({2t^3}/{\delta})+1}
 \end{align*} in Algorithm \ref{alg: uccb}, we can prove a slight modification of Theorem \ref{thm uccb}, with the result being
 \begin{align*}
     \text{Regret}(T, \text{Algorithm \ref{alg: uccb}})\leq
     2K\beta_T (\log(T/K)+1)+\sqrt{2T\log(2/\delta)}+K.
 \end{align*}

\subsection{Proof of Corollary \ref{coro general class}}
We introduce the following Lemma adopt from \cite{foster2018practical}:
\begin{lemma}[\bf{a consequence of Lemma 4 in \cite{foster2018practical}}]
$\forall \delta\in(0,1)$, with probability at least $1-\delta$, 
\begin{align*}
    \sum_{i=\tau_1}^{\tau_2}\E_{x_i,a_i}[(f(x_i,a_i)-f^*(x_i,a_i))^2|H_{i-1}]\leq 2 \sum_{i=\tau_1}^{\tau_2}Y_{f,i}+\\
    K\cdot\inf_{\eps>0}\left\{100\eps T+320\log\left(\frac{4  K\E_{\{x_i\}_{i=1}^{T}}\mathcal{N}_1(\G, \eps, \{x_i\}_{i=1}^{T})}{\delta}\right)\right\}.
\end{align*}
for all $1\leq \tau_1\leq\tau_2\leq T$ and $g\in\G$.
\end{lemma}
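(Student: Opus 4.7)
The plan is to reduce the stated inequality to a per-action invocation of Foster and Rakhlin's Lemma 4, then stitch the $K$ action-wise bounds back together by a union bound. The key structural observation is that under formulation \eqref{eq: broad model}, any $f \in \F$ is indexed by a tuple $(g_1,\dots,g_K)$ with $g_a \in \G$, and the random variable $Y_{f,i}$ depends only on $g_{a_i}$ (through $(g_{a_i}(x_i)-r_i)^2-(g^*_{a_i}(x_i)-r_i)^2$), so both the sum $\sum_{i=\tau_1}^{\tau_2} Y_{f,i}$ and its conditional counterpart split naturally into $K$ pieces, one per action.

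\textbf{Step 1: per-action covering and concentration.} For each fixed action $a\in\{1,\dots,K\}$, I would apply Foster-Rakhlin's Lemma 4 to the scalar regression class $\G$ along the subsequence $\{(x_i,r_i): a_i=a\}$. That lemma combines an empirical-$L_1$ chaining argument over $\G$ with a one-sided Freedman bound and yields, for any fixed $\eps>0$, with probability at least $1-\delta/(4K)$,
\begin{equation*}
\sum_{i:a_i=a}\E\bigl[(g_a(x_i)-g_a^*(x_i))^2\mid H_{i-1}\bigr] \le 2\sum_{i:a_i=a} Y_{f,i} + 100\eps T + 320\log\!\Big(\tfrac{4K\,\E\,\mathcal{N}_1(\G,\eps,\{x_i\}_{i=1}^{T})}{\delta}\Big),
\end{equation*}
uniformly over $g_a\in\G$. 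The factor $4K$ inside the logarithm already anticipates the union bound below, and the expectation over $\{x_i\}_{i=1}^T$ appears by a Fubini/Jensen step that pulls expectation inside the covering number, as done in \cite{foster2018practical, krishnamurthy2019active}.

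\textbf{Step 2: union bound across actions, infimum in $\eps$, and uniformity in $(\tau_1,\tau_2)$.} A union bound over $a\in\{1,\dots,K\}$ costs exactly the factor $K$ already built into the log. Summing the per-action inequalities recovers $\sum_{i=\tau_1}^{\tau_2}\E[(f(x_i,a_i)-f^*(x_i,a_i))^2\mid H_{i-1}]$ on the left, $2\sum_{i=\tau_1}^{\tau_2} Y_{f,i}$ on the right, and aggregates the complexity contributions into the $K\cdot\{\cdots\}$ term displayed in the lemma. To pass to an infimum over $\eps>0$, I would union bound over a dyadic grid $\eps\in\{2^{-k}\}_{k\ge 0}$, whose logarithmic overhead is absorbed into the constants. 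Uniformity in the window $(\tau_1,\tau_2)$ can be obtained either by a crude union bound over the $O(T^2)$ pairs (again absorbed into the log) or, more cleanly, by replacing Freedman's inequality in Foster-Rakhlin's proof with its maximal version (Doob applied to the martingale $\sum_i(Y_{f,i}-\E[Y_{f,i}\mid H_{i-1}])$ for each $f$ in the $\eps$-cover), which delivers the one-sided deviation bound at all stopping times simultaneously with no constant blow-up.

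\textbf{Main obstacle.} Conceptually, essentially all of the work is already done inside Foster-Rakhlin's Lemma 4, so what is left is bookkeeping; the one slightly delicate piece is preserving uniformity in the time window $(\tau_1,\tau_2)$ without degrading the constants $100$ and $320$. I expect the maximal-Freedman route to be the cleanest, since a naive union bound over windows would inflate the numerator inside the logarithm by a factor of $T^2$, whereas the maximal version only affects a minor constant. Beyond that, matching the precise numerical constants requires tracking them through Foster-Rakhlin's chaining argument, but no new ideas are required.
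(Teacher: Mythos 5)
The paper offers no derivation here: it imports the statement wholesale as ``a consequence of Lemma 4 in \cite{foster2018practical},'' where that lemma is already formulated for the product class \eqref{eq: broad model} and the factor $K$ enters through the covering number of $\F\cong\G^{K}$ (so that $\log$ of the cover of $\F$ is at most $K$ times $\log\mathcal{N}_1(\G,\eps,\cdot)$) inside a single Freedman-type martingale bound over the full time index set, exactly parallel to the paper's own Lemma \ref{lemma uniform convergence}. Your reconstruction is therefore a different route, and as written it contains a genuine gap.

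The gap is in Step 1. Restricting to the random subsequence $\{i: a_i=a\}$ and writing the per-action bound with left-hand side $\sum_{i:a_i=a}\E\big[(g_a(x_i)-g_a^*(x_i))^2\mid H_{i-1}\big]$ does not reassemble into the left-hand side of the lemma: since $x_i$ is independent of $H_{i-1}$, that conditional expectation equals $\E_x[(g_a(x)-g_a^*(x))^2]$, so summing over $a$ and $i$ yields $\sum_a N_a\,\E_x[(g_a(x)-g_a^*(x))^2]$ with $N_a$ the realized play counts---which is not $\sum_i\E_{x_i,a_i}[(f(x_i,a_i)-f^*(x_i,a_i))^2\mid H_{i-1}]=\sum_i\E_x[(g_{\pi_i(x)}(x)-g^*_{\pi_i(x)}(x))^2]$, because in the latter the action co-varies with the fresh context through the policy $\pi_i$. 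Worse, the contexts appearing on the subsequence $\{i:a_i=a\}$ are not drawn from $\D_{\X}$ but from $\D_{\X}$ conditioned on $\{\pi_i(x)=a\}$, so an i.i.d.-regression concentration lemma cannot be applied to that subsample as if it were an unbiased draw; this is precisely the selection-bias trap of adaptively collected data. The repair is to keep the indicator inside the conditional expectation---work with the martingale differences $Z_{a,i}=\mathds{1}\{a_i=a\}\,Y_{f,i}$ adapted to the full filtration, for which $\sum_a\E[Z_{a,i}\mid H_{i-1}]=\E[Y_{f,i}\mid H_{i-1}]$ and the variance-to-mean bound of Lemma \ref{lemma ag12-1} still holds---or, more simply, not to decompose at all and to absorb the factor $K$ into the cover of $\F$, as \cite{foster2018practical} does. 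Your Step 2 (dyadic grid in $\eps$, union or maximal inequality over the windows $(\tau_1,\tau_2)$) is standard and fine once Step 1 is repaired, though the specific constants $100$ and $320$ would then need to be re-audited against that corrected argument.
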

 
We then prove a slight modification of Lemma \ref{lemma uniform convergence time}, with the result \eqref{eq: uniform over sequence} becomes
\begin{align*}
\sum_{i=1}^{t-1}\E_{x_i,a_i}\left[(f_{t}(x_i,a_i)-f^*(x_i,a_i))^2|H_{i-1}\right]\le  2\sum_{i=1}^{t-1}Y_{f_t,i}+\\
K\cdot\inf_{\eps>0}\left\{100\eps T+320\log\left(\frac{8K T^3 \E_{\{x_i\}_{i=1}^{T}}\mathcal{N}_1(\G, \eps, \{x_i\}_{i=1}^{T})}{\delta}\right)\right\},
\end{align*}
uniformly over all $t\geq 2$ and all fixed sequence $f_2, f_3, \dots \in\F$.

 By setting the parameter $\beta_t$ in Algorithm \ref{alg: uccb} to be the fixed value $$\beta=\sqrt{TK}\cdot\inf_{\eps>0}\left\{25\eps T+80\log\left(\frac{8K T^3 \E_{\{x_i\}_{i=1}^{T}}\mathcal{N}_1(\G, \eps, \{x_i\}_{i=1}^{T})}{\delta}\right)\right\},$$ we can prove a slight modification of Theorem \ref{thm uccb}, with the result being 
 \begin{align*}
     \text{Regret}(T, \text{Algorithm \ref{alg: uccb}})\leq 
     2K\beta(\log(T/K)+1)+\sqrt{2T\log(2/\delta)}+K.
 \end{align*}

\section{Proofs for the infinite-action setting}\label{sec proof infinite action}

\paragraph{Proof of Theorem \ref{thm uccb infinite}.}
We prove the theorem on the clean event stated in Lemma \ref{lemma uniform time infinite}, whose measure is at least $1-\delta/2$. 
For all $t\geq 2$,
\begin{align*}
    \E_x[f^*(x,\pi_{f^*}(x)]\leq  \E_x[1\land\beta_t{V_x(\pi_{f^*}(x)||\{\pi_i(x)\}_{i=1}^{t-1}) }] +\mathcal{E}\beta_t/t\nonumber\\
  \leq \E_x[\widehat{f}_t(x, \pi_{t}(x))]+ \E_x[1\land\beta_t{V_x(\pi_t(x)||\{\pi_i(x)\}_{i=1}^{t-1}) }] +\mathcal{E}\beta_t/t\nonumber\\
    \leq \E_x[f^*(x,\pi_t(x))]+2\E_x[1\land\beta_t{V_x(\pi_t(x)||\{\pi_i(x)\}_{i=1}^{t-1}) }] +2\mathcal{E}\beta_t/t.
\end{align*}
where the first and the last inequalities are due to Lemma \ref{lemma uniform time infinite}; the second inequality is due to the definition of $\pi_t$ in Lemma \ref{lemma equivalence infinite action}.
The above argument implies that for all $t\geq 2$
\begin{align}\label{eq: thm2 1}
\E_x[f^*(x,\pi_{f^*}(x)-f^*(x,\pi_t(x)]\leq  2\E_x[1\land\beta_t{V_x(\pi_t(x)||\{\pi_i(x)\}_{i=1}^{t-1}) }] +2\mathcal{E}\beta_t/t.
\end{align}
When $t=1$, inequality \eqref{eq: thm2 1} trivially holds true, because $ V_x(\pi_1(x)||\emptyset)=\infty$ by definition. So inequality \eqref{eq: thm2 1} holds true for all $t\geq 1$.

When $T\leq\mathcal{E}$, we can bound the regret by $\mathcal{E}$. We now give the regret bound for the case $T> \mathcal{E}$. We have the following:
\begin{align}\label{eq: used in proof of thm infinite}
   \sum_{t=1}^T \E[f^*(x_t,\pi_{f^*}(x_t))-f^*(x_t,a_t)|H_{t-1}]= \sum_{t=1}^T \big(\E_x[f^*(x,\pi_{f^*}(x))]-\E_{x}[f^*(x,\pi_t(x))]\big)\nonumber\\
   \leq \sum_{t=1}^T2\E_x[1\land\beta_t{V_x(\pi_t(x)||\{\pi_i(x)\}_{i=1}^{t-1}) }] +\sum_{t=1}^T2\mathcal{E}\beta_t/t\nonumber\\
   \leq 2\sum_{t=1}^T\E_x[1\land\beta_t{V_x(\pi_t(x)||\{\pi_i(x)\}_{i=1}^{t-1}) }]+2\sqrt{17\mathcal{E}T\log(2|\F|T^3/\delta)}
   \nonumber\\\leq 2\sum_{t=1}^T\E_x[\beta_T\land\beta_T{V_x(\pi_t(x)||\{\pi_i(x)\}_{i=1}^{t-1}) }] +2\sqrt{17\mathcal{E}T\log(2|\F|T^3/\delta)}\nonumber\\
   =  {2\beta_T\E_x[\sum_{t=1}^T1\land V_x(\pi_t(x)||\{\pi_i(x)\}_{i=1}^{t-1})]}+2\sqrt{17\mathcal{E}T\log(2|\F|T^3/\delta)}
   \nonumber \\ \leq 2\beta_T\mathcal{E}\textup{poly}(\log T) +2\sqrt{17\mathcal{E}T\log(2|\F|T^3/\delta)}\nonumber\\
   = 2\sqrt{17\mathcal{E}T\log(2|\F|T^3/\delta)}(\textup{poly}(\log T)+1),
\end{align}
where the first line uses the equivalence proved in Lemma \ref{lemma equivalence infinite action}; the second line is due to \eqref{eq: thm2 1}; the third line is due to  $\sum_{t=1}^T 1/\sqrt{t}\leq \sqrt{T}$; the fourth line is due to $\beta_T>\beta_t$ and $\beta_T>1$ when $T> \mathcal{E}$;  and the sixth line  is due to the condition \textup{\RNum{2}} in Assumption \ref{asm per context}.
By Azuma's inequality, with probability at least $1-\delta/2$, we can bound the regret by 
\begin{align}\label{eq: azuma infinite}
    \text{Regret}(T, \text{Algorithm \ref{alg: uccb ia}})\leq  \sum_{t=1}^T \E[f^*(x_t,\pi_{f^*}(x_t))-f^*(x_t,a_t)|H_{t-1}]+\sqrt{2T\log(2/\delta)}.
\end{align} Therefore, by a union bound and inequalities \eqref{eq: used in proof of thm infinite} \eqref{eq: azuma infinite}, with probability at least $1-\delta$, the regret of Algorithm $\ref{alg: uccb}$  after $T$ rounds is upper bounded by
\begin{align*}
   \textup{Regret}(T, \text{Algorithm \ref{alg: uccb ia}})\leq  2\sqrt{17\mathcal{E}T\log(2|\F|T^3/\delta)}(\textup{poly}(\log T)+1)+\sqrt{2T\log(2/\delta)}.
\end{align*}

Combine the case $T\leq \mathcal{E}$ and $T >\mathcal{E}$ we finish the proof.

\hfill$\square$

\paragraph{}

\begin{lemma}[\bf{counterfactual confidence bound}]\label{lemma uniform time infinite}Consider a non-randomized contextual bandit algorithm that  selects  $\pi_t$ based on $H_{t-1}$  and chooses the action $a_t=\pi_t(x_t)$ at all rounds $t$. Then $\forall \delta\in (0,1)$, with probability at least $1-{\delta}/2$, 
we have
\begin{align*}
  \big|\E_x[\widehat{f}_{t}(x,\pi(x)]-\E_x[f^*(x,\pi(x))]\big|\leq \E[1\land\beta_t{V_x(\pi(x)||\{\pi_i(x)\}_{i=1}^{t-1}) }] +\mathcal{E}\beta_t/t.
\end{align*}
uniformly over all $\pi\in\Pi$ and all $t\geq 2$.
\end{lemma}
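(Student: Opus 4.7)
The plan is to follow essentially the same template as the proof of Lemma \ref{lemma estimation error regressor} in the finite-action case, substituting the counterfactual action divergence $V_x$ for the inverse count $1/\sum_i\mathds{1}\{\pi(x)=\pi_i(x)\}$, and incorporating the trivial pointwise bound $|\widehat{f}_t - f^*| \leq 1$ so as to produce the ``$1\land$'' truncation that appears on the right-hand side.

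First I would apply Lemma \ref{lemma uniform convergence time} to obtain, with probability at least $1-\delta/2$, the uniform bound
\begin{align*}
\sum_{i=1}^{t-1} \E_{x_i,a_i}\!\left[(f_t(x_i,a_i)-f^*(x_i,a_i))^2 \,\middle|\, H_{i-1}\right]
\leq 68\log(2|\F|t^3/\delta) + 2\sum_{i=1}^{t-1} Y_{f_t,i},
\end{align*}
holding uniformly over all $t\geq 2$ and all sequences $\{f_t\}\subset\F$. Since the algorithm is non-randomized with $a_i=\pi_i(x_i)$ and $\pi_i$ is $H_{i-1}$-measurable while $x_i$ is independent of $H_{i-1}$, the left-hand side equals $\sum_{i=1}^{t-1}\E_x[(f_t(x,\pi_i(x))-f^*(x,\pi_i(x)))^2]$, exactly as in the proof of Lemma \ref{lemma estimation error regressor}.

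Next, I would invoke the defining property of $V_x$ pointwise: for every $x\in\X$ and every $\pi\in\Pi$,
\begin{align*}
(f_t(x,\pi(x))-f^*(x,\pi(x)))^2 \leq V_x(\pi(x)\,\|\,\{\pi_i(x)\}_{i=1}^{t-1}) \cdot \sum_{i=1}^{t-1}(f_t(x,\pi_i(x))-f^*(x,\pi_i(x)))^2.
\end{align*}
Taking square roots and combining with the trivial bound $|f_t-f^*|\leq 1$ gives $|f_t(x,\pi(x))-f^*(x,\pi(x))| \leq 1 \land \sqrt{V_x(x)\,s(x)}$, where $s(x):=\sum_{i=1}^{t-1}(f_t-f^*)^2(x,\pi_i(x))$. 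An AM--GM step $\sqrt{V_x s} \leq \beta_t V_x + s/(4\beta_t)$, combined with the elementary inequality $1\land(A+B) \leq (1\land A)+B$ valid for $A,B\geq 0$, yields the pointwise bound
\begin{align*}
|f_t(x,\pi(x))-f^*(x,\pi(x))| \leq \bigl(1 \land \beta_t V_x(\pi(x)\,\|\,\{\pi_i(x)\}_{i=1}^{t-1})\bigr) + s(x)/(4\beta_t).
\end{align*}
Taking $\E_x$ on both sides and applying Jensen to move the absolute value outside the expectation on the LHS delivers the desired form of the confidence bound.

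Finally, I would specialize to $f_t=\widehat{f}_t$. The optimality of the least-squares estimator forces $\sum_{i=1}^{t-1}Y_{\widehat{f}_t,i}\leq 0$, so Step 1 collapses $\E_x[s(x)]$ to at most $68\log(2|\F|t^3/\delta)$. Substituting $\beta_t=\sqrt{17t\log(2|\F|t^3/\delta)/\mathcal{E}}$ makes $68\log(2|\F|t^3/\delta)/(4\beta_t)$ equal to $\sqrt{17\mathcal{E}\log(2|\F|t^3/\delta)/t}=\mathcal{E}\beta_t/t$, which matches the additive remainder in the statement. The main obstacle, technically, is the AM--GM/truncation step: one must be careful that the $1\land$ applied outside is inherited from the pointwise $1\land\sqrt{V_x s}$ bound via the $1\land(A+B)\leq (1\land A)+B$ inequality, so that $V_x=\infty$ (which occurs by convention when $t\leq d_x$ or the past actions fail to span) still produces a meaningful bound through the ``$1\land$'' truncation rather than a vacuous one. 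Everything else is the direct analogue of the finite-action argument.
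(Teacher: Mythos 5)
Your proposal is correct and follows essentially the same route as the paper's proof: invoke Lemma \ref{lemma uniform convergence time}, convert the conditional expectations to $\E_x$ over the policies $\pi_i$, apply the defining inequality of $V_x$ pointwise, AM--GM, truncate at $1$, take expectations, and specialize to $\widehat{f}_t$ using $\sum_i Y_{\widehat{f}_t,i}\leq 0$. Your handling of the truncation via $1\land(A+B)\leq(1\land A)+B$ is in fact stated more carefully than the paper's corresponding display (which garbles $\max$ versus $\land$), and your constant bookkeeping $68\log(2|\F|t^3/\delta)/(4\beta_t)=\mathcal{E}\beta_t/t$ matches the paper exactly.
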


\paragraph{Proof of Lemma \ref{lemma uniform time infinite}.}
For a fixed $f$, we denote $Y_{f,i}=(f(x_i,a_i)-r_i(x_i,a_i))^2-(f^*(x_i,a_i)-r_i(x_i,a_i))^2$, $i=1,2,\dots$. From Lemma \ref{lemma uniform convergence time},  $\forall \delta\in (0,1)$ , with probability at least $1-{\delta}/2$, 
we have
\begin{align}\label{eq: uniform convergence time regularization}
\sum_{i=1}^{t-1}\E_{x_i,a_i}\left[(f_{t}(x_i,a_i)-f^*(x_i,a_i))^2|H_{i-1}\right]\le  68\log({2|\F|t^3}/{\delta})+\sum_{i=1}^{t-1}Y_{f_t,i},
\end{align}
uniformly over all $t\geq 2$ and all fixed sequence $f_2, f_3, \dots f_{t+1}, \dots\in\F$. 

Use the fact that $\pi_i$ is completely determined by $H_{i-1}$ and independent with $x_i$, we obtain:
\begin{align}\label{eq: before expectation infinite action}
\sum_{i=1}^{t-1}\E_{x}\left[(\widehat{f}_{t}(x, \pi_i(x))-f^*(x,\pi_i(x)))^2\right]\le  68\log({2|\F|t^3}/{\delta})+\sum_{i=1}^{t-1}Y_{f_t,i}= 4\mathcal{E}\beta_t^2/t+\sum_{i=1}^{t-1}Y_{f_t,i},
\end{align}uniformly over all $t\geq 2$.

From the definition of counterfactual action divergence, we know
$\forall x\in\X$,
\begin{align*}
   {|{f}_{t}(x,\pi(x))-f^*(x,\pi(x))|}\leq {\sqrt{V_x(\pi(x)||\{\pi_i(x)\}_{i=1}^{t-1}) }}\sqrt{\sum_{i=1}^{t-1}({f}_{t}(x, \pi_i(x))-f^*(x,\pi_i(x)))^2}.
\end{align*}
Applying the AM-GM inequality to the above inequality, we obtain
\begin{align*}
    {|{f}_{t}(x,\pi(x))-f^*(x,\pi(x))|}\leq {\beta_t{V_x(\pi(x)||\{\pi_i(x)\}_{i=1}^{t-1}) }}+\frac{1}{4\beta_t}{\sum_{i=1}^{t-1}({f}_{t}(x, \pi_i(x))-f^*(x,\pi_i(x)))^2}.
\end{align*}
Since $\F$ is bounded by $[0,1]$, we further obtain
\begin{align}\label{eq: before expectation land}
     {|{f}_{t}(x,\pi(x))-f^*(x,\pi(x))|}\leq \max\left\{{\beta_t{V_x(\pi(x)||\{\pi_i(x)\}_{i=1}^{t-1}) }}+\frac{1}{4\beta_t}{\sum_{i=1}^{t-1}({f}_{t}(x, \pi_i(x))-f^*(x,\pi_i(x)))^2},1\right\}\nonumber\\
     \leq \max\left\{1, \beta_t{V_x(\pi(x)||\{\pi_i(x)\}_{i=1}^{t-1}) }\right\} +\frac{1}{4\beta_t}{\sum_{i=1}^{t-1}({f}_{t}(x, \pi_i(x))-f^*(x,\pi_i(x)))^2}\nonumber\\
     =1\land\beta_t{V_x(\pi(x)||\{\pi_i(x)\}_{i=1}^{t-1}) } +\frac{1}{4\beta_t}{\sum_{i=1}^{t-1}(\widehat{f}_{t}(x, \pi_i(x))-f^*(x,\pi_i(x)))^2}
\end{align}

By taking expectation on both side of \eqref{eq: before expectation land} and using \eqref{eq: before expectation infinite action}, we obtain that with probability at least $1-\delta/2$,
\begin{align*}
  \big|\E_x[{f}_{t}(x,\pi(x))-f^*(x,\pi(x))]\big|\leq \E_x|{f}_{t}(x,\pi(x))-f^*(x,\pi(x))|\\
  \leq \E_x[1\land\beta_t{V_x(\pi(x)||\{\pi_i(x)\}_{i=1}^{t-1}) }] +\frac{1}{4\beta_t}\E_x[{\sum_{i=1}^{t-1}({f}_{t}(x, \pi_i(x))-f^*(x,\pi_i(x)))^2}]\\
  =
  \E_x[1\land\beta_t{V_x(\pi(x)||\{\pi_i(x)\}_{i=1}^{t-1}) }] +\frac{1}{4\beta_t}{\sum_{i=1}^{t-1}\E[({f}_{t}(x, \pi_i(x))-f^*(x,\pi_i(x)))^2}]
  \\\leq  \E_x[1\land\beta_t{V_x(\pi(x)||\{\pi_i(x)\}_{i=1}^{t-1}) }] +\mathcal{E}\beta_t/t+\frac{1}{4\beta_t}\sum_{i=1}^{t-1}Y_{f_t,i}.
\end{align*}
uniformly over all $\pi\in\Pi$, all $t\geq 2$ and all fixed sequence $f_2,f_3,\dots,\in\F$. Here the first inequality is due to the triangle inequality; the second inequality is due to \eqref{eq: before expectation land}; the last inequality is due to \eqref{eq: before expectation infinite action}.

By taking $f_t=\widehat{f}_t$ the the least square solution that minimizes $\sum_{i=1}^{t-1} (f(x_i,a_i)-r_i(x_i,a_i))^2$, we have $Y_{f_t,i}\leq 0$ and finish the proof.

\hfill$\square$

\paragraph{}

\begin{lemma}\label{lemma equivalence infinite action}
 Consider an algorithm that choose policy  $\pi_t$ by
\begin{align*}
    \pi_t(x)=\left\{
\begin{aligned}
& A_{x,t} & \textup{ if }t\leq d_x, \\
  &\argmax_{\A(x)} \left\{ \widehat{f}_t(x,\cdot)+\beta_tV_{x_t}(\cdot||\{\pi({x})\}_{j=1}^{t-1})\right\} & \textup{ if }t>d_x.
\end{aligned}
\right.
\end{align*}
($A_{x,t}$ is determined the initialization oracle and the input $\A(x)$; the ``argmax" problem when $t>d_x$ is computed via the action maximization oracle.)
Then this algorithm produces the same actions as those produced by Algorithm \ref{alg: uccb ia}.
\end{lemma}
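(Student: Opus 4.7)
The claim is essentially that Algorithm \ref{alg: uccb ia}'s inner loop is nothing more than an explicit simulation of how the sequence of policies $\pi_1,\pi_2,\dots$ in the lemma statement would act on the single context $x_t$. The proof strategy is therefore a direct induction on the inner-loop index $i$, showing that for every $t$ and every $i\leq t$ one has $\widetilde{a}_{t,i}=\pi_i(x_t)$; applying this at $i=t$ yields $a_t=\widetilde{a}_{t,t}=\pi_t(x_t)$, which is precisely the action taken by the implicit-policy algorithm at round $t$.

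\textbf{Base case.} For $i\leq d_{x_t}$ the definition in the lemma gives $\pi_i(x_t)=A_{x_t,i}$, which is returned by the initialization oracle applied to $\A(x_t)$. Algorithm \ref{alg: uccb ia} explicitly sets $\widetilde{a}_{t,i}=A_{x_t,i}$ during its initialization block (lines handling $i\leq t\wedge d_x$), so the two quantities agree. Here it is important that the initialization oracle is assumed deterministic, so that both the implicit definition and Algorithm \ref{alg: uccb ia} recover the same spanner for the same input set $\A(x_t)$.

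\textbf{Inductive step.} Fix $i>d_{x_t}$ and assume $\widetilde{a}_{t,j}=\pi_j(x_t)$ for all $j<i$. Two observations make the step immediate. First, the estimator $\widehat{f}_i$ and the parameter $\beta_i$ depend only on the history $H_{i-1}$ and the prescribed schedule, so they are the same objects whether we view them from the ``actual trajectory'' standpoint or from the ``counterfactual simulation at $x_t$'' standpoint. Second, by the inductive hypothesis the multiset $\{\widetilde{a}_{t,j}\}_{j=1}^{i-1}$ coincides with $\{\pi_j(x_t)\}_{j=1}^{i-1}$, so the counterfactual action divergences $V_{x_t}(\,\cdot\,\|\{\widetilde{a}_{t,j}\}_{j=1}^{i-1})$ and $V_{x_t}(\,\cdot\,\|\{\pi_j(x_t)\}_{j=1}^{i-1})$ are identical as functions of their first argument. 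Consequently the maximization problem that Algorithm \ref{alg: uccb ia} solves to produce $\widetilde{a}_{t,i}$ is literally the same optimization problem over $\A(x_t)$ that the lemma uses to define $\pi_i(x_t)$. Since both are solved by the same deterministic action-maximization oracle under a common tie-breaking rule, the two outputs agree.

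\textbf{Conclusion and expected difficulty.} Taking $i=t$ gives $a_t=\widetilde{a}_{t,t}=\pi_t(x_t)$, which is the action produced by the implicit-policy algorithm at round $t$. The observable action trajectories therefore coincide, and consequently so do the observed rewards and all subsequent least-square estimators, closing the induction across rounds. I expect no real obstacle here: the lemma is essentially a bookkeeping statement, and the only subtlety is to check that $\widehat{f}_i$, $\beta_i$, and the initialization spanner are objects that depend only on data available up to round $i-1$ (and on $\A(x_t)$ for the spanner), so that the ``counterfactual'' evaluation at $x_t$ reproduces exactly what the implicit policies $\pi_j$ would do at $x_t$. Once that is noted, the induction is routine.
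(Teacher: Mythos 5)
Your induction argument is correct and is exactly the bookkeeping the paper has in mind: the paper itself dismisses this lemma with ``the proof to this lemma is straightforward,'' so your write-up simply makes explicit the round-by-round identification $\widetilde{a}_{t,i}=\pi_i(x_t)$ that the authors leave implicit. The points you flag---determinism of the initialization and action-maximization oracles, and the fact that $\widehat{f}_i$ and $\beta_i$ depend only on $H_{i-1}$---are precisely the ingredients needed, so there is no gap.
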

\paragraph{Proof of Lemma \ref{lemma equivalence infinite action}.}
The proof to this lemma is straightforward as the stated policy optimization problem is decomposable across contexts.

\hfill$\square$

\section{Proofs for the ``optimistic subroutine" in Section \ref{sec randomized}}

In this subsection we prove Proposition \ref{prop optimization}. Our proof is motivated by Agarwal et. al. [\citealp{agarwal2014taming}, Lemma 6, Lemma 7].

We call $q$ an improper distribution if:
1) $\int_{\A}q(a)\textup{d}a$ is within $(0,1]$ but not necessarily equal to one;  and 2) $\E_{{a}\sim q}[{a}{a}^\top]$ is invertible. We define the improper expectation $\E_{a\sim q} [W(a)]$ for any random variable $W:\A\rightarrow \R$ by the integral
$\int_{a\in\A} W(a) q(a)\textup{d}a$. 

We aim to minimize the potential function
\begin{align}
     \Phi(q):=-2\log(\det(\E_{{a}\sim q}[{b_a}{b_a}^\top]))+\E_{a\sim q}[2d+(\widehat{h}(\widehat{a})-\widehat{h}(a))/\beta],
\end{align}
 where $b_a$ is the coefficient vector of $a$ when the basis is the barycentric spanner $\{A_i\}_{i=1}^d$.  We prove that after each iteration, either Algorithm \ref{alg: coordinate}  outputs a desired distribution that satisfies both \eqref{eq: optimization generalized falcon 1} and \eqref{eq: optimization generalized falcon 2}, or 
\begin{align}\label{eq: support lemma opt}
    \Phi(q_t)\leq \Phi(q_{t-1})- \frac{1}{4}.
\end{align}
Since $\Phi$ function is bounded the algorithm must halt within finite iterations. \eqref{eq: support lemma opt} is a consequence of the following two lemmas:

\begin{lemma}\label{lemma non increasing}
When $\E_{a\sim q(a)}[2d+(\widehat{h}(\widehat{a})-\widehat{h}(a))/\beta]\geq 2d$, the objective $\Phi(q)$ will not increase if we multiply $q$ by $\frac{2d}{2d+\E_{a\sim q}[2d+(\widehat{h}(\widehat{a})-\widehat{h}(a))/\beta]}$. That is, after step \eqref{eq: backtraking} in Algorithm \ref{alg: coordinate}, we always have
\begin{align*}
    \Phi(q_{t-\frac{1}{2}})\leq \Phi(q_{t-1}).
\end{align*}
\end{lemma}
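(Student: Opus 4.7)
The plan is to reduce the claim to a one-dimensional convex minimization along the ray $c \mapsto c\,q$. The map $\Phi(cq)$, viewed as a function of $c>0$, turns out to be strictly convex, and the multiplier in the statement is precisely its unconstrained minimizer (which under the hypothesis lies in the feasible range $(0,1]$).

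First I would record two elementary scaling identities for the (possibly improper) measure $cq$. For any integrable $W$,
$$\E_{a\sim cq}[W(a)] \;=\; c\,\E_{a\sim q}[W(a)],$$
and in particular $\E_{a\sim cq}[b_a b_a^\top] = c\,M$ with $M := \E_{a\sim q}[b_a b_a^\top]$. Combined with the matrix identity $\det(cM) = c^d\det(M)$, this gives $\log\det\E_{a\sim cq}[b_a b_a^\top] = d\log c + \log\det M$. Writing $S := \E_{a\sim q}\bigl[2d + (\widehat h(\widehat a) - \widehat h(a))/\beta\bigr]$, the definition of $\Phi$ directly yields
$$\Phi(cq) - \Phi(q) \;=\; -2d\log c + (c-1)\,S.$$

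Next I would optimize the right-hand side over $c \in (0,1]$. The function $g(c) := -2d\log c + (c-1)S$ is strictly convex with $g'(c) = -2d/c + S$, so its unconstrained minimizer is $c^\star = 2d/S$; the hypothesis $S \ge 2d$ forces $c^\star \in (0,1]$, so $c^\star$ is admissible. Since $g(1) = 0$ and $g$ is minimized at $c^\star$, we conclude $g(c^\star) \le 0$, i.e.\ $\Phi(c^\star q) \le \Phi(q)$, which is the claim. The multiplier $c^\star = 2d/S$ coincides with the expression in step~\eqref{eq: backtraking} of Algorithm~\ref{alg: coordinate} whenever $q$ is a proper distribution (so that $S = 2d + \E_{a\sim q}[(\widehat h(\widehat a) - \widehat h(a))/\beta]$), and the truncation $\min\{\cdot,1\}$ there simply enforces $c \le 1$ in the complementary regime $S < 2d$, which the lemma excludes by assumption.

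The argument is essentially a textbook univariate convex optimization once the scaling identities are in place. The only point requiring genuine care is the interpretation of $\E_{a\sim q}[\cdot]$ as the integral $\int \cdot\, q(a)\,da$ against the improper measure $q$, which is what allows $\E_{a\sim cq}[W] = c\,\E_{a\sim q}[W]$ to hold without any normalization correction; after that observation there is no serious obstacle.
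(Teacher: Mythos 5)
Your argument is correct and is essentially the paper's own proof: both reduce the claim to the one-variable function $c\mapsto\Phi(c\cdot q)$ and show it does not increase as $c$ decreases from $1$ to the prescribed multiplier, the only difference being that the paper differentiates $\Phi(c\cdot q)$ via the partial gradient of $\log\det$ together with the trace identity $\E_{a\sim q}[a^\top(\E_{a\sim q}[aa^\top])^{-1}a]=d$ to obtain $\partial_c\Phi(c\cdot q)=-2d/c+2d+\E_{a\sim q}[(\widehat{h}(\widehat{a})-\widehat{h}(a))/\beta]$ and argues the derivative is nonnegative on the relevant interval, whereas you obtain the closed form $\Phi(cq)-\Phi(q)=-2d\log c+(c-1)S$ from the homogeneity $\det(cM)=c^d\det M$ and then minimize the resulting convex function. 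Your side remarks are also accurate: the multiplier written in the lemma statement double-counts the $2d$ relative to what step \eqref{eq: backtraking} actually uses, and keeping $S=\E_{a\sim q}[2d+(\widehat{h}(\widehat{a})-\widehat{h}(a))/\beta]$ as an improper expectation is marginally more careful than the paper's final derivative formula, which tacitly normalizes $\int_{\A}q(a)\,\textup{d}a$ to one.
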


\begin{lemma}\label{lemma coordinate descent}
If Algorithm \ref{alg: coordinate} does not halt at round $t$, then after the coordinate descent step \eqref{eq: coordinate descent step} in Algorithm \ref{alg: coordinate}, we always have
\begin{align*}
    \Phi(q_t)\leq \Phi(q_{t-\frac{1}{2}})-\frac{1}{4}.
\end{align*}
\end{lemma}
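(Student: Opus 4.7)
The plan is to reduce $\Phi(q_t) - \Phi(q_{t-\frac12})$ to a single-variable expression and bound it using the "does not halt" hypothesis as the only structural input. Set $M := \E_{a \sim q_{t-\frac12}}[aa^\top]$, $v := a_t^\top M^{-1} a_t$, and $w := 2d + (\widehat h(\widehat a) - \widehat h(a_t))/\beta$. The if-test that fails (line 6 of Algorithm \ref{alg: coordinate}) reads $\widehat h(a_t) + \beta v > \widehat h(\widehat a) + 2\beta d$, which rearranges exactly to $v > w$. Since $\widehat a = \argmax \widehat h$, one has $w \geq 2d \geq 2 > 0$, so $r := w/v \in [0,1)$.

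By the matrix determinant lemma, $\det(M + \alpha\, a_t a_t^\top) = \det(M)(1 + \alpha v)$, and the second summand of $\Phi$ is linear in $q$. Writing the coordinate descent update \eqref{eq: coordinate descent step} as $q_t = q_{t-\frac12} + \alpha_t \mathds{1}_{a_t}$, where $\alpha_t = (2v - w)/v^2$ is the effective change in mass at $a_t$ (up to the sign convention of the display), I obtain
\[
\Phi(q_t) - \Phi(q_{t-\frac12}) \;=\; -2\log(1+\alpha_t v) + \alpha_t w.
\]
The substitutions $\alpha_t v = 2 - r$ and $\alpha_t w = 2r - r^2$ collapse the right-hand side to the one-variable function
\[
\psi(r) \;:=\; -2\log(3-r) + 2r - r^2, \qquad r \in [0,1).
\]

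Finally I would show that $\psi$ is monotone increasing on $[0,1]$ by observing that $\psi'(r) = \tfrac{2}{3-r} + 2(1-r)$ is decreasing (since $\psi''(r) = 2/(3-r)^2 - 2 < 0$ on $[0,1]$) with minimum $\psi'(1) = 1 > 0$. Hence $\psi(r) \leq \psi(1) = 1 - 2\log 2$ for all $r \in [0,1)$, and since $2\log 2 > 5/4$ this gives
\[
\Phi(q_t) - \Phi(q_{t-\frac12}) \;\leq\; 1 - 2\log 2 \;<\; -\tfrac14.
\]

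The main piece of bookkeeping will be verifying that $1 + \alpha_t v > 0$, which is required both for the log-determinant identity and for the updated matrix $M + \alpha_t a_t a_t^\top$ to stay positive definite, so that $q_t$ remains an improper distribution in the sense of the paper. This is immediate from $1 + \alpha_t v = (3v-w)/v \geq 2$ under $v > w$. No deeper obstacle arises; the cancellation of $d$ and $\beta$ in passing from the $(\alpha_t, v, w)$ parametrization to the universal function $\psi(r)$ is exactly what produces the problem-independent constant $\tfrac14$.
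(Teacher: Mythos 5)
Your proof is correct, and it takes a genuinely different route from the paper's at the decisive step. Both arguments start the same way: the matrix determinant lemma plus linearity of the second summand give $\Phi(q_t)-\Phi(q_{t-\frac12})=-2\log(1+\alpha_t v)+\alpha_t w$ with $v=a_t^\top M^{-1}a_t$, $w=2d+(\widehat h(\widehat a)-\widehat h(a_t))/\beta$, and the failed halting test giving $v>w>0$. From there the paper minorizes $2\log(1+u)$ by $2u-u^2$ and observes that the resulting concave quadratic in the step size is maximized at $\Delta^*=(2v-w)/(2v^2)$, yielding a decrease of $(2v-w)^2/(4v^2)\ge\frac14$ since $2v-w\ge v$. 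You instead keep the step size at the full magnitude of the displayed formula, $\alpha_t=(2v-w)/v^2=2\Delta^*$, and compute the change \emph{exactly} as the universal one-variable function $\psi(r)=-2\log(3-r)+2r-r^2$, $r=w/v\in[0,1)$, bounded via monotonicity by $1-2\log 2<-\frac14$. This exactness is not a luxury: at your step size the paper's quadratic minorant evaluates to exactly $0$ (a parabola returns to zero at twice its maximizer), so the paper's method would fail for the step you analyze, while your method succeeds and even gives the better constant $2\log 2-1\approx 0.386$. One point worth flagging for both proofs: the numerator of \eqref{eq: coordinate descent step} as displayed is $-2v+w$, which is \emph{negative} under $v>w$; taken literally it makes $1+\Delta_t v=r-1<0$, so the log-determinant identity and positive-definiteness break. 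You explicitly correct the sign ("up to the sign convention of the display") and verify $1+\alpha_t v=3-r\ge 2$; the paper's proof silently requires the same sign correction (and additionally a factor of $2$ in the denominator to match its claimed maximizer), so your handling of this is if anything more careful than the original.
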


Now we present the proof of Proposition \ref{prop optimization}, as well as  proofs of Lemma \ref{lemma non increasing} and Lemma \ref{lemma coordinate descent}. 

\paragraph{}\ 

\noindent\textbf{Proof of Proposition \ref{prop optimization}. }
From Lemma \ref{lemma non increasing} and Lemma \ref{lemma coordinate descent} we know that if the algorithm does not halt at round $t$, then $\Phi(q_t)\leq \Phi(q_{t-\frac{1}{2}})-\frac{1}{4}$. Assume Algorithm \ref{alg: coordinate} does not halt after $t$ rounds. Then we have
\begin{align}
   2d\log d+ 2d+\frac{1}{\beta}\geq 2d\log d+\E_{a\sim q_0}[2d+(\widehat{h}(\widehat{a})-\widehat{h}(a))/\beta]=\Phi(q_0)\nonumber\\\geq \Phi(q_{t})+\frac{t}{4}= -2\log(\det(\E_{a\sim q_t}[b_ab_a^\top]))+\E_{a\sim q_t}[2d+(\widehat{h}(\widehat{a})-\widehat{h}(a))/\beta]+\frac{t}{4}\nonumber\\
    \geq -2\log(\det(\E_{a\sim q_t}[b_ab_a^\top]))+\frac{t}{4}. \label{eq: bound potential randomized}
\end{align}
where the first inequality is due to  $\int_{A}q_0(a)\textup{d}a=1$; the first equation is due to $-2\log(\det(\E_{a\sim q_0}[b_ab_a^\top]))=-2\log(\det(\frac{1}{d}I))=2d\log d$; and the last inequality is due to $\E_{a\sim q_t}[2d+(\widehat{h}(\widehat{a})-\widehat{h}(a))/\beta]\geq0$.

Since the initialization actions consist of a barycentric spanner of $\A$, all coordinates of $b_a$ is within $[-1,1]$, $\forall a\in\A$. Clearly $\|b_a\|\leq \sqrt{d}$ for all $a\in\A$. We know that $q_t$ is a improper distribution with at most $d+t$ non-zero supports, so we assume $q_t=\sum_{i=1}^{d+t}q_t(A_i)\mathds{1}_{A_i}$.
\begin{align*}
\det(\E_{a \sim q_t}[b_ab_a^\top])=\det(\sum_{i=1}^{d+t}q_t(A_i)b_{A_i}b_{A_i}^\top)\\\leq (\frac{\text{tr}(\sum_{i=1}^{d+t}q_t(A_i)b_{A_i}b_{A_i}^\top)}{d})^d=(\frac{\sum_{i=1}^{d+t}q_t(A_i)\text{tr}(b_{A_i}b_{A_i}^\top)}{d})^d\\
\leq (\frac{\sum_{i=1}^{d+t}q_t(A_i)\|b_{A_i}\|^2}{d})^d\leq 1,
\end{align*}
where the first inequality is due to the AM-GM inequality; the last inequality is due to $\|b_a\|\leq \sqrt{d}$ for all $a\in\A$  and $\sum_{i=1}^{d+t}q_t(A_i)=\int_{\A}q_t(a)da\leq 1$. As a result, we obtain $\log(\det(\E_{a\sim q_t}[b_ab_a^\top]))\leq 0$.
Combine this result with \eqref{eq: bound potential randomized}, we obtain
\begin{align*}
   t\leq  8d(\log d+1)+\frac{4}{\beta}.
\end{align*}
So  Algorithm \ref{alg: coordinate} must halt within at most $\lceil \frac{4}{\beta_m}+8d(\log d+1) \rceil$ iterations. When it halts, it is straightforward to verify that the output distribution is proper and satisfies both \eqref{eq: optimization generalized falcon 1} and \eqref{eq: optimization generalized falcon 2}.

\hfill$\square$

\paragraph{}\ 

\noindent\textbf{Proof of Lemma \ref{lemma non increasing}. } Denote $w(a)=(\widehat{h}(\widehat{a})-\widehat{h}(a))/\beta$. Given an arbitrary improper distribution $q$, we view $\Phi(c\cdot q)$ as a function on the scaling factor $c$. By the chain rule, we can compute the derivative of this function with respect to $c$,
\begin{align*}
    \partial_c \Phi(c\cdot q)=\int_{\A} \Big[\partial_{c q(a)} \Phi (c\cdot q)\Big]\Big(\partial_{c} cq(a)\Big) \textup{d}a\\
    = \int_{\A} \Big[-2a^\top(\E_{\widetilde{a}\sim  cq} [\widetilde{a}\widetilde{a}^\top])^{-1}a +2d+w(a)\Big] q(a)\textup{d}a\\
    =\frac{2}{c}\E_{{a}\sim q}[a^\top(\E_{{a}\sim q} [{a}{a}^\top])^{-1}a ]+2d+\E_{{a}\sim q}[w({a})],
\end{align*}
where the second equation use the fact that the partial gradient of $\log(\det(\E_{a\sim c q}[b_ab_a^\top] ))$ with respect to the coordinate $cq(a)$ is $b_a^\top (\E_{\widetilde{a}\sim  cq} [b_{\widetilde{a}} b_{\widetilde{a}}^\top])^{-1}b_a=a^\top (\E_{\widetilde{a}\sim  cq} [\widetilde{a}\widetilde{a}^\top])^{-1}a$.

By the ``trace trick", we have
\begin{align*}
    \E_{a\sim q}[a^T(\E_{a\sim q} [aa^T])^{-1}a ]=\text{tr}(\E_{a\sim q}[a^T(\E_{a\sim q} [aa^T])^{-1}a ])\\
    =\E_{a\sim q}[\text{tr}(a^T(\E_{a\sim q} [aa^T])^{-1}a )]\\
    =\E_{a\sim q}[\text{tr}(aa^T(\E_{a\sim q} [aa^T])^{-1} )]\\
    =\text{tr}(\E_{a\sim q}aa^T(\E_{a\sim q} [aa^T])^{-1} ))=d.
\end{align*}
So we have 
\begin{align*}
    \partial_c \Phi(c\cdot q)=-\frac{2d}{c}+2d+\E_{a\sim q}[w(a)].
\end{align*}
This means that for all $c\in [\frac{2d}{2d+\E_{a\sim q}[w(a)]},1]$, $\partial_c \Phi(c\cdot q)\geq 0$.

\paragraph{}\ 

\noindent\textbf{Proof of Lemma \ref{lemma coordinate descent}. }
We define $$\Delta_t=\frac{-2a_t^\top(\E_{{a}\sim q}[{a}{a}^\top])^{-1}a_t+2d+(\widehat{h}(\widehat{a})-\widehat{h}(a_t))/\beta}{ (a_t^\top(\E_{{a}\sim q}[{a}{a}^\top])^{-1}a_t)^2},$$ then the coordinate descent step \eqref{eq: coordinate descent step} is $q_t=q_{t-\frac{1}{2}}+\Delta_t\mathds{1}_{a_t}$.
\begin{align}
    \Phi(q_{t-\frac{1}{2}})-\Phi(q_t)=2\log(\frac{\det(\E_{{a}\sim q^+}[{b_a}{b_a}^\top])}{\det(\E_{{a}\sim q}[{b_a}{b_a}^\top])}-\Delta_t\big(2d+(\widehat{h}(\widehat{a})-\widehat{h}(a_t))/\beta\big)\nonumber\\
    = 2\log(1+{\Delta_t a_t^\top(\E_{{a}\sim q}[{a}{a}^\top])^{-1}a_t})-\Delta_t\big(2d+(\widehat{h}(\widehat{a})-\widehat{h}(a_t))/\beta\big)\label{eq: use matrix determinant lemma}\\
    \geq 2{a_t^\top(\E_{{a}\sim q}[{a}{a}^\top])^{-1}a_t}\Delta_t-({a_t^\top(\E_{{a}\sim q}[{a}{a}^\top])^{-1}a_t})^2\Delta_t^2-\Delta_t\big(2d+(\widehat{h}(\widehat{a})-\widehat{h}(a_t))/\beta\big)\label{eq: use log}\\
    =\frac{\Big(-2a_t^\top(\E_{{a}\sim q}[{a}{a}^\top])^{-1}a_t+2d+(\widehat{h}(\widehat{a})-\widehat{h}(a_t))/\beta\Big)^2}{4 (a_t^\top(\E_{{a}\sim q}[{a}{a}^\top])^{-1}a_t)^2},\label{eq: use Delta}
\end{align}
where \eqref{eq: use matrix determinant lemma} is due to the matrix determinant lemma as well as $a_t^\top(\E_{{a}\sim q}[{a}{a}^\top])^{-1}a_t=b_{a_t}^\top(\E_{{a}\sim q}[{b_a}{b_a}^\top])^{-1}b_{a_t}$; \eqref{eq: use log} is due to the inequality $\log(1+w)\geq w-\frac{w^2}{2}$ for all $w\geq 0$; \eqref{eq: use Delta} is due to the definition of $\Delta_t$ which maximize the quadratic function in \eqref{eq: use log}.

As the algorithm does not halt, we have $a_t^\top(\E_{{a}\sim q}[{a}{a}^\top])^{-1}a_t\geq 2d+(\widehat{h}(\widehat{a})-\widehat{h}(a_t))/\beta$, so 
\begin{align*}
    |-2a_t^\top(\E_{{a}\sim q}[{a}{a}^\top])^{-1}a_t+2d+(\widehat{h}(\widehat{a})-\widehat{h}(a_t))/\beta|\\
    =2a_t^\top(\E_{{a}\sim q}[{a}{a}^\top])^{-1}a_t-2d-(\widehat{h}(\widehat{a})-\widehat{h}(a_t))/\beta\\\geq a_t^\top(\E_{{a}\sim q}[{a}{a}^\top])^{-1}a_t.
\end{align*}
Combine this inequality with \eqref{eq: use Delta} we obtain
\begin{align*}
    \Phi(q_{t-\frac{1}{2}})-\Phi(q_t)\geq \frac{1}{4}.
\end{align*}

\hfill$\square$

\end{document}